\documentclass[11pt,letterpaper,onecolumn]{article}
\usepackage[margin=1in]{geometry}
\date{}
\usepackage[hyphens]{url}
\usepackage{graphicx}
\usepackage{natbib}
\usepackage{amsmath,amssymb,amsthm}
\usepackage{algorithm,algorithmic}
\usepackage{booktabs,array}
\usepackage{enumitem}
\usepackage{tikz}
\usetikzlibrary{arrows.meta,positioning}

\numberwithin{equation}{section}

\newtheorem{theorem}{Theorem}
\newtheorem{lemma}[theorem]{Lemma}
\newtheorem{corollary}[theorem]{Corollary}
\newtheorem{proposition}[theorem]{Proposition}
\theoremstyle{definition}
\newtheorem{definition}[theorem]{Definition}

\theoremstyle{remark}
\newtheorem{remark}[theorem]{Remark}
\numberwithin{theorem}{section}
\numberwithin{assumption}{section}

\newcommand{\E}{\mathbb{E}}
\newcommand{\R}{\mathbb{R}}
\newcommand{\X}{\mathcal{X}}
\newcommand{\norm}[1]{\left\|#1\right\|}
\newcommand{\inner}[1]{\left\langle #1 \right\rangle}
\newcommand{\DRegret}{\mathrm{D\text{-}Regret}}
\newcommand{\Lam}{\Lambda_T}
\newcommand{\Rate}{\mathcal{R}}
\DeclareMathOperator{\TV}{TV}
\DeclareMathOperator{\KL}{KL}
\DeclareMathOperator*{\argmin}{arg\,min}

\title{Parameter-Free Dynamic Regret under Heavy-Tailed Noise}

\author{Vaneet Aggarwal\\Purdue University}

\begin{document}
\maketitle

\begin{abstract}
We study online convex optimization with stochastic gradient noise whose conditional $p$-th central moment is bounded by $\sigma^p$, for an unknown $p\in(1,2]$. For losses with Lipschitz bound $G$ on a domain of diameter $D$, we obtain expected universal dynamic regret $\widetilde O(GD\sqrt{T\Lam}+\sigma DT^{1/p}\Lam^{(p-1)/p})$, where $\Lam=1+P_T/D$ and $P_T$ is the path length of a fixed comparator sequence. The algorithm combines restarted AdaGrad experts with an adaptive entropy-regularized master, uses one stochastic gradient per round, and requires no knowledge of $G,\sigma,p$, or $P_T$. Its iterates are invariant under positive rescaling of the gradients. The analysis controls comparator movement within restart blocks before taking expectations, yielding the noise path exponent $(p-1)/p$ rather than the exponent $1/2$ of a direct non-restarted extension. A matching stochastic first-order oracle lower bound, combined with the deterministic dynamic-regret lower bound, identifies the minimax rate up to logarithmic factors as $\min\{GD\sqrt{T\Lam}+\sigma DT^{1/p}\Lam^{(p-1)/p},GDT\}$. 
\end{abstract}

\section{Introduction}
\label{sec:intro}

Online Convex Optimization (OCO) is a foundational model of sequential decision-making under uncertainty \citep{zinkevich2003online,hazan2016introduction,shalev2012online,pedramfar2026generalized}. At each of $T$ rounds the learner picks $x_t$ from a convex set $\X\subseteq\R^d$ of diameter $D$, incurs a convex $G$-Lipschitz loss $\ell_t(x_t)$ and receives only a stochastic subgradient at $x_t$. The loss sequence is fixed in advance but is not revealed to the learner. In non-stationary environments the right performance measure is \emph{universal dynamic regret}
\begin{equation}
\label{eq:dregret-def}
\DRegret_T(u_1,\ldots,u_T)=\sum_{t=1}^T\ell_t(x_t)-\sum_{t=1}^T\ell_t(u_t),
\end{equation}
against an arbitrary comparator sequence whose difficulty is measured by the path length $P_T=\sum_{t=2}^T\norm{u_t-u_{t-1}}$. Throughout we write
\begin{equation}
\label{eq:Lam}
\Lam:=1+P_T/D\in[1,T].
\end{equation}
With deterministic gradients the minimax rate is $\Theta(GD\sqrt{T\Lam})$, achieved by Improved Ader \citep{zhang2018adaptive}, improving the $O(GD\sqrt T\,\Lam)$ of \citet{zinkevich2003online}.

Heavy-tailed models arise in studies of stochastic gradients \citep{simsekli2019tail,zhang2020adaptive} and financial data \citep{mandelbrot1997fractals}. A bounded $p$-th central moment, with $p\in(1,2]$, allows such noise without requiring finite variance. On a bounded domain, \citet{liu2025heavy} establishes the optimal expected \emph{static} regret $O(GD\sqrt T+\sigma DT^{1/p})$ for OGD, dual averaging, and AdaGrad; AdaGrad requires no knowledge of $G,\sigma$, or $p$. Its analysis starts from a pathwise bound proportional to $D\sqrt{\sum_t\norm{g_t}^2}$ and then uses $\norm{\cdot}_2\le\norm{\cdot}_p$ to take expectations. The question here is whether this adaptation to unknown noise can coexist with optimal adaptation to a moving comparator.

\paragraph{The obstruction.}
Several optimal dynamic-regret methods, including Improved Ader \citep{zhang2018adaptive} and Sword/Sword++ \citep{zhao2020dynamic,zhao2024adaptivity}, combine experts using an exponential-weights master. Hedge with a tuned rate $\varepsilon$ guarantees meta-regret $\ln M/\varepsilon+\varepsilon\sum_t\norm{m_t}_\infty^2$, and obtaining a finite expected bound directly from this expression requires
\begin{equation}
\label{eq:obstruction}
\E\Big[\textstyle\sum_t\norm{m_t}_\infty^2\Big]<\infty,
\qquad
\norm{m_t}_\infty\le D\norm{g_t},
\end{equation}
A finite second moment of $g_t$ is sufficient for this step, but is not guaranteed by a $p$-th moment assumption when $p<2$. The displayed upper bound can therefore have infinite expectation; this does not imply that the regret itself is infinite or that every alternative analysis must fail. A pathwise bound with the squared norms inside a square root avoids this particular obstruction. Separately, the original expert pools use problem-dependent tuning, which must be removed to meet Definition~\ref{def:pf}.

\paragraph{Proof approach.}
We combine scale-free expert and master guarantees before taking expectations. This yields a pathwise bound in the blockwise quantities $\sqrt{\sum_{t\in B}\norm{g_t}^2}$. The moment assumption then enters through
\begin{equation}
\label{eq:transport-ineq}
\E\Big[\sqrt{\textstyle\sum_{t\in B}\norm{g_t}^2}\Big]\le G\sqrt{|B|}+\sigma|B|^{1/p}.
\end{equation}
The moment inequality is established in the static analysis of \citet{liu2025heavy}; we do not claim it as new. What remains is to control a moving comparator without losing the optimal dependence on $P_T$. Section~\ref{sec:algorithm} gives the algorithm, and Section~\ref{sec:main} explains how block-local analysis yields that dependence. Figure~\ref{fig:arch} summarizes the construction.

\paragraph{Contributions and comparison.}
The contributions concern the joint dynamic-regret guarantee and its matching lower bound. The component methods, moment conversion, and elementary potential inequalities are existing tools.
\begin{enumerate}[label=(\arabic*),leftmargin=*,itemsep=2pt,topsep=2pt]
\item \textbf{An optimal polynomial dynamic rate without scale or path tuning.}
Algorithm~\ref{alg:ht-pader} attains
$\widetilde O(GD\sqrt{T\Lam}+\sigma DT^{1/p}\Lam^{(p-1)/p})$
without knowing $G,\sigma,p$, or $P_T$. This extends the bounded-domain heavy-tailed static guarantee to a moving comparator, while retaining the deterministic dynamic-regret dependence. The query complexity remains one stochastic gradient per round. Sections~\ref{sec:algorithm} and \ref{sec:main} state the construction and guarantee; Table~\ref{tab:comparison} places it alongside the closest baselines.
\item \textbf{A restart analysis that preserves the noise path exponent.}
A direct non-restarted AdaGrad analysis produces a noise contribution proportional to $\sigma DT^{1/p}\sqrt{\Lam}$. The blockwise argument instead gives the exponent $(p-1)/p$, a difference of $\Lam^{1/p-1/2}$ when $p<2$. Lemma~\ref{lem:adagrad-block} keeps the potential local while tracking comparator movement; Theorem~\ref{thm:main-pathwise} preserves these blockwise costs in the master/expert combination, and Corollary~\ref{cor:main} selects the restart length that yields the stated rate. The improvement is in this dependence, not in the underlying static inequality. Section~\ref{sec:main} gives the key combination and block-selection calculations; Appendices~\ref{apd:proof-adagrad-block}, \ref{apd:proof-main-pathwise}, and~\ref{apd:proof-main-rate} supply the complete telescope, pathwise reduction, and expected-rate proofs.
\item \textbf{A matching stochastic lower bound with the correct cap.}
Theorem~\ref{thm:lower} calibrates a rare-event oracle to the comparator path budget, proving the noise lower bound with the necessary ceiling $GDT$. Combining it with the deterministic lower bound of \citet{zhang2018adaptive} gives
\begin{equation}
\label{eq:minimax}
\Rate(G,\sigma,p,D,T,P_T)
=\widetilde\Theta\Big(\min\big\{GD\sqrt{T\Lam}+\sigma DT^{1/p}\Lam^{\frac{p-1}{p}},\;GDT\big\}\Big).
\end{equation}
Section~\ref{sec:lower} states the result and explains the oracle calibration; Appendix~\ref{apd:lower} proves it in full. The cap is necessary when the noise scale is large and prevents an invalid lower bound above the universal regret ceiling.
\end{enumerate}

\paragraph{Reading the rate.}
For fixed $G,\sigma,D$ and $p$, the noise contribution is $\sigma DT(\Lam/T)^{(p-1)/p}$. Compared with $p=2$, decreasing $p$ increases the horizon exponent and decreases the path exponent. Since $\Lam/T\le1$, this makes the noise contribution larger, not smaller, at fixed numerical parameter values. The moment order and its corresponding noise bound should nevertheless be compared jointly: the same distribution generally has different bounds $\sigma_p$ at different moment orders. Remark~\ref{rem:transition} states precisely when the minimax rate reaches the linear ceiling.

\section{Related Work}
\label{sec:related}

Dynamic regret bounds based on comparator path length originate with \citet{zinkevich2003online}. Ader achieves the optimal deterministic dependence on $T$ and $P_T$ without knowing $P_T$ \citep{zhang2018adaptive}; Sword and Sword++ exploit smoothness to obtain problem-dependent refinements \citep{zhao2020dynamic,zhao2024adaptivity}. Under heavy-tailed gradient noise, \citet{zhang2022parameter} study high-probability, comparator-adaptive regret, whereas \citet{liu2025heavy} establishes optimal expected static regret for classical methods, including AdaGrad without knowledge of $G,\sigma,p$. Our analysis combines the latter pathwise argument with restarted experts and an adaptive master to handle moving comparators. Table~\ref{tab:comparison} summarizes the closest guarantees; Appendix~\ref{apd:related} discusses their assumptions, connections, and limitations in detail.

\begin{table}[t]
\centering
\caption{Guarantees for OCO with stochastic gradients, $\Lam=1+P_T/D$. Feedback: $\mathrm E$ = exact gradients, $\mathrm H$ = heavy-tailed stochastic gradients. CF = clip-free. PF = parameter-free in the sense of Definition~\ref{def:pf}. Regret: $\mathrm S$ = static, $\mathrm D$ = universal dynamic. Rates are in expectation unless marked $\dagger$ (high probability); $\widetilde O$ hides logarithmic factors.}
\label{tab:comparison}
\vskip 0.10in
\small
\setlength{\tabcolsep}{3pt}
\begin{tabular}{@{}>{\raggedright\arraybackslash}p{0.25\linewidth}ccccl@{}}
\toprule
Reference & Feedback & CF & PF & Regret & Regret bound \\
\midrule
\citet{zhang2018adaptive} & $\mathrm E$ & \checkmark & $\times$ & $\mathrm D$ & $O(GD\sqrt{T\Lam})$ \\
\citet{zhao2024adaptivity}$^{\ddagger}$ & $\mathrm E$ & \checkmark & $\times$ & $\mathrm D$ & $O(\sqrt{(1+P_T+\mathcal{V}_T)(1+P_T)})$ \\
\citet{zhang2022parameter}$^{\dagger}$ & $\mathrm H$ & $\times$ & $\times$ & $\mathrm S$ & $\widetilde O((G+\sigma)DT^{1/p})$ \\
\citet{liu2025heavy}, AdaGrad & $\mathrm H$ & \checkmark & \checkmark & $\mathrm S$ & $O(GD\sqrt T+\sigma DT^{1/p})$ \\
\midrule
\textbf{This work, Algorithm~\ref{alg:ht-pader}} & $\mathbf H$ & \checkmark & \checkmark & $\mathbf D$ & $\widetilde O(GD\sqrt{T\Lam}+\sigma DT^{1/p}\Lam^{(p-1)/p})$ \\
\quad matching lower bound (Thm.~\ref{thm:lower}, Cor.~\ref{cor:minimax}) & $\mathrm H$ & N/A & N/A & $\mathrm D$ & $\Omega(\min\{GD\sqrt{T\Lam}+\sigma DT^{1/p}\Lam^{(p-1)/p},\,GDT\})$ \\
\bottomrule
\end{tabular}
\par\smallskip
\footnotesize\raggedright
$\dagger$ The displayed rate is the bounded-domain comparison of a comparator-adaptive high-probability result; logarithmic confidence factors are suppressed.
$\ddagger$ Assumes smooth losses and exact gradients; $G,D$ and the smoothness constant are treated as fixed in this normalized expression. Here $\mathcal{V}_T=\sum_{t=2}^T\sup_{x\in\X}\norm{\nabla\ell_t(x)-\nabla\ell_{t-1}(x)}^2$.
\end{table}

\section{Setup}
\label{sec:setup}

Let $\X\subset\R^d$ be nonempty, closed and convex, with diameter $0<D<\infty$, and let $T\ge2$ be an integer. We use the Euclidean norm $\norm{\cdot}$, write $[T]=\{1,\ldots,T\}$, and denote Euclidean projection onto $\X$ by $\Pi_\X$. The case $D=0$ has zero regret. Fix convex losses $\ell_1,\ldots,\ell_T$ in advance. For each $x\in\X$, assume a measurable selected subgradient, denoted $\nabla\ell_t(x)$, exists and satisfies $\norm{\nabla\ell_t(x)}\le G$; this implies that $\ell_t$ is $G$-Lipschitz on $\X$. For smooth losses the notation denotes the ordinary gradient. The bounded-subgradient assumption is stated explicitly because Lipschitz continuity on a constrained set does not bound every boundary subgradient.

At round $t$, the learner chooses $x_t$ using its past observations and incurs $\ell_t(x_t)$. Its only feedback is one stochastic subgradient $g_t$ at $x_t$; neither the loss function nor its exact value is revealed. Let $\mathcal F_0$ contain the learner's independent random seed. The oracle satisfies
\begin{equation}
\label{eq:unbiased}
\E[g_t\mid\mathcal F_{t-1}]=\nabla\ell_t(x_t),
\qquad
\mathcal F_t=\sigma(\mathcal F_0,g_1,\ldots,g_t),
\end{equation}
and, writing $\epsilon_t=g_t-\nabla\ell_t(x_t)$,
\begin{equation}
\label{eq:heavy-tail}
\E\big[\norm{\epsilon_t}^p\,\big|\,\mathcal F_{t-1}\big]\le\sigma^p
\end{equation}
for some $p\in(1,2]$ and $\sigma\ge0$. The case $p=2$ is the classical finite-variance model; for $p<2$ the variance may be infinite. Comparator sequences in expected-regret statements are fixed independently of oracle and learner randomness. In particular, these statements do not assert an expectation of a supremum over noise-dependent comparators.

We write $\Rate(G,\sigma,p,D,T,P_T)$ for the infimum over oracle-feedback algorithms of the worst-case expected regret over admissible fixed losses, oracles, and fixed comparator sequences of path length at most $P_T$, on $\X=[-D/2,D/2]$. The upper bounds hold for every bounded convex domain of diameter $D$; the interval suffices for the matching lower bound. We use $\lesssim$ for universal constants and $\lesssim_p$ when constants may depend on $p$. The notation $\widetilde O$ suppresses logarithmic factors, not powers of the displayed parameters.

\begin{definition}[Parameter-free]
\label{def:pf}
An algorithm is \emph{parameter-free} if its decisions depend only on the decision set $\X$, the horizon $T$, and the observed gradients, in particular not on $G$, $\sigma$, $p$, or $P_T$. The algorithm is assumed to have access to projections onto $\X$ and to its diameter; dependence on $T$ is removed in Proposition~\ref{prop:doubling} at the cost of universal constants only.
\end{definition}

One elementary fact will be used repeatedly and is the reason \eqref{eq:minimax} contains a minimum.

\begin{proposition}[Trivial ceiling]
\label{prop:trivial}
Any algorithm playing in $\X$ satisfies $\DRegret_T\le GDT$ surely, for every comparator sequence in $\X$.
\end{proposition}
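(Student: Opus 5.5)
The plan is to bound the regret round by round and then sum, using only the Lipschitz property and the diameter bound from Section~\ref{sec:setup}. Fix any realization of the learner's and oracle's randomness, and fix any comparator sequence $u_1,\ldots,u_T\in\X$. Because the bound is pathwise, it will hold surely, not merely in expectation. In particular, it places no constraint on how the comparator depends on the noise.

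\textbf{Step 1 (Lipschitz continuity from bounded subgradients).} For each $t$ and each $x,y\in\X$, convexity of $\ell_t$ and the selected subgradient $\nabla\ell_t(x)$ give $\ell_t(y)\ge\ell_t(x)+\inner{\nabla\ell_t(x),y-x}$. Applying Cauchy--Schwarz with $\norm{\nabla\ell_t(x)}\le G$ yields
\begin{equation*}
\ell_t(x)-\ell_t(y)\le\inner{\nabla\ell_t(x),x-y}\le G\norm{x-y}.
\end{equation*}
This step uses only the subgradient at the learner's point $x=x_t$. Subgradients at the comparator are never needed, so the boundary caveat mentioned in the setup causes no trouble.

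\textbf{Step 2 (diameter).} Since $x_t,u_t\in\X$ and $\X$ has diameter $D$, we have $\norm{x_t-u_t}\le D$. Hence $\ell_t(x_t)-\ell_t(u_t)\le GD$ for every $t$.

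\textbf{Step 3 (sum).} Summing over $t\in[T]$ in \eqref{eq:dregret-def} gives $\DRegret_T\le GDT$ surely.

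There is no real obstacle here. The only point needing care is to use the selected subgradient at $x_t$, where $\norm{\nabla\ell_t(x_t)}\le G$ holds by assumption, rather than a generic Lipschitz constant. This keeps the argument valid even when boundary subgradients are unbounded elsewhere.
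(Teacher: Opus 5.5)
Your proof is correct and is exactly the paper's argument. It chains the subgradient inequality at $x_t$, Cauchy--Schwarz with $\norm{\nabla\ell_t(x_t)}\le G$, and the diameter bound $\norm{x_t-u_t}\le D$, then sums over $t$; your observation that only the selected subgradient at the played point is needed is the right reason the setup states the subgradient bound explicitly.
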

\begin{proof}
$\ell_t(x_t)-\ell_t(u_t)\le\inner{\nabla\ell_t(x_t),x_t-u_t}\le G\norm{x_t-u_t}\le GD$.
\end{proof}

\section{Algorithm}
\label{sec:algorithm}

The design problem is to obtain the heavy-tailed static rate while adapting to an unknown moving comparator. AdaGrad already adapts to gradient scale; an expert pool must additionally adapt to how often this adaptation should restart. We separate these roles: each expert uses the same scale-free AdaGrad update, but different experts restart at different block lengths. An adaptive master then competes with every block length without knowing which one is appropriate for $P_T$.

\paragraph{What changes relative to existing algorithms.}
Improved Ader combines experts calibrated to different comparator path lengths through their stepsizes, using one shared surrogate gradient \citep{zhang2018adaptive}. We retain shared feedback and $O(\log T)$ experts, but replace the problem-dependent stepsize grid with a grid of restart lengths. The selection of $L\asymp T/(1+P_T/D)$ occurs only in the proof, not in the algorithm. The contribution is the joint $(G,\sigma,p,P_T)$ guarantee for this combination, proved in Section~\ref{sec:main}.

For each block length in the geometric grid
\begin{equation}
\label{eq:pool-L}
L_i=2^i,
\quad
i=0,1,\ldots,N,
\quad
N=\lceil\log_2T\rceil,
\end{equation}
we maintain one expert, for a total of $M=N+1=O(\log T)$. Expert $E_i$ partitions $[T]$ into consecutive blocks $\mathcal B(L_i)$ of length $L_i$ (the last possibly shorter); at each block start it resets its iterate to a fixed $x_0\in\X$ and its potential to zero, and inside a block beginning at round $s$ it uses
\begin{equation}
\label{eq:expert-eta}
\begin{gathered}
V_t^{(i)}=\sum_{r=s}^t\norm{g_r}^2,
\qquad
\eta_t^{(i)}=\frac{D}{\sqrt{2V_t^{(i)}}},
\\
x_{t+1}^{(i)}=\Pi_\X\big(x_t^{(i)}-\eta_t^{(i)}g_t\big),
\end{gathered}
\end{equation}
skipping the update when $V_t^{(i)}=0$. All experts receive the same oracle sample at the played mixture; they do not request gradients at their own points. Indexing by block length rather than by number of blocks makes each expert well defined without reference to $T$, which the anytime variant exploits.

The played point and meta-losses are
\begin{equation}
\label{eq:master}
x_t=\sum_{i=0}^Nw_{t,i}x_t^{(i)},
\qquad
m_{t,i}=\inner{g_t,x_t^{(i)}-x_t},
\end{equation}
for $i=0,\ldots,N$. The weights come from the standard AdaHedge/AdaFTRL update with a uniform prior and $\alpha^2=\ln M$ \citep{derooij2014follow,orabona2018scale,orabona2019modern}. Its complete update and guarantee are reproduced in Appendix~\ref{subsec:adahedge}, Algorithm~\ref{alg:adahedge}; neither depends on a loss-magnitude bound. The restart-length construction uses this established master to adapt to the unknown path budget.

\begin{algorithm}[t]
\caption{Restarted AdaGrad experts combined using AdaHedge}
\label{alg:ht-pader}
\begin{algorithmic}[1]
\REQUIRE horizon $T\ge2$, decision set $\X$ with diameter $D$, fixed initial point $x_0\in\X$
\STATE $N=\lceil\log_2T\rceil$, \; $M=N+1$
\STATE activate $E_0,\ldots,E_N$: restarted AdaGrad \eqref{eq:expert-eta} with block lengths $L_i=2^i$
\STATE initialize Algorithm~\ref{alg:adahedge} with $M$ experts, uniform prior, $\alpha^2=\ln M$
\FOR{$t=1,\ldots,T$}
\STATE reset $x_t^{(i)}=x_0$ and the potential of $E_i$ to zero whenever $t-1$ is divisible by $L_i$
\STATE obtain weights $w_t$ from Algorithm~\ref{alg:adahedge} using past meta-losses
\STATE play $x_t=\sum_{i=0}^Nw_{t,i}x_t^{(i)}$
\STATE observe the stochastic gradient $g_t$ at $x_t$
\STATE feed $m_{t,i}=\inner{g_t,x_t^{(i)}-x_t}$ to Algorithm~\ref{alg:adahedge}
\STATE update every expert using $g_t$ and \eqref{eq:expert-eta}, skipping zero-potential updates
\ENDFOR
\end{algorithmic}
\end{algorithm}

For $B\in\mathcal B(L)$ let $V_B=\sum_{t\in B}\norm{g_t}^2$ and let $P_B=\sum_{t:\,t-1,t\in B}\norm{u_t-u_{t-1}}$ be the comparator movement \emph{internal} to $B$; movement across block boundaries is absorbed by the restart cost $D\sqrt{V_B}$.

\paragraph{Information and computation.}
Algorithm~\ref{alg:ht-pader} uses one stochastic-gradient query, $O(\log T)$ projections, $O(d\log T)$ additional arithmetic, and $O(d\log T)$ storage per round. The arithmetic cost excludes the domain-dependent cost of computing the projections. It knows $\X,D,T$, but not $G,\sigma,p$, or $P_T$. This matches the gradient-query complexity of Improved Ader while replacing its scale-dependent tuning. Appendix~\ref{apd:doubling} removes the known horizon by doubling; the main results below concern the stated known-horizon algorithm.

\begin{figure}[t]
\centering
\begin{tikzpicture}[
  font=\small,
  box/.style={draw,rounded corners=2pt,align=center,inner sep=3pt,minimum height=7.5mm},
  note/.style={align=center,font=\footnotesize,text width=3.2cm,minimum height=11mm},
  ar/.style={-{Latex[length=1.9mm]},thick}
]
\node[box,text width=3.2cm] (E) {restarted AdaGrad\\ experts $E_0,\dots,E_N$};
\node[box,text width=3.2cm,right=6mm of E] (H) {AdaHedge\\ master};
\node[box,text width=3.2cm,right=6mm of H] (X) {play\\ $x_t=\sum_iw_{t,i}x_t^{(i)}$};
\draw[ar] (E) -- (H);
\draw[ar] (H) -- (X);
\node[note,below=1.4mm of E] (nE) {pathwise in $\sqrt{V_B}$; restarts localize it};
\node[note,below=1.4mm of H] (nH) {pathwise \emph{and} scale-free};
\node[note,below=1.4mm of X] (nX) {shared gradient feedback;\\ one query per round};
\node[box,fill=black!6,below=6mm of nH,text width=11.2cm] (EE)
  {Take expectations using $\norm{\cdot}_2\le\norm{\cdot}_p$:\\[1pt]
   $\E\big[\sqrt{\sum_{t\in B}\norm{g_t}^2}\big]\le G\sqrt{|B|}+\sigma|B|^{1/p}$};
\draw[ar] (nH) -- (EE);
\end{tikzpicture}
\caption{Construction of Algorithm~\ref{alg:ht-pader}. The component algorithms and moment inequality are established tools. The analysis in Section~\ref{sec:main} controls comparator movement within each restart block and balances the block length to obtain the optimal noise path exponent.}
\label{fig:arch}
\end{figure}

\section{Main Dynamic-Regret Guarantee}
\label{sec:main}
\label{sec:tools}

The result of this section is Corollary~\ref{cor:main}: adaptation to unknown $G,\sigma,p$, and $P_T$ with the optimal noise path exponent $(p-1)/p$. Theorem~\ref{thm:main-pathwise} is the pathwise intermediate guarantee that makes this rate possible. It combines established expert and master inequalities; it is not a new master regret inequality.

\paragraph{Where the contribution enters.}
The step yielding the new dynamic rate is the restart-length accounting in the proof of Corollary~\ref{cor:main}. Each expert pays a block-local restart cost $D\sqrt{V_B}$ and movement cost $P_B\sqrt{V_B}$, whereas the master pays a single full-horizon cost. Keeping these costs separate until the moment conversion gives a noise term proportional to
$\sigma L^{1/p}(DT/L+P_T+D)$ for a candidate block length $L$. Selecting $L\asymp T/(1+P_T/D)$ \emph{in the analysis} yields $\Lam^{(p-1)/p}$, rather than the $\sqrt{\Lam}$ of the non-restarted stepsize-pool bound. The algorithm competes over all these lengths and does not make this parameter-dependent selection itself. Thus the contribution is the joint guarantee obtained by this block-local analysis and adaptive selection, not the individual AdaGrad, AdaHedge, or moment inequalities.

\paragraph{Proof organization.}
Lemma~\ref{lem:adagrad-block} supplies the moving-comparator block bound, with its full telescope in Appendix~\ref{apd:proof-adagrad-block}. Theorem~\ref{thm:main-pathwise} combines the block and master bounds, and Corollary~\ref{cor:main} converts the result into the expected rate. Dedicated, step-by-step proofs of the latter two statements are in Appendices~\ref{apd:proof-main-pathwise} and~\ref{apd:proof-main-rate}, respectively; the main text presents their essential calculations.

\subsection{Pathwise reduction using established ingredients}
\label{subsec:adagrad-block}

The extension from a fixed comparator to a moving one must preserve a block-local square-root potential. The following lemma uses the classical AdaGrad telescope \citep{duchi2011adaptive}; for $P_B=0$ it reduces to the static pathwise bound used by \citet{liu2025heavy}. The additional term tracks comparator movement within the block. The moving-comparator telescope itself is standard in dynamic-regret analysis \citep{zinkevich2003online,zhang2018adaptive}; its role here is to retain $\sqrt{V_B}$ rather than averaging $V_B$.

\begin{lemma}[Pathwise block dynamic regret]
\label{lem:adagrad-block}
Fix a block of length $m$ and run AdaGrad-Norm inside it with $\eta=D/\sqrt2$, restarting at the block start: with $V_t=\sum_{s=1}^t\norm{g_s}^2$ and $\eta_t=\eta/\sqrt{V_t}$,
\begin{equation}
\label{eq:adagrad-update}
x_{t+1}=\Pi_\X(x_t-\eta_tg_t),
\end{equation}
skipping the update when $V_t=0$. Let $V_B=V_m$ and $P_B=\sum_{t=2}^m\norm{u_t-u_{t-1}}$. Then for any $u_1,\ldots,u_m\in\X$ and any $1\le n\le m$,
\begin{equation}
\label{eq:block-bound}
\sum_{t=1}^n\inner{g_t,x_t-u_t}\le\sqrt2\,\sqrt{V_B}\,(D+P_B).
\end{equation}
\end{lemma}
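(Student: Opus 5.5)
The plan is the classical AdaGrad telescope with a moving comparator. Start from the projection inequality: since $u_t\in\X$ and $\Pi_\X$ is nonexpansive, whenever $V_t>0$,
\begin{equation*}
\norm{x_{t+1}-u_t}^2\le\norm{x_t-u_t}^2-2\eta_t\inner{g_t,x_t-u_t}+\eta_t^2\norm{g_t}^2 .
\end{equation*}
Rearranging gives
\begin{equation*}
\inner{g_t,x_t-u_t}\le\frac{\norm{x_t-u_t}^2-\norm{x_{t+1}-u_t}^2}{2\eta_t}+\frac{\eta_t}{2}\norm{g_t}^2 .
\end{equation*}
Rounds with $V_t=0$ have $g_t=0$ and contribute nothing, so they can be dropped; equivalently, set $1/\eta_t=0$ there.

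Sum over $t=1,\ldots,n$. For the gradient term, the standard inequality $\sum_t\norm{g_t}^2/\sqrt{V_t}\le2\sqrt{V_n}$ gives
\begin{equation*}
\sum_t\frac{\eta_t}{2}\norm{g_t}^2\le\eta\sqrt{V_n}\le\frac{D}{\sqrt2}\sqrt{V_B}.
\end{equation*}
For the distance terms, use $\norm{x_{t+1}-u_t}^2$ and compare to $\norm{x_{t+1}-u_{t+1}}^2$. Since
\begin{equation*}
\norm{x_{t+1}-u_t}^2\ge\norm{x_{t+1}-u_{t+1}}^2-2D\norm{u_{t+1}-u_t},
\end{equation*}
which follows from $a^2-b^2=(a-b)(a+b)$, $|a-b|\le\norm{u_{t+1}-u_t}$ and $a+b\le2D$, the sum is at most
\begin{equation*}
\sum_{t=1}^n\frac{\norm{x_t-u_t}^2-\norm{x_{t+1}-u_{t+1}}^2}{2\eta_t}+\sum_{t=1}^{n-1}\frac{D\norm{u_{t+1}-u_t}}{\eta_t}.
\end{equation*}
The final round contributes only the nonnegative drop $-\norm{x_{n+1}-u_n}^2$, which can be discarded.

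The first sum is handled by Abel summation with $1/\eta_t$ nondecreasing. It is bounded by
\begin{equation*}
\frac{\norm{x_1-u_1}^2}{2\eta_1}+\sum_t\Big(\frac1{2\eta_t}-\frac1{2\eta_{t-1}}\Big)D^2\le\frac{D^2}{2\eta_n}=\frac{D\sqrt{V_n}}{\sqrt2}.
\end{equation*}
The movement sum is bounded using $1/\eta_t\le\sqrt{V_B}/\eta$:
\begin{equation*}
\frac{D\sqrt{V_B}}{\eta}P_B=\sqrt2\,\sqrt{V_B}\,P_B .
\end{equation*}
Adding everything gives
\begin{equation*}
\frac{D}{\sqrt2}\sqrt{V_B}+\frac{D}{\sqrt2}\sqrt{V_B}+\sqrt2\sqrt{V_B}P_B=\sqrt2\sqrt{V_B}(D+P_B),
\end{equation*}
using $V_n\le V_B$. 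This is exactly \eqref{eq:block-bound}, and it holds for every prefix $n\le m$.

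The main obstacle is bookkeeping at the start of the block and at zero-gradient rounds. The Abel summation must start from the first round with $V_t>0$, and a skipped update must leave $x_{t+1}=x_t$. When comparing with $u_{t+1}$, the movement across skipped rounds must still be charged. Because $1/\eta_t\le\sqrt{V_B}/\eta$ holds uniformly, with $1/\eta_t:=0$ when $V_t=0$, I would handle these rounds by merging consecutive skipped rounds into the next active one. The comparator drift over such a stretch is bounded by the triangle inequality and the corresponding path length. This keeps the constant at $\sqrt2$ without any loss.
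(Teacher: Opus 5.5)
Your proposal is correct and follows essentially the paper's argument: the same one-step projection inequality, the same $2D\norm{u_{t+1}-u_t}$ comparator-swap cost (you obtain it from a difference of squares where the paper uses a midpoint, and you swap before the Abel summation rather than after regrouping, which only reorders the same telescope), Abel summation to $D^2/(2\eta_n)$, and $\eta\sqrt{V_n}$ for the gradient term. Your closing concern about skipped rounds is unnecessary: because $V_t$ is cumulative, $V_t=0$ only on an initial stretch of the block, and the convention $1/\eta_t=0$ there already makes those rounds contribute nothing to any term, so no merging and no charging of movement over that stretch is needed.
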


The full proof is in Appendix~\ref{apd:proof-adagrad-block}. Its two ingredients are a midpoint argument giving $\norm{x-u_t}^2-\norm{x-u_{t-1}}^2\le2D\norm{u_t-u_{t-1}}$ for $x,u_t,u_{t-1}\in\X$, and monotonicity of $\sqrt{V_t}$, which makes the moving-comparator telescope collapse to $\sqrt{V_B}$.

The important consequence is the separation $D\sqrt{V_B}+P_B\sqrt{V_B}$. The first term is paid at each restart, while the second charges movement only within the block. Without restarts, the same reasoning uses the full-horizon potential, and a stepsize optimization gives a noise factor $\sqrt{\Lam}$ rather than $\Lam^{(p-1)/p}$; Appendix~\ref{apd:secondorder} makes that comparison explicit.

Write $V_B=\sum_{t\in B}\norm{g_t}^2$. The following theorem applies the standard master/expert regret decomposition to the block-local bound. Its useful feature is that the minimum over restart lengths remains inside a realized bound, before any noise parameter is introduced.

\begin{theorem}[Data-dependent guarantee]
\label{thm:main-pathwise}
For any comparator sequence $u_1,\ldots,u_T\in\X$, Algorithm~\ref{alg:ht-pader} satisfies pathwise
\begin{equation}
\label{eq:main-pathwise}
\begin{split}
\sum_{t=1}^T\inner{g_t,x_t-u_t}
\le\;&
\sqrt2\min_{0\le i\le N}\sum_{B\in\mathcal B(L_i)}\sqrt{V_B}\,(D+P_B)
\\
&+2D\sqrt{(4+\ln M)\textstyle\sum_{t=1}^T\norm{g_t}^2},
\end{split}
\end{equation}
Under the assumptions of Section~\ref{sec:setup}, for every fixed comparator sequence, $\E[\DRegret_T]$ is bounded by the expectation of the right-hand side.
\end{theorem}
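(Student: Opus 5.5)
The plan is the standard master/expert decomposition, done pathwise. For any fixed index $i$,
\[
\sum_{t=1}^T\inner{g_t,x_t-u_t}=\sum_{t=1}^T\inner{g_t,x_t-x_t^{(i)}}+\sum_{t=1}^T\inner{g_t,x_t^{(i)}-u_t}.
\]
Since $m_{t,i}=\inner{g_t,x_t^{(i)}-x_t}$, the first sum equals $-\sum_t m_{t,i}$. The weights satisfy $\sum_j w_{t,j}m_{t,j}=\inner{g_t,\sum_jw_{t,j}x_t^{(j)}-x_t}=0$, so this is exactly the master's regret against expert $i$:
\[
\sum_t\inner{w_t,m_t}-\sum_t m_{t,i}.
\]
I bound the two sums separately for every $i$. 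Because the left-hand side does not depend on $i$, I then take the minimum over $i$ of the expert term while keeping the master term, which is uniform in $i$.

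\textbf{Expert term.} Expert $E_i$ restarts at the start of each block $B\in\mathcal B(L_i)$. Inside $B$ it runs exactly the AdaGrad-Norm update of Lemma~\ref{lem:adagrad-block} with $\eta=D/\sqrt2$, on the shared feedback $g_t$. Applying Lemma~\ref{lem:adagrad-block} with $n=m=|B|$ and the comparator restricted to $B$ gives
\[
\sum_{t\in B}\inner{g_t,x_t^{(i)}-u_t}\le\sqrt2\sqrt{V_B}\,(D+P_B),
\]
where $P_B$ is the internal movement. The cross-boundary movement never appears, because the lemma allows an arbitrary comparator at the first round of each block. Summing over blocks gives the first term of \eqref{eq:main-pathwise}. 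One subtlety is that $g_t$ is evaluated at $x_t$, not at $x_t^{(i)}$. The lemma is purely algebraic in the vectors $g_t$, so this is harmless.

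\textbf{Master term.} I will invoke the AdaHedge/AdaFTRL guarantee of Appendix~\ref{subsec:adahedge}, taken as the established scale-free bound. With uniform prior and $\alpha^2=\ln M$, its regret against any expert is at most a multiple of
\[
\sqrt{(\ln M+c)\textstyle\sum_t(\max_j m_{t,j}-\min_j m_{t,j})^2}.
\]
Since all $x_t^{(j)}$ and $x_t$ lie in $\X$, each $|m_{t,j}|\le\norm{g_t}D$, so the range is at most $2D\norm{g_t}$. Plugging this in should give $2D\sqrt{(4+\ln M)\sum_t\norm{g_t}^2}$. The main obstacle is matching the constant $4+\ln M$ exactly to the appendix's form of the bound, which may be stated as $2\sqrt{\cdot}$ plus an additive range term. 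If it has an additive $\max_t$ range term, I will absorb that term using $\max_t\norm{g_t}\le\sqrt{\sum_t\norm{g_t}^2}$, which is likely how the "$4+$" arises. I will also check that the bound needs no a priori magnitude bound on the losses, so that it holds pathwise for heavy-tailed $g_t$.

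\textbf{Expectation.} Convexity gives
\[
\ell_t(x_t)-\ell_t(u_t)\le\inner{\nabla\ell_t(x_t),x_t-u_t}.
\]
Now $x_t$ is $\mathcal F_{t-1}$-measurable, the comparator is fixed, and $\E[g_t\mid\mathcal F_{t-1}]=\nabla\ell_t(x_t)$ by \eqref{eq:unbiased}. Hence
\[
\E\inner{\nabla\ell_t(x_t),x_t-u_t}=\E\inner{g_t,x_t-u_t}.
\]
Integrability holds because $|\inner{g_t,x_t-u_t}|\le D\norm{g_t}$, and $\E\norm{g_t}\le G+\sigma$ follows from \eqref{eq:heavy-tail}. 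Summing over $t$ and taking expectations of the pathwise bound gives the final claim. If the right-hand side has infinite expectation, the claim holds trivially.
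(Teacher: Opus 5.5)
Your proposal is correct and follows the paper's proof essentially step for step: the centered meta-loss identity, Lemma~\ref{lem:adagrad-block} on each actual restart block with $n=m$, the AdaHedge bound of Lemma~\ref{lem:adahedge}, a pathwise minimum over $i$, and convexity plus conditional unbiasedness (with your integrability check) for the expectation. The master constant you flagged as the main obstacle is immediate: Lemma~\ref{lem:adahedge} is stated in terms of $\norm{m_t}_\infty$ rather than the range, so your bound $|m_{t,j}|\le D\norm{g_t}$ gives $2D\sqrt{(4+\ln M)\sum_t\norm{g_t}^2}$ directly, with no additive term to absorb (routing through the range $2D\norm{g_t}$ would instead cost a factor $2$).
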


\begin{proof}[Proof overview; full proof in Appendix~\ref{apd:proof-main-pathwise}]
The centered meta-losses satisfy $\inner{w_t,m_t}=0$. For every expert $i$, the exact decomposition is
\[
\sum_t\inner{g_t,x_t-u_t}
=\sum_t\big(\inner{w_t,m_t}-m_{t,i}\big)
+\sum_{B\in\mathcal B(L_i)}\sum_{t\in B}\inner{g_t,x_t^{(i)}-u_t}.
\]
Since $\norm{m_t}_\infty\le D\norm{g_t}$, Lemma~\ref{lem:adahedge} bounds the first term by $2D\sqrt{(4+\ln M)V_{[T]}}$. Lemma~\ref{lem:adagrad-block} bounds each block in the second term by $\sqrt2(D+P_B)\sqrt{V_B}$. These inequalities hold simultaneously for all experts on the same realized gradient sequence, so minimizing over $i$ proves \eqref{eq:main-pathwise}. For a fixed comparator, conditional unbiasedness and convexity give the expected-regret statement. The full proof verifies predictability, integrability, the boundary blocks, and the order of minimization and expectation.
\end{proof}

\subsection{Restart-length selection and the noise path exponent}

The next calculation is where the block-local bound produces the stated dependence on $P_T$. The moment inequality alone would give only a static result; it must be applied to each deterministic restart block while retaining its movement coefficient.

\begin{corollary}[Parameter-free dynamic regret under heavy tails]
\label{cor:main}
Algorithm~\ref{alg:ht-pader} guarantees
\begin{equation}
\label{eq:main-regret}
\begin{split}
\E[\DRegret_T]
\le\;&
4\sqrt2\Big(GD\sqrt{T\Lam}+\sigma DT^{1/p}\Lam^{\frac{p-1}{p}}\Big)
\\
&+2D\sqrt{4+\ln M}\,\big(G\sqrt T+\sigma T^{1/p}\big),
\end{split}
\end{equation}
with $M=\lceil\log_2T\rceil+1$; that is,
$\E[\DRegret_T]=\widetilde O(GD\sqrt{T\Lam}+\sigma DT^{1/p}\Lam^{(p-1)/p})$.
Together with Proposition~\ref{prop:trivial},
\begin{equation}
\label{eq:main-min}
\E[\DRegret_T]
=
\widetilde O\Big(\min\big\{GD\sqrt{T\Lam}+\sigma DT^{1/p}\Lam^{\frac{p-1}{p}},\,GDT\big\}\Big).
\end{equation}
All constants in \eqref{eq:main-regret} are universal and remain bounded as $p\downarrow1$. The algorithm also avoids the known-$G$ tuning used in Appendix~\ref{apd:knownG}.
\end{corollary}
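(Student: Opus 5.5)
Starting from Theorem~\ref{thm:main-pathwise}, I take expectations of the right-hand side of \eqref{eq:main-pathwise}. The minimum over $i$ is bounded above by the term for any single index chosen \emph{deterministically} from $T$ and $P_T$. Since the comparator sequence is fixed, so is $P_T$, and such a choice is legitimate inside the expectation: $\E[\min_i X_i]\le \E[X_{i^\star}]$. The master term is handled first. By concavity of the square root and the moment inequality \eqref{eq:transport-ineq} applied to $B=[T]$,
\[
2D\sqrt{4+\ln M}\,\E\Big[\sqrt{\textstyle\sum_t\norm{g_t}^2}\Big]\le 2D\sqrt{4+\ln M}\,(G\sqrt T+\sigma T^{1/p}).
\]
This is exactly the second line of \eqref{eq:main-regret}.

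For the expert term, fix $L=L_{i^\star}$ and note that the blocks $\mathcal B(L)$ are deterministic. The factors $D+P_B$ are deterministic too, so applying \eqref{eq:transport-ineq} blockwise gives
\[
\E\sum_{B}\sqrt{V_B}(D+P_B)\le \sum_B (D+P_B)\big(G\sqrt{|B|}+\sigma|B|^{1/p}\big)\le (G\sqrt L+\sigma L^{1/p})\Big(D\lceil T/L\rceil+P_T\Big),
\]
using $|B|\le L$ and $\sum_B P_B\le P_T$. Since $\lceil T/L\rceil\le T/L+1$, the last factor is at most $D(T/L+\Lam)$. I would choose $i^\star$ so that $L_{i^\star}\in(T/(2\Lam),T/\Lam]$. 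Such a power of two exists in the grid because $T/\Lam\in[1,T]$ and $N=\lceil\log_2T\rceil$. With this choice, $T/L<2\Lam$ and $T/L\ge\Lam$, so $D(T/L+\Lam)\le 3D\Lam\le 3DT/L$. The $G$ part is then at most $G\sqrt{T/\Lam}\cdot 3D\Lam=3GD\sqrt{T\Lam}$. The $\sigma$ part is at most $\sigma (T/\Lam)^{1/p}\cdot 3D\Lam=3\sigma DT^{1/p}\Lam^{(p-1)/p}$.

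Multiplying by $\sqrt2$ yields constants $3\sqrt2\le4\sqrt2$, which gives the first line of \eqref{eq:main-regret}. The bound \eqref{eq:main-min} then follows by taking the minimum with Proposition~\ref{prop:trivial}, which holds surely and hence in expectation. The constants do not involve $p$, since \eqref{eq:transport-ineq} has constant one and $(T/\Lam)^{1/p}\le T/\Lam$ is never needed. The only constant-level step needing care is the existence of the grid element and the rounding. I expect the main obstacle to be justifying \eqref{eq:transport-ineq} blockwise under the martingale (not i.i.d.) noise with $x_t$ adaptive. The plan is to invoke it as the established result of \citet{liu2025heavy}. Its ingredients are $\sqrt{\sum\norm{g_t}^2}\le\sqrt{\sum\norm{\nabla\ell_t(x_t)}^2}+\sqrt{\sum\norm{\epsilon_t}^2}$, then $\norm{\cdot}_2\le\norm{\cdot}_p$ on the vector $(\norm{\epsilon_t})_{t\in B}$, then Jensen with the tower property, giving $\E(\sum_{t\in B}\norm{\epsilon_t}^p)^{1/p}\le\sigma|B|^{1/p}$. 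This argument applies on any deterministic index set, and block boundaries are deterministic here.
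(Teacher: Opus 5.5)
Your proposal is correct and follows essentially the same route as the paper: fix the deterministic index $i^\star=\lfloor\log_2(T/\Lam)\rfloor$ before taking expectations, and apply Lemma~\ref{lem:sqrt-bound} on each deterministic block and on $[T]$. Then balance using $L_{i^\star}\in(T/(2\Lam),T/\Lam]$. Your accounting $D\lceil T/L\rceil+P_T\le DT/L+D\Lam<3D\Lam$ is slightly tighter than the paper's $DK_{i^\star}+P_T\le4D\Lam$, and gives the constant $3\sqrt2$ in place of $4\sqrt2$, which still implies \eqref{eq:main-regret}.
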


\begin{proof}[Key calculation; full proof in Appendix~\ref{apd:proof-main-rate}]
Fix an expert $i$ before taking expectations. Its $K_i=\lceil T/L_i\rceil$ blocks satisfy $|B|\le L_i$ and $\sum_BP_B\le P_T$. Hence Lemma~\ref{lem:sqrt-bound} bounds the expected expert contribution by
\[
\sqrt2\,(G\sqrt{L_i}+\sigma L_i^{1/p})(DK_i+P_T).
\]
Choose $i^\star=\lfloor\log_2(T/\Lam)\rfloor$. Since $1\le\Lam\le T$, this expert belongs to the pool and satisfies $T/(2\Lam)<L_{i^\star}\le T/\Lam$. Consequently $K_{i^\star}<2\Lam+1\le3\Lam$ and $DK_{i^\star}+P_T\le4D\Lam$. Its expected contribution is at most
\begin{equation}
\label{eq:expert-regret}
\begin{split}
&4\sqrt2\,D\Lam\Big(G\sqrt{\tfrac T\Lam}+\sigma\big(\tfrac T\Lam\big)^{1/p}\Big)
\\
&\qquad=
4\sqrt2\Big(GD\sqrt{T\Lam}+\sigma DT^{1/p}\Lam^{\frac{p-1}{p}}\Big).
\end{split}
\end{equation}
Separately, Lemma~\ref{lem:sqrt-bound} on $[T]$ bounds the expected master contribution by $2D\sqrt{4+\ln M}(G\sqrt T+\sigma T^{1/p})$. Adding it proves \eqref{eq:main-regret}; Proposition~\ref{prop:trivial} gives \eqref{eq:main-min}. The comparator-dependent choice $i^\star$ is used only for this upper bound, never by the algorithm.
\end{proof}

\paragraph{Why the noise exponent is different.}
At a candidate length $L$, the expected expert contribution is bounded, up to a universal constant, by
\[
(G\sqrt L+\sigma L^{1/p})(DT/L+P_T+D).
\]
Choosing $L\asymp T/\Lam$ gives $GD\sqrt{T\Lam}$ for the deterministic part and $\sigma DT^{1/p}\Lam^{(p-1)/p}$ for the noise part. The power $(p-1)/p$ therefore comes from balancing block length and comparator movement, not merely from substituting a heavy-tailed static bound into a full-horizon argument. A non-restarted stepsize pool instead gives noise proportional to $\sigma DT^{1/p}\sqrt{\Lam}$ (Theorem~\ref{thm:secondorder}), larger by $\Lam^{1/p-1/2}$ when $p<2$. The master adds only the term explicitly shown in \eqref{eq:main-regret}.

\paragraph{Comparison and scope.}
For $P_T=0$, the polynomial dependence reduces to the known static rate; the present upper bound retains the logarithmic overhead of competing over restart lengths. At $\sigma=0$, it recovers the deterministic minimax dynamic rate of \citet{zhang2018adaptive} up to that overhead. At $p=2$, it gives $\widetilde O((G+\sigma)D\sqrt{T\Lam})$ without knowing $G,\sigma,P_T$; this is a specialization, not a separate result claimed to settle a new problem. Unlike the smooth, exact-gradient refinements of Sword/Sword++ \citep{zhao2020dynamic,zhao2024adaptivity}, the main guarantee assumes no smoothness and separates the noise contribution. It does not subsume their sharper variation-dependent bounds on easy instances.

\paragraph{Scale adaptation.}
Positive rescaling of all gradients leaves the decisions unchanged (Proposition~\ref{prop:scalefree}, proved in Appendix~\ref{apd:additional-consequences}). This property follows from the component updates; the new dynamic rate still requires the restart-length argument above.

Supplementary variants are collected in Appendix~\ref{sec:extensions}; none is needed for the main minimax guarantee.

\section{Matching Lower Bound and Minimax Rate}
\label{sec:lower}

An upper bound alone does not determine whether the noise path exponent is necessary. Theorem~\ref{thm:lower} supplies the complementary stochastic lower bound, while the deterministic term comes from the existing construction of \citet{zhang2018adaptive}. Together they identify the minimax rate on the interval $\X=[-D/2,D/2]$ for the oracle-feedback model of Section~\ref{sec:setup}.

\paragraph{What the stochastic construction adds.}
The testing and blocking ideas are standard. The part specific to the dynamic heavy-tailed statement is to choose the number of independent hidden-sign blocks from the path budget and calibrate a rare-event oracle whose amplitude satisfies both the central-moment bound and the Lipschitz constraint. The resulting lower bound is uniform over feasible path budgets and includes the cap $GDT$. Without this cap, a claimed noise lower bound could exceed the regret of every feasible decision when $\sigma/G$ is large. Appendix~\ref{apd:lower} gives the complete proof and separates the standard testing step from this calibration.

\begin{theorem}[Lower bound, uniform in $P_T$ and in the parameters]
\label{thm:lower}
Let $\X=[-D/2,D/2]$ with $D>0$, let $G,\sigma>0$, $p\in(1,2]$, $T\ge2$, and let $P_T\ge0$ be any path-length budget; put $\bar P_T=\min\{P_T,D(T-1)\}$ and $\bar\Lambda_T=1+\bar P_T/D$. For every possibly randomized online algorithm there exist convex losses $\ell_t(x)=\mu_tx$ that are $G$-Lipschitz on $\X$, a stochastic first-order oracle satisfying \eqref{eq:unbiased} and \eqref{eq:heavy-tail}, and a comparator sequence $u_1,\ldots,u_T\in\X$ of path length at most $P_T$, such that
\begin{equation}
\label{eq:lower-bound-main}
\E[\DRegret_T]
\ge
\frac{1}{256}\,
\min\Big\{\sigma D\,T^{1/p}\bar\Lambda_T^{\frac{p-1}{p}},\;GDT\Big\}.
\end{equation}
\end{theorem}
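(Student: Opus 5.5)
The plan is a Yao-type argument. I would draw hidden signs uniformly at random over a family of linear instances, lower bound the Bayesian expected regret, and then conclude that some fixed sign pattern attains at least that expectation against the given (possibly randomized) algorithm.

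\emph{Construction.} Set $K=\min\{T,\,1+\lfloor\bar P_T/D\rfloor\}$, so that $K\ge\bar\Lambda_T/2$ and $K\le T$. Partition $[T]$ into $K$ consecutive blocks $B_1,\ldots,B_K$, each of length $m_k\in\{\lfloor T/K\rfloor,\lceil T/K\rceil\}$. Draw independent uniform signs $s_k\in\{\pm1\}$. On block $B_k$ use the loss $\ell_t(x)=\mu s_kx$ and the comparator $u_t=-s_kD/2$. The comparator is constant inside each block and jumps by at most $D$ between consecutive blocks. Its path length is therefore at most $D(K-1)\le\bar P_T\le P_T$.

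\emph{Oracle.} The oracle is a rare-event oracle with rate $q=1/m$, where $m=\lceil T/K\rceil$. With probability $q$ it returns $g_t=s_ka$, and otherwise it returns $g_t=0$. The randomness is fresh and independent of the past, and $a=\mu/q$ so that $\E[g_t\mid\mathcal F_{t-1}]=\mu s_k$. The central moment satisfies
\[
\E|g_t-\mu s_k|^p\le q(a-\mu)^p+\mu^p\le q a^p+\mu^p\le2qa^p .
\]
This is at most $\sigma^p$ once $a\le\sigma(2q)^{-1/p}/1$, i.e.\ $a=2^{-1/p}\sigma m^{1/p}$. I would then set
\[
\mu=\min\{G,\ 2^{-1/p}\sigma m^{1/p-1}\},
\]
which gives $G$-Lipschitz losses and keeps $a=\mu m\le2^{-1/p}\sigma m^{1/p}$, so \eqref{eq:heavy-tail} holds in both regimes.

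\emph{Indistinguishability step.} Let $A_{k,t}$ be the event that no nonzero sample has been observed in $B_k$ before round $t$. Samples from earlier blocks depend only on $s_1,\ldots,s_{k-1}$ and independent coins, and the learner's seed is independent of everything. Hence on $A_{k,t}$ the conditional law of $x_t$ given $(s_1,\ldots,s_{k-1})$ does not depend on $s_k$. Therefore
\[
\E\big[\mu s_k(x_t-u_t)\mathbf 1_{A_{k,t}}\big]=\tfrac{\mu D}{2}\Pr(A_{k,t}),
\]
because $\E[s_kx_t\mathbf 1_{A_{k,t}}]=0$ by symmetry. Off that event, each round's regret is still nonnegative, since $s_k(x_t+s_kD/2)\ge0$ on $\X$. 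Inside a block of length at most $m$, $\Pr(A_{k,t})\ge(1-1/m)^{m}\ge1/4$ for $m\ge2$. When $m=1$ we have $q=1$, and $A_{k,t}$ holds trivially at the block's only round.

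Summing over blocks gives
\[
\E[\DRegret_T]\ge\tfrac18\mu DT .
\]
In the uncapped case this equals $2^{-1/p}\sigma DT m^{1/p-1}/8$. Using $m\le 2T/K$ and $K\ge\bar\Lambda_T/2$, this is at least an absolute constant times $\sigma DT^{1/p}\bar\Lambda_T^{(p-1)/p}$. In the capped case it is at least $GDT/8$. Since the expectation over random signs is bounded below, some fixed $(s_k)$ works, and that instance's loss sequence and comparator are fixed in advance as the setup requires. Tracking the factors $2^{-1/p}\ge1/2$, the rounding losses $2^{-(p-1)/p}$ (twice), and $1/8$ should fit within $1/256$.

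\emph{Main obstacle.} The delicate point is making the symmetry argument rigorous for a randomized, adaptive learner across blocks. The key facts are that $x_t$ on $A_{k,t}$ is a measurable function of the seed, the earlier-block samples, and within-block zeros. None of these carries information about $s_k$, and the event $A_{k,t}$ itself has probability independent of $s_k$. A second, lesser obstacle is the integer rounding in $K$ and $m$ when $T/K$ is small, where the bound must also remain uniform in $p\downarrow1$.
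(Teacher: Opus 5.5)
Your proof is correct, and it takes a genuinely different route from the paper's in the key indistinguishability step. The following ingredients are the same in both:
\begin{itemize}
\item the blocking into $K=\lfloor\bar P_T/D\rfloor+1$ blocks;
\item the amplitude calibration, an atom of size of order $\sigma m^{1/p}$ with probability of order $1/m$ and the mean capped at $G$;
\item the final derandomization over the signs.
\end{itemize}

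The paper uses a symmetric three-atom oracle: $\pm Z$ with probabilities $q/2\pm\delta$, and $0$ otherwise. Each sample therefore carries only a little information about the sign. The paper bounds that information by a per-round KL divergence that sums to a constant per block. It then applies Pinsker and data processing against the pre-block history to get $\mathrm{TV}<1/2$. A layer-cake lemma converts this into regret $DZ/64$ per block, which yields the constant $1/256$.

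Your one-sided oracle makes information all-or-nothing: zeros say nothing about $s_k$, and the first nonzero sample reveals it completely. You therefore need no divergence computation at all. You condition on the no-reveal event $A_{k,t}$. This event is independent of $s_k$ and has probability at least $1/4$ throughout the block because $q=1/m$. On $A_{k,t}$, $x_t$ is a measurable function of the seed, earlier-block data, and within-block zeros, so $\mathbb{E}[s_kx_t\mathbf 1_{A_{k,t}}]=0$ exactly. Off the event, the instantaneous regret $\mu(D/2+s_kx_t)$ is nonnegative.

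Your remaining steps also hold:
\begin{itemize}
\item The moment check $q(a-\mu)^p+(1-q)\mu^p\le 2qa^p$ is valid, using $\mu^p=q^pa^p\le qa^p$.
\item The rounding estimates $m\le 2T/K$ and $K\ge\bar\Lambda_T/2$ are valid.
\item Unequal block lengths remove the need for padding.
\item The overall constant comes out to about $2^{-9/2}$, well above $1/256$.
\end{itemize}

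What the paper's route buys is robustness. Its guarantee rests only on a bounded per-sample divergence, so it would survive modifications in which no single observation is perfectly revealing. Your argument instead relies specifically on the learner seeing literally uninformative observations until one fully revealing sample arrives. In exchange, your argument is more elementary, gives an exact per-round computation, and yields a better constant.
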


\emph{Proof sketch} (full proof in Appendix~\ref{apd:lower}). Use $K=\lfloor\bar P_T/D\rfloor+1$ blocks of length $L=\lfloor T/K\rfloor$ on the first $KL$ rounds and pad the remaining rounds with zero losses. Give block $k$ an independent hidden sign $\nu_k$, with comparator $u_t=-\tfrac D2\nu_{b(t)}$ and mean gradient $\mu_t=2Z\delta\nu_{b(t)}$. Conditional on the block sign $\nu$, the oracle returns $\nu Z$ with probability $\tfrac q2+\delta$, $-\nu Z$ with probability $\tfrac q2-\delta$, and $0$ otherwise, with
$q=\tfrac1{4L}$, $\delta=\tfrac1{32L}$ and the amplitude
\begin{equation}
\label{eq:Zchoice}
Z=\min\big\{\tfrac\sigma2(4L)^{1/p},\;16GL\big\}.
\end{equation}
The first branch is the largest amplitude compatible with $\E|g|^p\le(\sigma/2)^p$ (an atom of size $L^{1/p}$ occurring with probability $1/L$ is exactly the freedom heavy tails grant the adversary), and the second is the largest compatible with $|\mu_t|=Z/(16L)\le G$. Because $q$ and $\delta$ do not involve $Z$, the per-round divergence is $\KL(P_+\|P_-)=\tfrac1{16L}\log\tfrac53$ regardless, so the block divergence is the constant $\tfrac1{16}\log\tfrac53$ and Pinsker gives $\TV<\tfrac12$; the data-processing inequality transfers this to block transcripts. Each block then contributes at least $DZ\delta L/2=DZ/64$, and summing over $K$ blocks turns the first branch of \eqref{eq:Zchoice} into $\sigma DT^{1/p}K^{(p-1)/p}$ and the second into $GDT$. \hfill$\square$

\begin{corollary}[The minimax rate]
\label{cor:minimax}
For all $G>0$, $\sigma\ge0$, $p\in(1,2]$, $D>0$, $T\ge2$ and $0\le P_T\le D(T-1)$,
\begin{equation}
\label{eq:minimax-final}
\begin{split}
&\Rate(G,\sigma,p,D,T,P_T)
\\
&\quad=
\widetilde\Theta\Big(\min\big\{GD\sqrt{T\Lam}+\sigma DT^{1/p}\Lam^{\frac{p-1}{p}},\;GDT\big\}\Big),
\end{split}
\end{equation}
and the upper bound is attained by the parameter-free Algorithm~\ref{alg:ht-pader}.
\end{corollary}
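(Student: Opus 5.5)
The upper half of \eqref{eq:minimax-final} comes from Corollary~\ref{cor:main}. On $\X=[-D/2,D/2]$ it gives
\[
\E[\DRegret_T]\le C\big(1+\sqrt{\ln M}\big)\big(GD\sqrt{T\Lam}+\sigma DT^{1/p}\Lam^{(p-1)/p}\big),
\]
because the master term $2D\sqrt{4+\ln M}(G\sqrt T+\sigma T^{1/p})$ is dominated by the displayed rate up to the logarithmic factor, using $\Lam\ge1$. Proposition~\ref{prop:trivial} bounds the regret by $GDT$ surely, so the expected regret of Algorithm~\ref{alg:ht-pader} is at most the minimum of the two, up to $\widetilde O$. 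These bounds hold uniformly over every admissible loss sequence, every oracle, and every fixed comparator with path length at most $P_T$, since $\Lam$ is monotone in $P_T$. Hence they bound $\Rate$, and Algorithm~\ref{alg:ht-pader} attains the rate.

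For the lower bound I would combine two constructions. The hypothesis $P_T\le D(T-1)$ gives $\bar P_T=P_T$ and $\bar\Lambda_T=\Lam$, so Theorem~\ref{thm:lower} yields
\[
\Rate\ge\tfrac1{256}\min\{\sigma DT^{1/p}\Lam^{(p-1)/p},\,GDT\}.
\]
When $\sigma=0$ this is trivially true. The deterministic lower bound of \citet{zhang2018adaptive} gives $\Omega(GD\sqrt{T\Lam})$ for exact gradients, i.e.\ a noiseless oracle, which is admissible for every $\sigma\ge0$ and $p$. I would need to check two things about it. First, it should be stated on a one-dimensional interval of diameter $D$ with path budget $P_T$; if not, I would rescale a $\pm$-sign block construction, which is just Theorem~\ref{thm:lower} with deterministic gradients of size $G$, $K\asymp\Lam$ blocks of length $T/K$, and regret $GD\sqrt{T/K}$ per block. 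Second, it should hold in the regime $\Lam\le T$, where $GD\sqrt{T\Lam}\le GDT$ automatically. Since $\Rate$ is at least each lower bound, it is at least their maximum.

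The remaining step is elementary algebra: with $a=GD\sqrt{T\Lam}\le c=GDT$ and $b=\sigma DT^{1/p}\Lam^{(p-1)/p}$,
\[
\max\{a,\min\{b,c\}\}\ge\tfrac12\min\{a+b,c\}.
\]
If $b\ge c$, the left side is $c$. Otherwise it is $\max\{a,b\}\ge(a+b)/2$. This matches the upper bound up to logarithmic factors, which proves \eqref{eq:minimax-final}.

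I expect the main obstacle to be confirming that the cited deterministic bound fits the present model. It must apply on the interval with diameter $D$, give the constant-level dependence $\sqrt{T(1+P_T/D)}$ (rather than $\sqrt{T(1+P_T)}$ under a normalized $D$), and remain valid for randomized algorithms against fixed comparators. If any of these is in doubt, the fallback is the self-contained noiseless block argument above, which reuses the blocking structure of Theorem~\ref{thm:lower} with $Z$ replaced by deterministic gradients.
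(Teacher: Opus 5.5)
Your proposal is correct and follows the paper's proof. The paper also takes the upper bound from Corollary~\ref{cor:main} together with Proposition~\ref{prop:trivial}, takes the lower bound as the maximum of Theorem~\ref{thm:lower} and the deterministic bound of \citet{zhang2018adaptive}, and concludes with the inequality $\max\{A,\min\{B,C\}\}\ge\tfrac12\min\{A+B,C\}$.

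One caution applies only to your fallback. Reusing the construction of Theorem~\ref{thm:lower} with exact gradients would not give $GD\sqrt{T/K}$ per block. A single hidden sign per block is revealed by the first noiseless gradient, so each block costs only $O(GD)$. A self-contained replacement needs independent random signs on every round, with the best fixed point of each block as the comparator.
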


\begin{proof}
Write $A=GD\sqrt{T\Lam}$, $B=\sigma DT^{1/p}\Lam^{(p-1)/p}$ and $C=GDT$, and note $A\le C$ since $\Lam\le T$. The upper bound is \eqref{eq:main-min}. For the lower bound, the deterministic minimax construction of \citet{zhang2018adaptive} uses $\sigma=0$ and is therefore admissible in our class for every $\sigma\ge0$, giving $\Rate=\Omega(A)$. For $\sigma>0$, Theorem~\ref{thm:lower} gives $\Rate=\Omega(\min\{B,C\})$; for $\sigma=0$ that term is zero. Hence $\Rate=\Omega(\max\{A,\min\{B,C\}\})$, and
\[
\max\{A,\min\{B,C\}\}\ \ge\ \tfrac12\min\{A+B,\,C\}:
\]
if $B\le C$ the left side is at least $\max\{A,B\}\ge\tfrac12(A+B)$, and if $B>C$ it is at least $\max\{A,C\}=C$.
\end{proof}

\begin{remark}[When regret reaches the linear ceiling]
\label{rem:transition}
For $G,\sigma>0$, the two contributions divided by $GDT$ are
\[
\sqrt{\Lam/T}
\qquad\text{and}\qquad
(\sigma/G)(\Lam/T)^{(p-1)/p}.
\]
Thus $\Lam=\Theta(T)$ already gives linear minimax regret through the deterministic contribution. The noise contribution reaches the ceiling at
$\Lambda_{\mathrm{noise}}=T(G/\sigma)^{p/(p-1)}$.
If this threshold exceeds $T$, it cannot be reached by a feasible comparator path; if it is below $1$, the noise lower bound is linear even for a fixed comparator. The condition $\sigma\le G$ does not make heavy tails harmless when $p<2$: at $\Lam=1$, the noise-to-deterministic ratio is $(\sigma/G)T^{1/p-1/2}$.

For fixed $G,\sigma,D$ and $p>1$, the leading minimax expression is sublinear when $\Lam=o(T)$, not merely when $\Lam$ is below a constant multiple of $T$. For the explicit algorithmic upper bound, the additional $\sqrt{4+\ln M}$ term in \eqref{eq:main-regret} is also sublinear under these fixed-parameter conditions. When parameters vary with $T$, both normalized contributions and the explicit logarithmic factor must be checked.
\end{remark}

\section{Conclusions}
\label{sec:conclusion}

Combining restarted AdaGrad experts with an adaptive master achieves expected dynamic regret $\widetilde O(GD\sqrt{T\Lam}+\sigma DT^{1/p}\Lam^{(p-1)/p})$ without knowing $G,\sigma,p$, or $P_T$. The key analysis balances block-local restart costs and comparator movement; the component algorithms and moment conversion are established tools. The matching stochastic and deterministic lower bounds give
\[
\Rate(G,\sigma,p,D,T,P_T)
=\widetilde\Theta\!\left(\min\left\{GD\sqrt{T\Lam}+\sigma DT^{1/p}\Lam^{(p-1)/p},\,GDT\right\}\right).
\]
Thus the algorithm is minimax optimal up to logarithmic factors in the stated bounded-domain, stochastic first-order oracle model.

Further, Appendix~\ref{apd:sc} gives a parameter-dependent restarted-OGD bound for strongly convex losses. 

\bibliography{refs}

\clearpage
\appendix

\section{Detailed Related Work}
\label{apd:related}

\subsection{Dynamic regret and different measures of non-stationarity}

Universal dynamic regret compares the learner with every fixed comparator sequence, rather than only with a sequence of per-round minimizers. Its dependence on $P_T$ measures comparator movement, which need not equal variation in the loss functions. The OGD analysis of \citet{zinkevich2003online} gives a tradeoff between an inverse-stepsize term proportional to $D^2+DP_T$ and a stepsize term proportional to $G^2T$. Tuning with knowledge of $P_T$ yields the optimal square-root dependence, but a stepsize chosen for static regret can have linear dependence on $1+P_T/D$.

\citet{zhang2018adaptive} address this tuning problem with a geometric pool of experts and establish a matching dynamic-regret lower bound. Their improved implementation shares a surrogate gradient across experts, reducing the number of gradient queries to one per round. Our algorithm retains this shared-gradient construction but indexes restarted AdaGrad experts by block length. The distinction is that the experts adapt to gradient magnitude without a prescribed $G$, while block length controls how much comparator movement accumulates between restarts. Non-stationary stochastic optimization \citep{besbes2015non} instead relates performance to variation in the objective sequence. These notions of non-stationarity answer different questions and should not be identified with $P_T$ without an additional argument.

\subsection{Heavy-tailed stochastic optimization and online learning}

A finite $p$-th moment with $p<2$ allows infinite variance but does not require it. Consequently, a proof that averages squared gradient norms may become vacuous, even when the algorithm still has finite expected regret on a bounded domain. \citet{vural2022mirror} study stochastic convex optimization with infinite noise variance and establish optimal rates for mirror descent under their moment assumptions. \citet{liu2024revisiting} analyze last-iterate guarantees for stochastic gradient methods. These optimization results concern convergence for a fixed objective and do not directly provide universal dynamic regret against a moving comparator.

\citet{zhang2022parameter} consider unbounded domains and high-probability regret that adapts to the comparator norm. Their construction uses clipping and additional regularization, and takes problem-dependent moment information as input. This is a different meaning of parameter-free from Definition~\ref{def:pf}. Their high-probability objective is also stronger than an expected-regret statement; the absence of a separated deterministic term in the displayed rate is not evidence that their algorithms fail when the noise vanishes.

\citet{liu2025heavy} shows that OGD, dual averaging, and AdaGrad attain expected static regret $O(GD\sqrt T+\sigma DT^{1/p})$ on bounded domains. AdaGrad needs none of $G,\sigma,p$. Its proof first bounds linearized regret by a realized square-root sum of squared gradients, then uses $\norm{\cdot}_{\ell_2}\le\norm{\cdot}_{\ell_p}$ before taking expectations. We use this argument for each restarted expert and combine the resulting bounds with an adaptive master. The new step is controlling comparator movement and choosing a block length without observing $P_T$; the static AdaGrad inequality itself is not new.

Heavy-tailed nonconvex optimization provides further motivation but uses different performance measures, typically stationarity rather than regret. Both clipped methods \citep{liu2024high} and methods without clipping \citep{liu2025nonconvex} have been studied. Distributed online optimization introduces communication and network assumptions in addition to the noise model \citep{yang2025online}. These results are relevant context, not directly comparable entries in Table~\ref{tab:comparison}.

\subsection{What parameter-free means in the comparison}

Several distinct forms of adaptation are called parameter-free. In this paper, the domain and its diameter are available, but the algorithm is not supplied with $G,\sigma,p$, or comparator path length. This does not mean that the learner is independent of the geometry of $\X$, nor that its regret is uniform over arbitrary unbounded domains. We also distinguish the known-horizon construction from the doubling extension in Appendix~\ref{apd:doubling}.

\citet{cutkosky2020parameter} combine comparator-norm adaptation, dynamic regret, and strongly adaptive guarantees. \citet{jacobsen2022parameter} develop parameter-free mirror descent and dynamic-regret guarantees in unconstrained online linear optimization. \citet{zhang2022optimal} additionally account for switching costs. These works motivate adaptation to comparator scale and movement, but their guarantees must be read with their own domain and gradient assumptions. They do not make the heavy-tailed, bounded-domain statement here an immediate corollary. Conversely, our diameter-dependent bound does not subsume their comparator-norm guarantees.

Other forms of adaptation concern learning rates and easy data sequences. MetaGrad combines learning rates to obtain second-order guarantees \citep{vanerven2016metagrad}. AdaNormalHedge adapts to the difficulty of expert advice and supports sleeping-expert extensions \citep{luo2015achieving}. The PF column of Table~\ref{tab:comparison} therefore refers only to Definition~\ref{def:pf}, not to a judgment about whether a cited method is parameter-free in its original setting.

\subsection{Adaptive masters and pathwise analysis}

AdaHedge chooses its learning rate using observed mixability gaps rather than a horizon-dependent tuning rule \citep{vanerven2011adaptive,derooij2014follow}. Scale-free online learning studies guarantees and decisions that do not require advance knowledge of gradient magnitudes \citep{orabona2018scale}. Algorithm~\ref{alg:adahedge} is the entropy-regularized adaptive FTRL construction analyzed in \citet{orabona2019modern}; we include its proof to specify the constants and the zero-regularization convention used here.

For heavy-tailed feedback, the relevant property is that its regret bound holds for every finite realized sequence, without an a priori bound on the meta-losses. The resulting square-root expression has a finite expectation under the stated $p$-th moment assumption. This avoids a particular second-moment obstruction; it does not imply that every alternative master or every analysis involving squared quantities must fail. Lemma~\ref{lem:transport} records the combination of the expert and master bounds, rather than introducing a new master algorithm.

General conversions between online linear and convex optimization, feedback models, and regret notions are studied by \citet{pedramfar2026generalized}. This provides a broader context for sharing linearized feedback across experts. The calculation used here specializes that idea to a bounded domain and a finite $p$-th noise moment, retaining the realized square-root terms until the expectation step.

\subsection{Smoothness, optimistic updates, and interval guarantees}

\citet{zhao2020dynamic,zhao2024adaptivity} exploit smoothness and optimistic updates to replace worst-case horizon dependence by gradient variation or comparator loss. Their analysis includes corrections that coordinate the master and base learners; Sword++ uses one gradient query per round. Our optimistic extension uses predictable hints in the sense of \citet{rakhlin2013optimization}, but retains both a stability contribution and the nonoptimistic master's gradient-dependent term. It therefore does not recover the full Sword++ refinement under heavy-tailed feedback. The assumptions and the remaining terms are stated explicitly in Appendix~\ref{apd:secondorder}.

Adaptive and strongly adaptive regret control performance on sub-intervals \citep{hazan2009efficient,daniely2015strongly}. Geometric interval constructions and coin-betting methods make such guarantees computationally feasible \citep{jun2017improved}; projection-free variants address the cost of projections \citep{garber2022new}. A suitable restarted expert in our pool has an expected linearized-regret bound on each fixed interval under shared feedback, but this is not a strongly adaptive guarantee for the played mixture. The master bound proved here is over the full horizon. An interval-sensitive master with appropriate guarantees for unbounded realized meta-losses would require additional analysis, so we do not claim that extension.

\section{Supporting Tools}
\label{apd:proofs-tools}

This appendix collects the reusable ingredients: linearization, the finite-moment conversion, the adaptive-master inequality, the general master/expert reduction, and alternative learning-rate arguments. The standard results are reproduced with attribution. Appendix~\ref{apd:section-five-proofs} separately gives the complete proofs for Section~\ref{sec:main}, including the moving-comparator block bound, the pathwise guarantee, and the restart-length calculation.

\subsection{Standard linearization}\label{apd:linearization}

The standard convexity and conditional-unbiasedness argument of \citet{shalev2012online,hazan2016introduction} relates expected loss regret to the pathwise linearized quantities used in Section~\ref{sec:main}.

\begin{lemma}[Linearization]
\label{lem:linearization}
Let $u_1,\ldots,u_T\in\X$ be a fixed comparator sequence and let $x_t$ be $\mathcal F_{t-1}$-measurable. Then
$\E[\DRegret_T]\le\E[\sum_{t=1}^T\inner{g_t,x_t-u_t}]$.
\end{lemma}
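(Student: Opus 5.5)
The plan is to prove the inequality round by round and then sum. Linearity of expectation gives this directly once each summand is shown to be integrable.

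\emph{Pointwise step.} For each $t$, convexity of $\ell_t$ and the selected subgradient $\nabla\ell_t(x_t)$ give, surely,
\[
\ell_t(x_t)-\ell_t(u_t)\le\inner{\nabla\ell_t(x_t),x_t-u_t}.
\]
The subgradient inequality holds for the measurable selection at every point of $\X$. Both $x_t$ and $u_t$ lie in $\X$, so no boundary issue arises.

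\emph{Conditioning step.} The key observation is that $x_t-u_t$ is $\mathcal F_{t-1}$-measurable. Indeed, $x_t$ is $\mathcal F_{t-1}$-measurable by hypothesis, and $u_t$ is deterministic because the comparator is fixed independently of all randomness. The same holds for $\nabla\ell_t(x_t)$, since the losses are fixed in advance and the selection is measurable. Using \eqref{eq:unbiased} we then write
\[
\E\big[\inner{g_t,x_t-u_t}\mid\mathcal F_{t-1}\big]=\inner{\E[g_t\mid\mathcal F_{t-1}],x_t-u_t}=\inner{\nabla\ell_t(x_t),x_t-u_t}.
\]
Taking total expectations and combining with the pointwise step gives
\[
\E[\ell_t(x_t)-\ell_t(u_t)]\le\E\inner{g_t,x_t-u_t}.
\]
Summing over $t\in[T]$ gives the claim.

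\emph{Integrability.} This is the only point needing care, since $g_t$ may lack a second moment. Because $x_t$ is measurable and $\X$ is bounded, $\ell_t(x_t)$ is bounded by $|\ell_t(x_0)|+GD$, so the loss regret is integrable. For the linear term we use
\[
|\inner{g_t,x_t-u_t}|\le D\norm{g_t}\le D\big(G+\norm{\epsilon_t}\big).
\]
By \eqref{eq:heavy-tail} and Jensen's inequality (valid since $p>1$), $\E\norm{\epsilon_t}\le(\E\norm{\epsilon_t}^p)^{1/p}\le\sigma$, so each term is integrable. This justifies pulling the $\mathcal F_{t-1}$-measurable bounded vector out of the conditional expectation and using the tower property. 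No other obstacle is expected: only first moments are used, which is exactly why the lemma holds under heavy tails.
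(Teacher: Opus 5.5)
Your proof is correct and follows the paper's route: the subgradient inequality at $x_t$, then conditional unbiasedness via the $\mathcal F_{t-1}$-measurability of $x_t-u_t$, then summing and taking total expectations. Your integrability check, $|\inner{g_t,x_t-u_t}|\le D\norm{g_t}$ together with $\E\norm{\epsilon_t}\le\sigma$, is not in the paper's proof of this lemma, but the same argument appears in Step~6 of the proof of Theorem~\ref{thm:main-pathwise}.
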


\begin{proof}
By convexity $\ell_t(x_t)-\ell_t(u_t)\le\inner{\nabla\ell_t(x_t),x_t-u_t}$. Since $x_t$ and $u_t$ are $\mathcal F_{t-1}$-measurable, \eqref{eq:unbiased} gives $\E[\inner{g_t,x_t-u_t}\mid\mathcal F_{t-1}]=\inner{\nabla\ell_t(x_t),x_t-u_t}$. Sum over $t$ and take total expectations.
\end{proof}

\subsection{Expected bounds under the moment assumption}
\label{subsec:moment}

The following two lemmas are the argument of \citet{liu2025heavy}, restated for completeness; together they provide the moment conversion used in the expectation bounds.

\begin{lemma}
\label{lem:moment}
For any fixed $B\subseteq[T]$, $\ \E[(\sum_{t\in B}\norm{\epsilon_t}^p)^{1/p}]\le\sigma|B|^{1/p}$.
\end{lemma}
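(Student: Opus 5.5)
The plan is Jensen's inequality applied to the concave map $y\mapsto y^{1/p}$, followed by the tower property to use the conditional moment bound \eqref{eq:heavy-tail}.

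\emph{Step 1 (Jensen).} Since $p>1$, the function $y\mapsto y^{1/p}$ is concave on $[0,\infty)$. Applied to the nonnegative random variable $Y=\sum_{t\in B}\norm{\epsilon_t}^p$, Jensen's inequality gives
\[
\E\Big[\Big(\textstyle\sum_{t\in B}\norm{\epsilon_t}^p\Big)^{1/p}\Big]\le\Big(\E\Big[\textstyle\sum_{t\in B}\norm{\epsilon_t}^p\Big]\Big)^{1/p}.
\]
This holds even if $\E Y=\infty$, but Step 2 will show $\E Y$ is finite.

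\emph{Step 2 (tower property).} The set $B$ is fixed and deterministic, so linearity of expectation applies termwise. For each $t\in B$, the tower property together with \eqref{eq:heavy-tail} gives
\[
\E\norm{\epsilon_t}^p=\E\big[\E[\norm{\epsilon_t}^p\mid\mathcal F_{t-1}]\big]\le\sigma^p.
\]
Summing over $t\in B$ yields $\E Y\le|B|\sigma^p$. Substituting into Step 1 gives the bound $(|B|\sigma^p)^{1/p}=\sigma|B|^{1/p}$.

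\emph{Main point to watch.} The only delicate issue is measurability and integrability. Each $\epsilon_t$ is a measurable function of $g_t$ and of $x_t$, and $x_t$ is $\mathcal F_{t-1}$-measurable, so $\norm{\epsilon_t}^p$ is a nonnegative random variable. Conditional expectations of nonnegative variables are therefore well defined, and the tower property holds without any a priori integrability assumption.

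Because $B$ is deterministic, no optional-stopping argument is needed. This is exactly why the paper applies the lemma to the fixed restart blocks $B\in\mathcal B(L_i)$ rather than to data-dependent intervals.
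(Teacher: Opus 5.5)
Your proof is correct and is the paper's argument: Jensen's inequality for the concave map $y\mapsto y^{1/p}$, then the tower rule with \eqref{eq:heavy-tail} to get $\E\sum_{t\in B}\norm{\epsilon_t}^p\le\sigma^p|B|$. Your measurability and integrability remarks are sound but add nothing essential beyond what the setup already provides.
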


\begin{lemma}
\label{lem:sqrt-bound}
For any fixed $B\subseteq[T]$,
\begin{equation}
\label{eq:sqrt-bound}
\begin{split}
\E\Big[\sqrt{\textstyle\sum_{t\in B}\norm{g_t}^2}\Big]
&\le
\E\Big[\sqrt{\textstyle\sum_{t\in B}\norm{\nabla\ell_t(x_t)}^2}\Big]+\sigma|B|^{1/p}
\\
&\le
G\sqrt{|B|}+\sigma|B|^{1/p}.
\end{split}
\end{equation}
\end{lemma}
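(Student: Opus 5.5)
We prove Lemma~\ref{lem:sqrt-bound} by splitting each stochastic gradient into its mean and its noise, and then handling the two parts separately. Write $g_t=\nabla\ell_t(x_t)+\epsilon_t$. The map $v\mapsto\big(\sum_{t\in B}\norm{v_t}^2\big)^{1/2}$ is a norm on the product space $(\R^d)^{B}$, so Minkowski's inequality gives, pathwise,
\[
\sqrt{\textstyle\sum_{t\in B}\norm{g_t}^2}\le\sqrt{\textstyle\sum_{t\in B}\norm{\nabla\ell_t(x_t)}^2}+\sqrt{\textstyle\sum_{t\in B}\norm{\epsilon_t}^2}.
\]
Both terms on the right are nonnegative, and we take expectations term by term. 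Measurability is not an issue here: $x_t$ is $\mathcal F_{t-1}$-measurable and the selected subgradient is measurable.

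For the noise term we use $\ell_2\le\ell_p$ on the $|B|$-vector of norms. Since $p\le2$, we have $\big(\sum_{t\in B} a_t^2\big)^{1/2}\le\big(\sum_{t\in B} a_t^p\big)^{1/p}$ for nonnegative $a_t$. This follows by normalizing so that $\sum_{t\in B} a_t^p=1$: then each $a_t\le1$, hence $a_t^2\le a_t^p$, and so $\sum_{t\in B} a_t^2\le1$. Applying this with $a_t=\norm{\epsilon_t}$ and then Lemma~\ref{lem:moment} gives
\[
\E\Big[\sqrt{\textstyle\sum_{t\in B}\norm{\epsilon_t}^2}\Big]\le\E\Big[\big(\textstyle\sum_{t\in B}\norm{\epsilon_t}^p\big)^{1/p}\Big]\le\sigma|B|^{1/p},
\]
which proves the first inequality in \eqref{eq:sqrt-bound}.

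For the second inequality, the setup gives $\norm{\nabla\ell_t(x_t)}\le G$ surely, because $x_t\in\X$. Hence the deterministic part is at most $G\sqrt{|B|}$ pathwise, and so also in expectation.

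Lemma~\ref{lem:moment} is already available, so the only potentially delicate step would be that lemma itself. If it had to be reproved, we would write $\E\big[(\sum_{t\in B}\norm{\epsilon_t}^p)^{1/p}\big]\le\big(\E\sum_{t\in B}\norm{\epsilon_t}^p\big)^{1/p}$ by Jensen's inequality, since $y\mapsto y^{1/p}$ is concave. We would then apply the tower property with \eqref{eq:heavy-tail} to each summand to get $\E\norm{\epsilon_t}^p\le\sigma^p$. This step uses that $B$ is fixed, not random, so the sum over $B$ commutes with the expectation.
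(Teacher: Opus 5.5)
Your proof is correct and follows the paper's argument essentially step for step. Like the paper, you use Minkowski's inequality to split the mean and noise parts, then the comparison $\norm{\cdot}_{\ell_2}\le\norm{\cdot}_{\ell_p}$ for $p\le2$, then Lemma~\ref{lem:moment} (Jensen plus the tower rule) for the noise, and finally the sure bound $\norm{\nabla\ell_t(x_t)}\le G$ for the second inequality; your normalization proof of the norm comparison and your remark that $B$ must be deterministic are details the paper leaves implicit.
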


Lemma~\ref{lem:sqrt-bound} is where $p\le2$ enters, through $\norm{\cdot}_{\ell_2}\le\norm{\cdot}_{\ell_p}$.

\begin{proof}[Proof of Lemmas~\ref{lem:moment} and~\ref{lem:sqrt-bound}]
Since $x\mapsto x^{1/p}$ is concave for $p\ge1$, Jensen and the tower rule with \eqref{eq:heavy-tail} give
$\E[(\sum_{t\in B}\norm{\epsilon_t}^p)^{1/p}]\le(\sum_{t\in B}\E\norm{\epsilon_t}^p)^{1/p}\le(\sigma^p|B|)^{1/p}$, which is Lemma~\ref{lem:moment}. For Lemma~\ref{lem:sqrt-bound}, Minkowski's inequality in $\ell_2(\R^{|B|})$ followed by $\norm{\cdot}_{\ell_2}\le\norm{\cdot}_{\ell_p}$ (valid as $p\le2$) gives
\begin{align*}
\sqrt{\sum_{t\in B}\norm{g_t}^2}
&=\big\|(\norm{\nabla\ell_t(x_t)+\epsilon_t})_{t\in B}\big\|_{\ell_2}
\\
&\le\sqrt{\sum_{t\in B}\norm{\nabla\ell_t(x_t)}^2}+\Big(\sum_{t\in B}\norm{\epsilon_t}^p\Big)^{1/p};
\end{align*}
take expectations, apply Lemma~\ref{lem:moment}, and use $\norm{\nabla\ell_t(x_t)}\le G$.
\end{proof}

\subsection{Standard adaptive master and its proof}
\label{subsec:adahedge}

To control potentially unbounded meta-losses, we use a master guarantee that holds for every realized sequence. AdaHedge/AdaFTRL is follow-the-regularized-leader (FTRL) on the probability simplex $\Delta_M=\{w\in\R^M:w_i\ge0,\ \sum_iw_i=1\}$, with the relative-entropy regularizer $\psi(w)=\sum_iw_i\ln(w_i/\pi_i)$ whose weight $\lambda_t$ is grown by the observed \emph{mixability gaps}.

\begin{algorithm}[t]
\caption{AdaHedge / AdaFTRL over $M$ experts}
\label{alg:adahedge}
\begin{algorithmic}[1]
\REQUIRE prior $\pi\in\Delta_M$ with $\pi_i>0$ for every $i$, scale $\alpha>0$
\STATE $\lambda_1=0$, \; $S_0=0\in\R^M$
\FOR{$t=1,2,\ldots$}
\STATE $w_t=\argmin_{w\in\Delta_M}\{\inner{w,S_{t-1}}+\lambda_t\psi(w)\}$, i.e.\ $w_{t,i}\propto\pi_ie^{-S_{t-1,i}/\lambda_t}$ (limit $\lambda_t\downarrow0$ if $\lambda_t=0$)
\STATE output $w_t$; receive $m_t\in\R^M$
\IF{$\lambda_t=0$}
\STATE $\delta_t=\inner{\pi,m_t}-\min_i m_{t,i}$
\ELSE
\STATE $\delta_t=\inner{w_t,m_t}+\lambda_t\ln\!\big(\sum_iw_{t,i}e^{-m_{t,i}/\lambda_t}\big)$
\ENDIF
\STATE $\lambda_{t+1}=\lambda_t+\delta_t/\alpha^2$; \quad $S_t=S_{t-1}+m_t$
\ENDFOR
\end{algorithmic}
\end{algorithm}

\begin{lemma}[AdaHedge, pathwise]
\label{lem:adahedge}
Let $m_1,\ldots,m_T\in\R^M$ be arbitrary, with no distributional assumptions. Algorithm~\ref{alg:adahedge} guarantees, for every $i\in[M]$,
\begin{equation}
\label{eq:adahedge-regret}
\begin{split}
\sum_{t=1}^T\inner{w_t,m_t}-\sum_{t=1}^Tm_{t,i}
\le\;&
\Big(\tfrac{\ln(1/\pi_i)}{\alpha^2}+1\Big)
\\
&\cdot\sqrt{(4+\alpha^2)\textstyle\sum_{t=1}^T\norm{m_t}_\infty^2}.
\end{split}
\end{equation}
In particular, with the uniform prior and $\alpha^2=\ln M$ for $M\ge2$,
\begin{equation}
\label{eq:adahedge-uniform}
\sum_{t=1}^T\inner{w_t,m_t}-\sum_{t=1}^Tm_{t,i}
\le
2\sqrt{(4+\ln M)\textstyle\sum_{t=1}^T\norm{m_t}_\infty^2}.
\end{equation}
\end{lemma}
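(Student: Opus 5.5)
The plan is the standard FTRL analysis with a time-varying regularizer $\lambda_t\psi$, where the stability terms are the mixability gaps $\delta_t$ and $\lambda_t$ is driven by those same gaps. The argument has three steps.

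\emph{Step 1 (regret via mixability gaps).} Define the mix loss $\hat m_t=-\lambda_t\ln\sum_i w_{t,i}e^{-m_{t,i}/\lambda_t}$, so that $\delta_t=\inner{w_t,m_t}-\hat m_t$. When $\lambda_t=0$, take $\hat m_t=\min_i m_{t,i}$, consistent with the limit and with the algorithm's convention.

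Following the AdaHedge analysis of \citet{derooij2014follow}, I will show that the cumulative mix loss is at most $S_{T,i}+\lambda_{T+1}\ln(1/\pi_i)$. The tool is the potential $\Phi_t(\lambda)=-\lambda\ln\sum_i\pi_ie^{-S_{t,i}/\lambda}$. By construction of $w_t$, $\hat m_t=\Phi_t(\lambda_t)-\Phi_{t-1}(\lambda_t)$. The map $\lambda\mapsto\Phi_t(\lambda)$ is nonincreasing, and $\lambda_t$ is nondecreasing because $\delta_t\ge0$ (Jensen). Hence $\Phi_t(\lambda_t)\le\Phi_t(\lambda_{t+1})+{}$(nothing), which lets the sum telescope to $\Phi_T(\lambda_{T+1})$. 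Finally, $\Phi_T(\lambda)\le S_{T,i}+\lambda\ln(1/\pi_i)$ follows by keeping only the $i$-th term. Consequently
\[
\textstyle\sum_t\inner{w_t,m_t}-S_{T,i}\le\Delta_T+\lambda_{T+1}\ln(1/\pi_i),\qquad\Delta_T:=\sum_t\delta_t .
\]
Since $\lambda_{T+1}=\Delta_T/\alpha^2$, the right side equals $\Delta_T\big(1+\ln(1/\pi_i)/\alpha^2\big)$. This is exactly the prefactor in \eqref{eq:adahedge-regret}, so it remains to show $\Delta_T\le\sqrt{(4+\alpha^2)\sum_t\norm{m_t}_\infty^2}$.

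\emph{Step 2 (per-round gap bounds).} Write $b_t=\norm{m_t}_\infty$. I need two elementary facts.

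First, $\delta_t\le 2b_t$ always: the mix loss lies in $[\min_i m_{t,i},\max_i m_{t,i}]$, so $\delta_t\le\max_i m_{t,i}-\min_i m_{t,i}\le2b_t$. This also covers the $\lambda_t=0$ case.

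Second, when $\lambda_t>0$, Hoeffding's lemma applied to the range $2b_t$ gives $\delta_t\le (2b_t)^2/(8\lambda_t)=b_t^2/(2\lambda_t)$.

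Combining, $\delta_t\le\min\{2b_t,\,b_t^2/(2\lambda_t)\}$.

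\emph{Step 3 (the self-bounding recursion).} This is the main obstacle, because the constant $4+\alpha^2$ must come out exactly. Write $\lambda_t=\Delta_{t-1}/\alpha^2$. Then
\[
\delta_t^2\le 2b_t\,\delta_t\le\ldots
\]
and more usefully
\[
\Delta_T^2=\sum_t\big(\Delta_t^2-\Delta_{t-1}^2\big)=\sum_t\big(2\delta_t\Delta_{t-1}+\delta_t^2\big).
\]
For the cross term, $2\delta_t\Delta_{t-1}=2\alpha^2\lambda_t\delta_t\le\alpha^2 b_t^2$ by the Hoeffding bound; when $\lambda_t=0$ this term vanishes. For the square term, $\delta_t^2\le4b_t^2$ by the range bound. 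Summing gives
\[
\Delta_T^2\le(4+\alpha^2)\sum_t b_t^2,
\]
which is precisely the required constant. So the only care needed is in the $\lambda_t=0$ rounds and in justifying the monotone telescope of Step 1.

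The uniform-prior specialization \eqref{eq:adahedge-uniform} then follows by substitution: with $\pi_i=1/M$ and $\alpha^2=\ln M$, the prefactor $\ln(1/\pi_i)/\alpha^2+1$ equals $2$.
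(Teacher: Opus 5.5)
Your proof is correct and is essentially the paper's argument in dual form. The steps match: the regret bound $\lambda_{T+1}\ln(1/\pi_i)+\sum_t\delta_t$, the gap estimates $\delta_t\le\min\{2b_t,\,b_t^2/(2\lambda_t)\}$, and the squared-sum recursion, which is the standard proof of the recursion lemma from \citet{orabona2019modern} that the paper invokes. The only differences are that you telescope the mix-loss potential $\Phi_t(\lambda)=\min_{w\in\Delta_M}\{\lambda\psi(w)+\inner{w,S_t}\}$ instead of using the FTRL regret equality, and you use Hoeffding's lemma instead of strong convexity of $\psi$.

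One slip to fix: $\lambda\mapsto\Phi_t(\lambda)$ is non\emph{decreasing}, not nonincreasing. It is a minimum of functions that are nondecreasing in $\lambda$ because $\psi\ge0$; it decreases only as a function of the learning rate $1/\lambda$. This direction, which is exactly the paper's \eqref{eq:psi-monotone}, is what gives the inequality $\Phi_t(\lambda_t)\le\Phi_t(\lambda_{t+1})$ that your telescope needs.
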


While $\lambda_t=0$, all previous gaps vanish and every expert has the same cumulative loss, so $w_t=\pi$. This justifies the zero-regularization branch in Algorithm~\ref{alg:adahedge}.

This is the analysis of \citet[Sec.~7.6]{orabona2019modern}; see also \citet{derooij2014follow,orabona2018scale}. The proof below is included for completeness and to specify the constants used in Theorem~\ref{thm:main-pathwise}.

\begin{proof}
Write $\psi(w)=\sum_iw_i\ln(w_i/\pi_i)$, which on $\Delta_M$ is nonnegative, $1$-strongly convex with respect to $\norm{\cdot}_1$, minimized at $w=\pi$ with value $0$, and satisfies $\psi(e_i)=\ln(1/\pi_i)$. Algorithm~\ref{alg:adahedge} is FTRL with linear losses and regularizers $\psi_t=\lambda_t\psi$; since $\delta_t\ge0$ by Jensen, $\lambda_t$ is nondecreasing and $\psi_t\le\psi_{t+1}$ pointwise on $\Delta_M$. Put $F_t(w)=\lambda_t\psi(w)+\inner{w,S_{t-1}}$ and
$\tilde w_{t+1}=\argmin_{w\in\Delta_M}\{\lambda_t\psi(w)+\inner{w,S_t}\}$. A direct computation with the closed-form minima shows that the quantity $\delta_t$ of Algorithm~\ref{alg:adahedge} equals
\begin{equation}
\label{eq:delta-def}
\delta_t=F_t(w_t)-\big(\lambda_t\psi(\tilde w_{t+1})+\inner{\tilde w_{t+1},S_t}\big)+\inner{w_t,m_t},
\end{equation}
the usual FTRL/mixability gap. Because $\lambda_{t+1}\ge\lambda_t$ and $\psi\ge0$,
\begin{equation}
\label{eq:psi-monotone}
\begin{split}
F_{t+1}(w_{t+1})
&=\min_{w}\big\{\lambda_{t+1}\psi(w)+\inner{w,S_t}\big\}
\\
&\ge
\lambda_t\psi(\tilde w_{t+1})+\inner{\tilde w_{t+1},S_t},
\end{split}
\end{equation}
so the per-round gap appearing in the FTRL regret equality \citep[Lemma~7.1]{orabona2019modern} is at most $\delta_t$. Hence for any $u\in\Delta_M$,
\begin{equation}
\label{eq:ftrl-eq}
\begin{split}
\sum_{t=1}^T\inner{w_t,m_t}-\sum_{t=1}^T\inner{u,m_t}
&\le\lambda_{T+1}\psi(u)+\sum_{t=1}^T\delta_t
\\
&=\lambda_{T+1}\big(\psi(u)+\alpha^2\big),
\end{split}
\end{equation}
using $\sum_t\delta_t=\alpha^2(\lambda_{T+1}-\lambda_1)=\alpha^2\lambda_{T+1}$.

It remains to bound $\lambda_{T+1}$, for which two estimates of $\delta_t$ are available. First, by \citet[Lemma~7.5]{orabona2019modern} applied to $F_t+\inner{\cdot,m_t}$, which is $\lambda_t$-strongly convex with respect to $\norm{\cdot}_1$,
\begin{equation}
\label{eq:delta-quad}
\delta_t\le\frac{\norm{m_t}_\infty^2}{2\lambda_t}\qquad(\lambda_t>0).
\end{equation}
Second, since $w_t$ minimizes $F_t$, \eqref{eq:delta-def} gives
\begin{equation}
\label{eq:delta-lin}
\begin{split}
\delta_t
&\le F_t(\tilde w_{t+1})-\lambda_t\psi(\tilde w_{t+1})-\inner{\tilde w_{t+1},S_t}
\\
&\qquad+\inner{w_t,m_t}
\\
&=\inner{w_t-\tilde w_{t+1},m_t}\;\le\;2\norm{m_t}_\infty ,
\end{split}
\end{equation}
because $|\inner{w,m_t}|\le\norm{m_t}_\infty$ for $w\in\Delta_M$. Therefore
\[
\lambda_{t+1}\le\lambda_t+\min\left\{\frac{2\norm{m_t}_\infty}{\alpha^2},\frac{\norm{m_t}_\infty^2}{2\alpha^2\lambda_t}\right\},
\]
where the quadratic branch is omitted when $\lambda_t=0$. Then \citet[Lemma~7.16]{orabona2019modern} with $\Delta_t=\lambda_{t+1}$, $a_t=\norm{m_t}_\infty$, $b=2/\alpha^2$, $c=1/\alpha^2$ yields
\begin{equation}
\label{eq:lambda-bound}
\begin{split}
\lambda_{T+1}
&\le\sqrt{\Big(\frac4{\alpha^4}+\frac1{\alpha^2}\Big)\sum_{t=1}^T\norm{m_t}_\infty^2}
\\
&=\frac1{\alpha^2}\sqrt{(4+\alpha^2)\sum_{t=1}^T\norm{m_t}_\infty^2}.
\end{split}
\end{equation}
Taking $u=e_i$ in \eqref{eq:ftrl-eq} and substituting \eqref{eq:lambda-bound} gives \eqref{eq:adahedge-regret}; the uniform-prior case follows from $\ln(1/\pi_i)=\ln M=\alpha^2$.
\end{proof}

\subsection{Combining expert and master regret bounds}
\label{subsec:transport}

This general statement packages the same decomposition used directly in the proof of Theorem~\ref{thm:main-pathwise}. The master/expert split and Jensen's inequality are standard; the lemma is a convenient reusable formulation for the supplementary bounds, not a separate novelty claim. General meta-algorithm conversions are studied by \citet{pedramfar2026generalized}. Here the formulation specifies exactly where the finite $p$-th moment is used.

\begin{lemma}[Expected dynamic regret from pathwise bounds]
\label{lem:transport}
Let $E_1,\ldots,E_M$ ($M\ge2$) be base learners producing $\mathcal F_{t-1}$-measurable iterates $x_t^{(i)}\in\X$, and let the master use \eqref{eq:master}, reindexing the experts as $1,\ldots,M$, with $w_t$ from Algorithm~\ref{alg:adahedge} using the uniform prior and $\alpha^2=\ln M$. Fix a comparator sequence $u_1,\ldots,u_T\in\X$, and suppose that for each $i$ there are a partition $\mathcal P_i$ of $[T]$ into intervals and a nondecreasing concave $\Phi_i:\R_+^{|\mathcal P_i|}\to\R_+$ (both are fixed independently of the oracle noise and may depend on the fixed comparator sequence) such that, for every realization,
\begin{equation}
\label{eq:transport-hyp}
\sum_{t=1}^T\inner{g_t,x_t^{(i)}-u_t}\le\Phi_i\big((\sqrt{V_B})_{B\in\mathcal P_i}\big).
\end{equation}
Then, under the setup of Section~\ref{sec:setup},
\begin{equation}
\label{eq:transport}
\begin{split}
\E[\DRegret_T]
\le\;&
\min_{i\in[M]}\Phi_i\big((G\sqrt{|B|}+\sigma|B|^{1/p})_{B\in\mathcal P_i}\big)
\\
&+2D\sqrt{4+\ln M}\,\big(G\sqrt T+\sigma T^{1/p}\big).
\end{split}
\end{equation}
Before taking expectations, the same decomposition bounds the \emph{linearized} regret $\sum_t\inner{g_t,x_t-u_t}$ pathwise, with $\sqrt{V_B}$ and $\sqrt{V_{[T]}}$ in place of their expected upper bounds; and both $G\sqrt{|B|}$ and $G\sqrt T$ may be replaced by the possibly much smaller quantities $\E[\sqrt{\sum_{t\in B}\norm{\nabla\ell_t(x_t)}^2}]$ and $\E[\sqrt{\sum_{t\le T}\norm{\nabla\ell_t(x_t)}^2}]$, by the first inequality of Lemma~\ref{lem:sqrt-bound}.
\end{lemma}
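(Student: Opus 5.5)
The argument runs in three steps: a pathwise decomposition for each expert, an expectation step that uses concavity and monotonicity of $\Phi_i$, and a final minimization over $i$ that is valid because every bound is deterministic.

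\emph{Step 1: pathwise decomposition.} Fix the comparator sequence. Since $\sum_i w_{t,i}=1$, we have $\inner{w_t,m_t}=\inner{g_t,\sum_iw_{t,i}x_t^{(i)}-x_t}=0$. Hence, for every index $i$,
\[
\sum_t\inner{g_t,x_t-u_t}=\sum_t\big(\inner{w_t,m_t}-m_{t,i}\big)+\sum_t\inner{g_t,x_t^{(i)}-u_t}.
\]
Every iterate lies in $\X$, so $|m_{t,i}|\le\norm{g_t}\,\norm{x_t^{(i)}-x_t}\le D\norm{g_t}$. Lemma~\ref{lem:adahedge} with the uniform prior and $\alpha^2=\ln M$ then bounds the first sum by $2D\sqrt{(4+\ln M)V_{[T]}}$; this lemma holds for arbitrary realized sequences and needs no boundedness assumption. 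Hypothesis~\eqref{eq:transport-hyp} bounds the second sum by $\Phi_i((\sqrt{V_B})_B)$. This gives the pathwise claim in the last sentence of the lemma, simultaneously for all $i$.

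\emph{Step 2: taking expectations.} The played point $x_t$ is a combination of $\mathcal F_{t-1}$-measurable iterates and weights. The weights depend only on $m_1,\dots,m_{t-1}$, so $x_t$ is $\mathcal F_{t-1}$-measurable, and Lemma~\ref{lem:linearization} gives $\E[\DRegret_T]\le\E[\sum_t\inner{g_t,x_t-u_t}]$. For integrability: the linearized sum is bounded in absolute value by $D\sum_t\norm{g_t}$, which is integrable because $p>1$.

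Now fix $i$ before taking expectations; this is legitimate because the pathwise inequality holds for every $i$. Since $\Phi_i$ is concave and $\mathcal P_i$ is deterministic, Jensen's inequality gives
\[
\E\big[\Phi_i((\sqrt{V_B})_B)\big]\le\Phi_i\big((\E\sqrt{V_B})_B\big).
\]
Jensen for a concave function on $\R_+^k$ requires the coordinates to be integrable, and they are, by Lemma~\ref{lem:sqrt-bound}. Because $\Phi_i$ is nondecreasing, we may then replace each $\E\sqrt{V_B}$ by its upper bound $G\sqrt{|B|}+\sigma|B|^{1/p}$ from Lemma~\ref{lem:sqrt-bound}, or by the intermediate quantity $\E[\sqrt{\sum_{t\in B}\norm{\nabla\ell_t(x_t)}^2}]+\sigma|B|^{1/p}$. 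The master term is handled the same way, with $B=[T]$.

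\emph{Step 3: minimizing over $i$.} We have $\E[\DRegret_T]\le$ (expert-$i$ bound) $+$ (master bound) for each $i$. Each right-hand side is a deterministic number, so taking the minimum over $i$ afterwards is valid and gives~\eqref{eq:transport}.

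The step I expect to need the most care is ensuring that the expectation is taken for a fixed $i$ and a fixed partition and $\Phi_i$. Exchanging $\E$ with $\min_i$ in the wrong direction would be invalid, and so would a noise-dependent partition. Both are prevented by the hypothesis that $\mathcal P_i$ and $\Phi_i$ are fixed independently of the noise. A second point to verify is that $\Phi_i$ need only be concave on $\R_+^{|\mathcal P_i|}$, where the random vector lives, for Jensen to apply.
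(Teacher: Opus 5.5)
Your proposal is correct and follows essentially the same route as the paper. It combines the centered meta-loss identity with Lemma~\ref{lem:adahedge} and $\norm{m_t}_\infty\le D\norm{g_t}$ for the master term, uses the hypothesis for the expert term, and then applies Jensen, monotonicity of $\Phi_i$, and Lemma~\ref{lem:sqrt-bound}. The paper minimizes pathwise first and then uses $\E[\min_i\cdot]\le\min_i\E[\cdot]$, whereas you fix $i$ before taking expectations and minimize the deterministic bounds afterwards; the two are equivalent, and your explicit predictability and integrability checks fill in details the paper leaves implicit.
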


\begin{proof}
By Lemma~\ref{lem:linearization} it suffices to bound $\E[\sum_t\inner{g_t,x_t-u_t}]$. Since $\sum_iw_{t,i}m_{t,i}=0$ by \eqref{eq:master}, we have $\inner{g_t,x_t-x_t^{(i)}}=\inner{w_t,m_t}-m_{t,i}$, so Lemma~\ref{lem:adahedge} and $\norm{m_t}_\infty\le D\norm{g_t}$ give, for every $i$ and pathwise,
\begin{equation}
\label{eq:meta-regret}
\sum_{t=1}^T\inner{g_t,x_t-x_t^{(i)}}
\le
2D\sqrt{(4+\ln M)V_{[T]}} .
\end{equation}
Adding \eqref{eq:transport-hyp} and minimizing over $i$ bounds $\sum_t\inner{g_t,x_t-u_t}$ pathwise. Finally, $\E[\min_i\Phi_i(\cdot)]\le\min_i\E[\Phi_i(\cdot)]\le\min_i\Phi_i((\E\sqrt{V_B})_B)$ by Jensen (each $\Phi_i$ is concave) and monotonicity, and Lemma~\ref{lem:sqrt-bound} bounds $\E\sqrt{V_B}$ and $\E\sqrt{V_{[T]}}$.
\end{proof}

The lemma requires no additional moment assumption on the meta-losses beyond what is inherited from the oracle gradients. In particular, it does not require their second moments to be finite or a known uniform bound on their realized magnitudes. The master uses none of $G,\sigma,p$; the combined algorithm is parameter-free when the base learners also satisfy Definition~\ref{def:pf}.

\subsection{Alternative learning-rate arguments}

These observations explain why the particular proof strategy in the main text avoids two direct tuning arguments. They are not impossibility results for all alternative algorithms, and the corrected scheme below is not used by the main algorithm.

\begin{remark}[A limitation of the fixed-rate bound]
\label{rem:hedge-fails}
For a fixed positive $\varepsilon$, the expected value of $\ln M/\varepsilon+\varepsilon\sum_t\norm{m_t}_\infty^2$ need not be finite under \eqref{eq:heavy-tail} with $p<2$. A bound involving $\E\norm{g_t}^2$ cannot in general certify the desired rate. This is a limitation of that bound, not an impossibility result for all fixed-rate algorithms.
\end{remark}

\begin{remark}[The natural adaptive-rate shortcut is invalid]
\label{rem:naive}
It is tempting to run Hedge with $\varepsilon_t=\sqrt{a/W_t}$, $W_t=1+\sum_{s\le t}\norm{m_s}_\infty^2$, and to claim $\ln M/\varepsilon_T+\sum_t\varepsilon_t\norm{m_t}_\infty^2\le3\sqrt{aW_T}$. This does \emph{not} follow from the adaptive-FTRL inequality \citep[Thm.~7.35]{orabona2019modern}: with regularizer $\psi_{t+1}=\psi/\varepsilon_t$, the weight played at round $t$ is regularized by $\psi/\varepsilon_{t-1}$, so the stability term is $\sum_t\varepsilon_{t-1}\norm{m_t}_\infty^2$ (cf.\ the derivation in \citealp{zhao2024adaptivity}), and $\sum_t(W_t-W_{t-1})/\sqrt{W_{t-1}}$ is not $O(\sqrt{W_T})$: taking $\norm{m_1}_\infty^2=R$ and $m_t=0$ for $t\ge2$ it equals $R$ while $\sqrt{W_T}\approx\sqrt R$. This is the familiar obstruction to scale-free exponential weights, and mixability-gap tuning is what resolves it. Appendix~\ref{apd:repair} shows the $\varepsilon_t$ scheme is repairable at a worse constant, but the repaired algorithm is still not scale-free, which is why we use Algorithm~\ref{alg:adahedge}.
\end{remark}

\paragraph{A corrected bound.}
\label{apd:repair}

For completeness we record that the scheme criticized in Remark~\ref{rem:naive} is sound after a corrected analysis, at a worse constant. It is not used elsewhere.

\begin{lemma}
\label{lem:repair}
Let $W_t=1+\sum_{s\le t}\norm{m_s}_\infty^2$, $\varepsilon_t=\sqrt{a/W_t}$ with $a\ge\ln M+1$, and run FTRL on the simplex with regularizers $\psi_t=\phi/\varepsilon_{t-1}$, where $\phi(w)=\sum_iw_i\ln w_i+\ln M$ and $\varepsilon_0=\sqrt a$ (equivalently $w_{t+1,i}\propto\exp(-\varepsilon_t\sum_{s\le t}m_{s,i})$). Then for every $i$,
$\sum_t\inner{w_t,m_t}-\sum_tm_{t,i}\le10\sqrt{aW_T}$.
\end{lemma}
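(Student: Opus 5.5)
The plan is to run the standard adaptive-FTRL regret bound for entropic regularization with nondecreasing weights, using the convention that the weight played at round $t$ is regularized by $\phi/\varepsilon_{t-1}$, and then control the resulting stability sum with a lag correction. That correction is exactly what Remark~\ref{rem:naive} shows the naive argument misses. Let $a_t=\norm{m_t}_\infty^2$. Since $\varepsilon_t$ is nonincreasing, $1/\varepsilon_{t-1}$ is nondecreasing. Also $\phi\ge0$ on $\Delta_M$, $\phi(e_i)=\ln M$, and $\phi$ is $1$-strongly convex in $\norm{\cdot}_1$. Hence \citet[Thm.~7.35/Lemma~7.1]{orabona2019modern} gives
\[
\sum_t\inner{w_t-e_i,m_t}\le\frac{\ln M}{\varepsilon_{T}}+\sum_{t=1}^T\mathrm{stab}_t,
\]
where each stability term is bounded both by the quadratic estimate $\varepsilon_{t-1}a_t/2$ and by the linear estimate $2\norm{m_t}_\infty$, exactly as in \eqref{eq:delta-quad}--\eqref{eq:delta-lin}. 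The first term is $\ln M\sqrt{W_T/a}\le\sqrt{aW_T}$ because $a\ge\ln M$.

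The main step is bounding $\sum_t\min\{\tfrac12\sqrt{a/W_{t-1}}\,a_t,\;2\sqrt{a_t}\}$ by a constant times $\sqrt{aW_T}$. I split rounds by whether $a_t\le W_{t-1}$.

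On \emph{small} rounds, $W_t\le 2W_{t-1}$, so $a_t/\sqrt{W_{t-1}}\le\sqrt2\,a_t/\sqrt{W_t}\le 2\sqrt2(\sqrt{W_t}-\sqrt{W_{t-1}})$. This uses the standard inequality $(W_t-W_{t-1})/\sqrt{W_t}\le2(\sqrt{W_t}-\sqrt{W_{t-1}})$. These rounds therefore contribute at most $\sqrt2\sqrt a\sqrt{W_T}$.

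On \emph{large} rounds, $a_t>W_{t-1}$, so $W_t>2W_{t-1}$. I use the linear branch $2\sqrt{a_t}\le2\sqrt{W_t}$. Between consecutive large rounds, $W$ at least doubles. The values $\sqrt{W_t}$ over large rounds thus form a sequence growing geometrically by a factor of at least $\sqrt2$, and their sum is at most $\sqrt{W_T}/(1-1/\sqrt2)\le 3.5\sqrt{W_T}$. The large rounds then contribute at most $7\sqrt{W_T}\le7\sqrt{aW_T}$, using $a\ge1$; this is where the condition $a\ge\ln M+1$ is used.

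Adding the three pieces gives roughly $(1+\sqrt2+7)\sqrt{aW_T}\le10\sqrt{aW_T}$. The remaining point to verify is the FTRL bookkeeping. The regret bound for time-varying regularizers should contain $\psi_{T+1}(u)-\min\psi_1$, with $\psi_{T+1}=\phi/\varepsilon_T$, and each stability term must use the strong convexity parameter $1/\varepsilon_{t-1}$ of the regularizer active when $w_t$ is computed. I expect this to be the only delicate step, so I will restate Lemma~7.1 explicitly to confirm the indices. If the constant comes out tight, I will use the sharper geometric bound with ratio $\sqrt2$ rather than crude rounding.
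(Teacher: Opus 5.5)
Your proposal is correct and matches the paper's proof step for step. It uses the same FTRL reduction with stability bounded by $\min\{2\norm{m_t}_\infty,\tfrac{\varepsilon_{t-1}}2\norm{m_t}_\infty^2\}$, the same split according to whether $\norm{m_t}_\infty^2\le W_{t-1}$, and the same geometric-series treatment of the doubling rounds, arriving at the same total $(1+\sqrt2)\sqrt{aW_T}+7\sqrt{W_T}\le10\sqrt{aW_T}$. The index bookkeeping you flag works out as you expect: $\phi\ge0$ and $1/\varepsilon_{t-1}$ nondecreasing give $\psi_t\le\psi_{t+1}$, exactly as in \eqref{eq:psi-monotone}.
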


\begin{proof}
As in \eqref{eq:ftrl-eq}--\eqref{eq:delta-lin}, the regret is at most $\phi(e_i)/\varepsilon_T+\sum_t\delta_t$, where
\[
\delta_t\le\min\left\{2\norm{m_t}_\infty,\frac{\varepsilon_{t-1}}2\norm{m_t}_\infty^2\right\}.
\]
The first term is $\ln M/\varepsilon_T\le\sqrt{aW_T}$. Split $[T]$:
if $\norm{m_t}_\infty^2\le W_{t-1}$ then $W_t\le2W_{t-1}$, so $\varepsilon_{t-1}\le\sqrt2\varepsilon_t$ and
$\tfrac{\varepsilon_{t-1}}2\norm{m_t}_\infty^2\le\tfrac{\sqrt a}{\sqrt2}\tfrac{W_t-W_{t-1}}{\sqrt{W_t}}\le\sqrt{2a}(\sqrt{W_t}-\sqrt{W_{t-1}})$,
and these sum to at most $\sqrt{2aW_T}$; otherwise $W_t>2W_{t-1}$, and listing such rounds as $t_1<\cdots<t_r$ gives $W_{t_j}\le W_T2^{-(r-j)}$, so using $\delta_{t_j}\le2\norm{m_{t_j}}_\infty\le2\sqrt{W_{t_j}}$ the sum is at most $2\sqrt{W_T}\sum_{k\ge0}2^{-k/2}\le7\sqrt{W_T}$. Adding the three contributions and using $a\ge1$ gives $(1+\sqrt2)\sqrt{aW_T}+7\sqrt{W_T}\le10\sqrt{aW_T}$.
\end{proof}

Note that $W_T=1+\sum_t\norm{m_t}_\infty^2$ carries an additive constant of the wrong units, so Lemma~\ref{lem:repair} is not scale-free: multiplying all losses by $c>0$ changes the iterates. Algorithm~\ref{alg:adahedge} has no such defect, which is why Proposition~\ref{prop:scalefree} is available.

\section{Proofs for Section~\ref{sec:main}}
\label{apd:section-five-proofs}

This appendix contains the detailed proofs underlying the main dynamic-regret guarantee. Appendix~\ref{apd:proof-adagrad-block} proves Lemma~\ref{lem:adagrad-block}; Appendices~\ref{apd:proof-main-pathwise} and~\ref{apd:proof-main-rate} prove Theorem~\ref{thm:main-pathwise} and Corollary~\ref{cor:main}, respectively. Appendix~\ref{apd:additional-consequences} records scale invariance and simultaneous bounds across moment orders. The supporting linearization, moment, and master inequalities are collected in Appendix~\ref{apd:proofs-tools}.

\subsection{Moving-comparator block bound and its proof}
\label{apd:proof-adagrad-block}

Lemma~\ref{lem:adagrad-block} adapts the standard AdaGrad potential argument \citep{duchi2011adaptive,liu2025heavy} using the moving-comparator telescope of dynamic-regret analysis \citep{zinkevich2003online,zhang2018adaptive}. The only extra term relative to a fixed comparator is the change in squared distance when $u_{t-1}$ is replaced by $u_t$. We retain the local potential throughout; summing this bound over restart blocks and choosing their length in Section~\ref{sec:main} is what yields the improved noise path exponent. The elementary midpoint identity is not claimed as a new inequality.

\begin{proof}
Index the rounds of the block by $t=1,\ldots,m$. For any $u\in\X$, nonexpansiveness of $\Pi_\X$ applied to \eqref{eq:adagrad-update} gives
\begin{equation}
\label{eq:onestep}
\begin{gathered}
\inner{g_t,x_t-u}
\le
a_t\big(\norm{x_t-u}^2-\norm{x_{t+1}-u}^2\big)+\frac{\eta_t\norm{g_t}^2}{2},
\\
\text{where}\quad a_t:=\frac{1}{2\eta_t}=\frac{\sqrt{V_t}}{2\eta},
\end{gathered}
\end{equation}
and $a_t$ is nondecreasing because $V_t$ is. Fix $n\le m$ and split $\sum_{t\le n}\inner{g_t,x_t-u_t}\le T_1+T_2$ along \eqref{eq:onestep} with $u=u_t$.

\emph{The term $T_1$.} Regrouping,
\begin{align*}
T_1
=\;&
a_1\norm{x_1-u_1}^2-a_n\norm{x_{n+1}-u_n}^2
\\
&+\sum_{t=2}^n\Big(a_t\norm{x_t-u_t}^2-a_{t-1}\norm{x_t-u_{t-1}}^2\Big),
\end{align*}
and for $t\ge2$,
\begin{align*}
&a_t\norm{x_t-u_t}^2-a_{t-1}\norm{x_t-u_{t-1}}^2
\\
&\qquad=(a_t-a_{t-1})\norm{x_t-u_{t-1}}^2
\\
&\qquad\quad+a_t\big(\norm{x_t-u_t}^2-\norm{x_t-u_{t-1}}^2\big).
\end{align*}
For the last bracket, let $\bar u=\tfrac12(u_t+u_{t-1})\in\X$ by convexity; since $x_t\in\X$,
\begin{equation}
\label{eq:midpoint}
\begin{split}
\norm{x_t-u_t}^2-\norm{x_t-u_{t-1}}^2
&=\inner{2x_t-u_t-u_{t-1},\,u_{t-1}-u_t}
\\
&\le2\norm{x_t-\bar u}\norm{u_t-u_{t-1}}
\\
&\le2D\norm{u_t-u_{t-1}} .
\end{split}
\end{equation}
Using $\norm{x_t-u_{t-1}}^2\le D^2$, dropping the negative term and telescoping $a_1D^2+\sum_{t=2}^n(a_t-a_{t-1})D^2=a_nD^2$,
\begin{equation}
\label{eq:T1}
T_1\le a_nD^2+2D\sum_{t=2}^na_t\norm{u_t-u_{t-1}}\le a_n\big(D^2+2DP_B\big).
\end{equation}
(Inequality \eqref{eq:midpoint} is what improves the constant over the cruder $3D\norm{u_t-u_{t-1}}$ obtained from Cauchy--Schwarz and $\norm{u_t-u_{t-1}}\le D$.)

\emph{The term $T_2$.} Since $V_t=V_{t-1}+\norm{g_t}^2$ we have $\norm{g_t}^2/\sqrt{V_t}\le2(\sqrt{V_t}-\sqrt{V_{t-1}})$, so
\begin{equation}
\label{eq:T2}
T_2=\frac\eta2\sum_{t=1}^n\frac{\norm{g_t}^2}{\sqrt{V_t}}\le\eta\sqrt{V_n}.
\end{equation}

\emph{Combining.} With $\eta=D/\sqrt2$ we get $a_n=\sqrt{V_n}/(\sqrt2D)$, so
\begin{align*}
T_1+T_2
&\le\frac{\sqrt{V_n}}{\sqrt2}\big(D+2P_B\big)+\frac{D\sqrt{V_n}}{\sqrt2}
\\
&=\sqrt2\,\sqrt{V_n}\,(D+P_B)\;\le\;\sqrt2\,\sqrt{V_B}\,(D+P_B).
\end{align*}
Rounds with $V_t=0$ contribute $\inner{g_t,x_t-u_t}=0$ and may be discarded.
\end{proof}

\begin{remark}[Fixed comparator, arbitrary sub-interval]
\label{rem:subinterval}
If the comparator is a single point $x\in\X$, the same computation over an arbitrary sub-interval $[s',e']$ of the block telescopes completely: $T_1\le a_{e'}D^2$, $T_2\le\eta\sqrt{V_{e'}}$, so
\begin{equation}
\label{eq:subinterval}
\sum_{t=s'}^{e'}\inner{g_t,x_t-x}\le\sqrt2\,D\sqrt{V_B} .
\end{equation}
\end{remark}

\begin{remark}[Why restart]
\label{rem:restart}
Without restarts, AdaGrad's telescoping coefficient at time $t$ involves \emph{all} gradients seen so far, so an interval bound would carry $\sum_{t\le e}\norm{g_t}^2$ instead of the interval-local $V_B$. Restarting resets the potential; the per-block initialization cost is the $D\sqrt{V_B}$ term. The same telescoping argument, started at the left endpoint of a sub-interval, gives the fixed-comparator bound in Remark~\ref{rem:subinterval}. This does not follow merely by subtracting two prefix upper bounds.
\end{remark}

\subsection{Proof of Theorem~\ref{thm:main-pathwise}}
\label{apd:proof-main-pathwise}

This proof specializes the standard master/expert decomposition to the restart-length pool. Its inputs are the established adaptive-master inequality in Lemma~\ref{lem:adahedge} and the blockwise telescope proved in Appendix~\ref{apd:proof-adagrad-block}. We spell out their combination because all experts share the gradient queried at the mixture: no claim about an oracle queried at an individual expert is needed. The subsequent proof of Corollary~\ref{cor:main} identifies the restart-length calculation responsible for the noise path exponent.

\begin{proof}[Proof of Theorem~\ref{thm:main-pathwise}]
\medskip\noindent\textbf{Step 1: feasibility and predictability.}
At the start of round $t$, each expert iterate is a function of $x_0$, its deterministic restart schedule, and $g_1,\ldots,g_{t-1}$. At a restart it equals $x_0\in\X$; otherwise projection, or the zero-potential skip rule, keeps it in $\X$. The master weights are functions of previous meta-losses and lie in the probability simplex. It follows inductively that $x_t^{(i)}$, $w_t$, and $x_t=\sum_iw_{t,i}x_t^{(i)}$ are $\mathcal F_{t-1}$-measurable and $x_t\in\X$. All following pathwise calculations therefore concern the same realized sequence $g_1,\ldots,g_T$ and feasible expert predictions.

\medskip\noindent\textbf{Step 2: the centered meta-loss identity.}
By \eqref{eq:master} and $\sum_iw_{t,i}=1$,
\[
\inner{w_t,m_t}
=\sum_{i=0}^Nw_{t,i}\inner{g_t,x_t^{(i)}-x_t}
=\inner{g_t,\sum_{i=0}^Nw_{t,i}x_t^{(i)}-x_t}=0.
\]
For each expert $i$, this gives
\[
\inner{w_t,m_t}-m_{t,i}=\inner{g_t,x_t-x_t^{(i)}}.
\]
Moreover, $x_t,x_t^{(i)}\in\X$ and the diameter bound imply
\[
|m_{t,i}|\le\norm{g_t}\norm{x_t^{(i)}-x_t}\le D\norm{g_t},
\qquad
\norm{m_t}_\infty\le D\norm{g_t}.
\]
These are algebraic and geometric statements, independent of the oracle moment assumption.

\medskip\noindent\textbf{Step 3: one master term for every expert.}
Here $M=N+1\ge2$, and the master uses the uniform prior and $\alpha^2=\ln M$. Apply \eqref{eq:adahedge-uniform}, reindexing its experts as $0,\ldots,N$. The previous identities yield, simultaneously for all $i$,
\begin{equation}
\label{eq:main-proof-master}
\begin{split}
\sum_{t=1}^T\inner{g_t,x_t-x_t^{(i)}}
&\le2\sqrt{(4+\ln M)\sum_{t=1}^T\norm{m_t}_\infty^2}
\\
&\le2D\sqrt{(4+\ln M)V_{[T]}}.
\end{split}
\end{equation}
The master competes with a fixed expert over the full horizon; it is not restarted at that expert's block boundaries. Thus this term is paid only once, not once per block.

\medskip\noindent\textbf{Step 4: sum the expert's actual restart blocks.}
Fix $i$ and a block $B=\{s,\ldots,e\}\in\mathcal B(L_i)$. Its actual length is $m=e-s+1\le L_i$, including a possibly shorter final block. Within $B$, expert $i$ is exactly the AdaGrad trajectory in Lemma~\ref{lem:adagrad-block}, with local gradient sequence $g_s,\ldots,g_e$, local initial point $x_0$, and comparator $u_s,\ldots,u_e$. Apply that lemma with $n=m$ to obtain
\[
\sum_{t=s}^e\inner{g_t,x_t^{(i)}-u_t}
\le\sqrt2\,\sqrt{V_B}\left(D+\sum_{t=s+1}^e\norm{u_t-u_{t-1}}\right)
=\sqrt2\,(D+P_B)\sqrt{V_B}.
\]
When $V_B=0$, every gradient in the block is zero and both sides vanish. Summing over the disjoint blocks gives
\begin{equation}
\label{eq:main-proof-expert}
\sum_{t=1}^T\inner{g_t,x_t^{(i)}-u_t}
\le\sqrt2\sum_{B\in\mathcal B(L_i)}(D+P_B)\sqrt{V_B}.
\end{equation}
There is no omitted cross-boundary comparison: every round occurs in exactly one block. The new block's initial distance to its own first comparator is bounded by $D$ and is already charged by the $D\sqrt{V_B}$ term. Only movement internal to a block enters $P_B$.

\medskip\noindent\textbf{Step 5: combine and minimize pathwise.}
For each $i$, add \eqref{eq:main-proof-master} and \eqref{eq:main-proof-expert} using
\[
\inner{g_t,x_t-u_t}
=\inner{g_t,x_t-x_t^{(i)}}+\inner{g_t,x_t^{(i)}-u_t}.
\]
The first upper bound is independent of $i$. Taking the minimum of the second upper bounds therefore gives exactly \eqref{eq:main-pathwise}. The minimizing expert may depend on the realized gradients; this is legitimate because all of the expert inequalities hold simultaneously on that same realization. No expectation has been used.

\medskip\noindent\textbf{Step 6: pass to expected loss regret.}
Now fix the comparator sequence independently of the randomness, as in Section~\ref{sec:setup}. The moment assumption and conditional Jensen give $\E\norm{\epsilon_t}\le\sigma$, hence $\E\norm{g_t}\le G+\sigma$. All linearized regret sums are integrable because their absolute values are bounded by $D\sum_t\norm{g_t}$. The right-hand side of \eqref{eq:main-pathwise} is integrable as well: use $\sqrt{V_B}\le\sum_{t\in B}\norm{g_t}$ and $P_B\le P_T\le D(T-1)$.

Convexity and the predictability established in Step 1 give
\begin{align*}
\ell_t(x_t)-\ell_t(u_t)
&\le\inner{\nabla\ell_t(x_t),x_t-u_t}
\\
&=\E\big[\inner{g_t,x_t-u_t}\mid\mathcal F_{t-1}\big].
\end{align*}
Taking total expectations, summing over $t$, and then applying the pathwise inequality proves the final assertion of the theorem. This step uses only the played iterate and the fixed comparator. In particular, it does not apply conditional unbiasedness to a hindsight-selected expert or interchange an expectation with a minimum.
\end{proof}

\subsection{Proof of Corollary~\ref{cor:main} and the restart-length calculation}
\label{apd:proof-main-rate}

The preceding theorem is a combination of standard pathwise inequalities. The dynamic-rate calculation below uses its specific block-local form. The key distinction from the non-restarted argument in Theorem~\ref{thm:secondorder} is that the moment conversion produces $|B|^{1/p}$ on each block, with a movement coefficient $P_B$, rather than a full-horizon $T^{1/p}$ multiplied by $\sqrt{\Lam}$.

\begin{proof}[Proof of Corollary~\ref{cor:main}]
\medskip\noindent\textbf{Step 1: fix an expert before bounding expectations.}
For a fixed comparator sequence, $P_T$, $\Lam$, every partition $\mathcal B(L_i)$, and each coefficient $P_B$ are deterministic. From Theorem~\ref{thm:main-pathwise}, for \emph{any fixed} $i$,
\begin{equation}
\label{eq:main-fixed-expert-expectation}
\begin{split}
\E[\DRegret_T]\le\;&
\sqrt2\sum_{B\in\mathcal B(L_i)}(D+P_B)\E[\sqrt{V_B}]
\\
&+2D\sqrt{4+\ln M}\,\E[\sqrt{V_{[T]}}].
\end{split}
\end{equation}
This follows from $\min_j F_j\le F_i$ for the realized expert bounds $F_j$; it does not assert $\E[\min_j F_j]=\min_j\E[F_j]$. The proof will choose $i$ from the fixed comparator's path length, not from the oracle noise.

\medskip\noindent\textbf{Step 2: keep the block length in the moment conversion.}
For each deterministic block, Lemma~\ref{lem:sqrt-bound} gives
$\E[\sqrt{V_B}]\le G\sqrt{|B|}+\sigma|B|^{1/p}$.
Write $L=L_i$ and $K=\lceil T/L\rceil$. Since $|B|\le L$, including the last block, and the internal movement sums satisfy $\sum_BP_B\le P_T$, the expert term in \eqref{eq:main-fixed-expert-expectation} is at most
\begin{equation}
\label{eq:restart-length-expert}
\begin{split}
\sqrt2\sum_B(D+P_B)(G\sqrt{|B|}+\sigma|B|^{1/p})
&\le\sqrt2\,(G\sqrt L+\sigma L^{1/p})\left(DK+\sum_BP_B\right)
\\
&\le\sqrt2\,(G\sqrt L+\sigma L^{1/p})(DK+P_T).
\end{split}
\end{equation}
In particular, $K\le T/L+1$ separates the noise contribution into
\[
\sqrt2\,\sigma\left(DT L^{-(p-1)/p}+(D+P_T)L^{1/p}\right).
\]
The first term decreases with $L$ and is the accumulated restart cost. The second increases with $L$ and accounts for movement and the possible final block. Making their two displayed length factors equal gives $L=T/(1+P_T/D)$. We need only this constant-factor choice, not the exact minimizer, which can depend on $p$. The same choice balances the corresponding deterministic terms $GDT L^{-1/2}$ and $G(D+P_T)L^{1/2}$.

\medskip\noindent\textbf{Step 3: select an available length and track constants.}
The diameter bound implies $0\le P_T\le D(T-1)$ and $1\le\Lam\le T$. Thus
\[
i^\star=\left\lfloor\log_2(T/\Lam)\right\rfloor
\in\{0,\ldots,N\},
\qquad
\frac{T}{2\Lam}<L_{i^\star}\le\frac T\Lam.
\]
For $K_{i^\star}=\lceil T/L_{i^\star}\rceil$ we have
\[
K_{i^\star}<2\Lam+1\le3\Lam,
\qquad
DK_{i^\star}+P_T\le4D\Lam,
\]
where $P_T=D(\Lam-1)$ was used in the second bound. Substitute these inequalities into \eqref{eq:restart-length-expert} to obtain
\begin{align*}
\text{expected expert contribution}
&\le4\sqrt2\,D\Lam\left(G(T/\Lam)^{1/2}+\sigma(T/\Lam)^{1/p}\right)
\\
&=4\sqrt2\left(GD\sqrt{T\Lam}+\sigma DT^{1/p}\Lam^{1-1/p}\right).
\end{align*}
The equality $1-1/p=(p-1)/p$ identifies the claimed noise path exponent. No algorithmic choice used $G,\sigma,p$, or $P_T$: the expert indexed by $i^\star$ was already present, and the master competes with it by Theorem~\ref{thm:main-pathwise}.

\medskip\noindent\textbf{Step 4: add the full-horizon master cost.}
Apply Lemma~\ref{lem:sqrt-bound} to the deterministic set $[T]$:
\[
2D\sqrt{4+\ln M}\,\E[\sqrt{V_{[T]}}]
\le2D\sqrt{4+\ln M}\,(G\sqrt T+\sigma T^{1/p}).
\]
Combining this with Step 3 in \eqref{eq:main-fixed-expert-expectation} proves \eqref{eq:main-regret}, with exactly its displayed constants. Since $\Lam\ge1$ and $M=\lceil\log_2T\rceil+1$, the master introduces only the stated logarithmic overhead relative to the two leading terms.

\medskip\noindent\textbf{Step 5: the ceiling and endpoint cases.}
Proposition~\ref{prop:trivial} gives $\E[\DRegret_T]\le GDT$ for the same algorithm. Taking the smaller of this ceiling and \eqref{eq:main-regret} gives \eqref{eq:main-min}, with logarithmic factors suppressed as stated. All preceding steps remain valid at $P_T=0$ (choose a block length between $T/2$ and $T$), at $P_T=D(T-1)$ (choose $L=1$), and at $\sigma=0$. At $p=2$, the noise term is $\sigma D\sqrt{T\Lam}$. No factor such as $1/(p-1)$ appears, so the displayed constants stay bounded as $p\downarrow1$.
\end{proof}

\subsection{Additional consequences of scale adaptation}
\label{apd:additional-consequences}

The next proposition verifies that scale invariance of the component updates is preserved by the combined algorithm.

\begin{proposition}[Scale invariance]
\label{prop:scalefree}
Fix $c>0$ and let $\tilde\ell_t=c\ell_t$, so that the oracle returns $\tilde g_t=cg_t$. Then Algorithm~\ref{alg:ht-pader} run on $(\tilde\ell_t)$ produces exactly the same iterates $x_t$ as when run on $(\ell_t)$, and its regret bound \eqref{eq:main-regret} scales by $c$.
\end{proposition}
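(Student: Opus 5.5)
We argue by induction over rounds that every internal state of Algorithm~\ref{alg:ht-pader} run on $(\tilde\ell_t)$ is obtained from the corresponding state on $(\ell_t)$ by a fixed rescaling. We couple the two runs on the same probability space, with the same seed and noise realization, so that $\tilde g_t=cg_t$ whenever the two runs query at the same point. The induction hypothesis at the start of round $t$ has four parts. The expert iterates agree: $\tilde x_t^{(i)}=x_t^{(i)}$. The expert potentials scale quadratically: $\tilde V_{t-1}^{(i)}=c^2V_{t-1}^{(i)}$. The master's cumulative losses scale linearly, $\tilde S_{t-1}=cS_{t-1}$, and so does its regularization weight, $\tilde\lambda_t=c\lambda_t$. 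The base case is immediate, since at $t=1$ all experts sit at $x_0$, potentials are zero, $S_0=0$ and $\lambda_1=0$.

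For the inductive step we first treat the master. If $\lambda_t>0$, then $w_{t,i}\propto\pi_ie^{-S_{t-1,i}/\lambda_t}$ depends only on the ratio $S_{t-1}/\lambda_t$, which is unchanged. If $\lambda_t=0$, the same holds for the degenerate branch, where $w_t=\pi$ (equivalently, the $\lambda\downarrow0$ limit, which depends only on the argmin set of $S_{t-1}$ and is invariant under $c>0$). Hence $\tilde w_t=w_t$, so $\tilde x_t=x_t$. The oracle is then queried at the same point, giving $\tilde g_t=cg_t$ and $\tilde m_t=cm_t$.

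Next we check how $\delta_t$ transforms. In the case $\lambda_t=0$, the formula $\inner{\pi,m_t}-\min_im_{t,i}$ is positively homogeneous. In the case $\lambda_t>0$, we have
\[
\inner{w_t,cm_t}+c\lambda_t\ln\Big(\sum_iw_{t,i}e^{-cm_{t,i}/(c\lambda_t)}\Big)=c\,\delta_t .
\]
Consequently $\tilde\lambda_{t+1}=c\lambda_{t+1}$ and $\tilde S_t=cS_t$. In particular $\tilde\lambda_t=0$ if and only if $\lambda_t=0$, so both runs take the same branch.

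For the experts, restarts are deterministic in $t$ and therefore coincide. Within a block the potential satisfies $\tilde V_t^{(i)}=c^2V_t^{(i)}$, so the zero-potential skip occurs in exactly the same rounds, and otherwise
\[
\tilde\eta_t^{(i)}\tilde g_t=\frac{D}{\sqrt2\,c\sqrt{V_t^{(i)}}}\,cg_t=\eta_t^{(i)}g_t .
\]
The projected updates therefore coincide, which closes the induction.

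It remains to check that the bound scales by $c$. Rescaling the losses replaces $G$ by $cG$, which is the tight Lipschitz bound for $c\ell_t$, and replaces $\sigma$ by $c\sigma$, since $\tilde\epsilon_t=c\epsilon_t$. The quantities $D,T,\Lam,M,p$ are unchanged. Every term of \eqref{eq:main-regret} is linear in $(G,\sigma)$, so the bound is multiplied by $c$, consistent with $\DRegret_T$ itself scaling by $c$ because the iterates are identical. The main point needing care is the $\lambda_t=0$ branch and the associated limit convention, since the ratio argument is undefined there; we handle it by noting that both runs enter and leave that branch at the same rounds, because $\delta_t$ scales by $c>0$.
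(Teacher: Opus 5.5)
Your proof is correct and follows essentially the same route as the paper: a joint induction over rounds in which the expert potentials scale by $c^2$, so the AdaGrad steps are unchanged, and $(S_{t-1},\lambda_t)$ scale by $c$, so the AdaHedge weights, which depend only on $S_{t-1}/\lambda_t$, are unchanged. Your explicit treatment of the $\lambda_t=0$ branch and your justification that the bound scales because $G\mapsto cG$ and $\sigma\mapsto c\sigma$ spell out details the paper leaves implicit.
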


\begin{proof}
Both components are positively homogeneous. In \eqref{eq:expert-eta}, $V_t^{(i)}\mapsto c^2V_t^{(i)}$ and $\eta_t^{(i)}\mapsto\eta_t^{(i)}/c$, so the step $\eta_t^{(i)}\tilde g_t$ is unchanged. In Algorithm~\ref{alg:adahedge}, $m_t\mapsto cm_t$; by induction $\lambda_t\mapsto c\lambda_t$ and $S_{t-1}\mapsto cS_{t-1}$ (the base case is $\lambda_1=0$, $\delta_1=\inner{m_1,\pi}-\min_im_{1,i}\mapsto c\delta_1$), and $w_t$ depends on $S_{t-1}/\lambda_t$ only, hence is unchanged. The two inductions interlock because each layer's inputs are the other's outputs.
\end{proof}

\paragraph{Different moment orders.}

This consequence follows from the fact that the main algorithm does not receive $(p,\sigma)$. It requires no additional estimator or tuning mechanism, so we record it as a consequence rather than a new component of the algorithm.

\begin{corollary}[Simultaneous bounds for all admissible moment orders]
\label{cor:moment-curve}
For $q\in(1,2]$ let $\sigma_q:=\sup_t\operatorname{ess\,sup}(\E[\norm{\epsilon_t}^q\mid\mathcal F_{t-1}])^{1/q}\in[0,\infty]$. Since Algorithm~\ref{alg:ht-pader} does not depend on $(p,\sigma)$, Corollary~\ref{cor:main} holds simultaneously for every $q$ with $\sigma_q<\infty$, so
\begin{equation}
\label{eq:moment-curve}
\E[\DRegret_T]
=
\widetilde O\Big(GD\sqrt{T\Lam}+\inf_{q\in(1,2]}\sigma_qDT^{1/q}\Lam^{\frac{q-1}{q}}\Big).
\end{equation}
\end{corollary}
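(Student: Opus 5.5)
The plan is to reuse the proof of Corollary~\ref{cor:main} once for each admissible $q$. The key point is that no step of the upper-bound argument depends on the algorithm knowing $(p,\sigma)$. The first observation is that Algorithm~\ref{alg:ht-pader} is defined without reference to $p$ or $\sigma$. Hence its iterates, the filtration $\mathcal F_t$, and the pathwise inequality \eqref{eq:main-pathwise} of Theorem~\ref{thm:main-pathwise} are the same objects whichever moment order we use in the analysis. Theorem~\ref{thm:main-pathwise} uses only convexity, predictability, unbiasedness \eqref{eq:unbiased} and the bound $\norm{\nabla\ell_t}\le G$. Its integrability step needs only $\E\norm{\epsilon_t}<\infty$, which follows from any finite $\sigma_q$ by conditional Jensen. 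So the pathwise bound and the statement that $\E[\DRegret_T]$ is at most the expectation of its right-hand side hold with no reference to $q$.

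Next, fix any $q\in(1,2]$ with $\sigma_q<\infty$. By the definition of $\sigma_q$ as a supremum of essential suprema, $\E[\norm{\epsilon_t}^q\mid\mathcal F_{t-1}]\le\sigma_q^q$ almost surely for every $t$. This is exactly assumption \eqref{eq:heavy-tail} with the pair $(q,\sigma_q)$. Lemmas~\ref{lem:moment} and~\ref{lem:sqrt-bound} then apply with $(p,\sigma)$ replaced by $(q,\sigma_q)$, giving $\E\sqrt{V_B}\le G\sqrt{|B|}+\sigma_q|B|^{1/q}$ for every deterministic block $B$ and for $[T]$. Running Steps 1--4 of the proof of Corollary~\ref{cor:main} verbatim yields \eqref{eq:main-regret} with $(q,\sigma_q)$ in place of $(p,\sigma)$. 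The expert index $i^\star$ is chosen from $\Lam$ alone and the constants are universal, so this bound holds with the same constants for each such $q$.

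Finally, the left-hand side $\E[\DRegret_T]$ does not depend on $q$, so it is bounded by the infimum over admissible $q$ of the right-hand sides. The terms that do not involve $q$ are $4\sqrt2\,GD\sqrt{T\Lam}$ and $2D\sqrt{4+\ln M}\,G\sqrt T$. The $q$-dependent part is at most $(4\sqrt2+2\sqrt{4+\ln M})\,\sigma_qDT^{1/q}\Lam^{(q-1)/q}$, using $T^{1/q}\le T^{1/q}\Lam^{(q-1)/q}$ since $\Lam\ge1$. Taking the infimum therefore gives \eqref{eq:moment-curve}, with a logarithmic factor $\sqrt{4+\ln M}$ uniform in $q$. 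Values of $q$ with $\sigma_q=\infty$ contribute $+\infty$ and do not affect the infimum. If no $q$ is admissible, the statement is vacuous, or it reads trivially with the right-hand side infinite.

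The only step that needs care is the second: identifying the essential-supremum definition of $\sigma_q$ with the conditional moment hypothesis \eqref{eq:heavy-tail}, and checking that nothing in the chain of lemmas uses more than this hypothesis for the particular $q$. I expect this to be routine. There is also no measurability issue with the infimum, because it is taken over deterministic upper bounds, not inside an expectation.
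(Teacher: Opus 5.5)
Your proposal is correct and follows the paper's own argument, which is stated in one line: Algorithm~\ref{alg:ht-pader} never uses $(p,\sigma)$, so Corollary~\ref{cor:main} applies with $(q,\sigma_q)$ for every $q$ with $\sigma_q<\infty$, and because its constants are universal you may take the infimum over $q$. The details you add are exactly what the paper leaves implicit: that $\sigma_q<\infty$ is hypothesis \eqref{eq:heavy-tail} for the pair $(q,\sigma_q)$, that $i^\star$ depends only on $\Lam$, and that the master's $\sigma_qT^{1/q}$ term is absorbed using $\Lam\ge1$.
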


Corollary~\ref{cor:moment-curve} matters because gradient noise does not arrive labelled with a certified pair $(p,\sigma)$: the analysis gives the smallest bound over all admissible moment orders and their corresponding noise bounds, and no tuning is involved.

\section{Algorithm with a Known Gradient Bound}
\label{apd:knownG}

This appendix gives a known-$G$ benchmark rather than an additional main contribution. It combines the bounded-domain one-step estimate of \citet{liu2025heavy} with the moving-comparator OGD telescope used in deterministic dynamic-regret analysis \citep{zinkevich2003online,zhang2018adaptive}. This isolates what an adaptive master can achieve when the gradient scale is supplied. The main algorithm improves the information requirement by replacing this scale-dependent grid with restarted AdaGrad; it does not rely on the one-step estimate below. The proof separates the inherited inequality, the usual stepsize tradeoff, and the finite-grid check needed to make the benchmark valid for all noise scales.

\begin{lemma}[\citealp{liu2025heavy}]
\label{lem:liu}
Under the setup of Section~\ref{sec:setup}, for OGD with any stepsize $\eta>0$ and any $x\in\X$,
\begin{equation}
\label{eq:liu-one-step}
\begin{gathered}
\inner{g_t,x_t-x}
\le
\frac{\norm{x_t-x}^2-\norm{x_{t+1}-x}^2}{2\eta}
\\
+\;\eta G^2+C(p)\,\eta^{p-1}\norm{\epsilon_t}^pD^{2-p},
\\
\text{where}\quad C(p)=\frac{(4p-4)^{(p-1)/p}}{p} .
\end{gathered}
\end{equation}
\end{lemma}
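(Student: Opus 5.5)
The plan is to start from the exact one-step OGD relation and split the gradient term. Write $y_{t+1}=x_t-\eta g_t$ and $x_{t+1}=\Pi_\X(y_{t+1})$. For any $x\in\X$, nonexpansiveness of the projection gives
\[
\inner{g_t,x_t-x}\le\frac{\norm{x_t-x}^2-\norm{x_{t+1}-x}^2}{2\eta}+\frac{\eta}{2}\norm{g_t}^2 .
\]
This crude form is useless under heavy tails, because $\norm{g_t}^2$ has no finite mean. I will therefore not expand $\norm{g_t}^2$. Instead I will keep the sharper three-point identity
\[
\inner{g_t,x_t-x_{t+1}}-\frac{\norm{x_t-x_{t+1}}^2}{2\eta}
\]
as the remainder, since FTRL/mirror-descent with projection gives
\[
\inner{g_t,x_t-x}\le\frac{\norm{x_t-x}^2-\norm{x_{t+1}-x}^2}{2\eta}+\inner{g_t,x_t-x_{t+1}}-\frac{\norm{x_t-x_{t+1}}^2}{2\eta}.
\]
Next, decompose $g_t=\nabla\ell_t(x_t)+\epsilon_t$ inside this remainder, and let $r=\norm{x_t-x_{t+1}}\le D$.

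For the deterministic part I use half of the quadratic: $\inner{\nabla\ell_t(x_t),x_t-x_{t+1}}-r^2/(4\eta)\le Gr-r^2/(4\eta)\le\eta G^2$. For the noise part I use the other half, $\norm{\epsilon_t}r-r^2/(4\eta)$, together with the constraint $r\le D$.

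The key step is the elementary bound
\[
\sup_{0\le r\le D}\Big(ar-\frac{r^2}{4\eta}\Big)\le C(p)\,\eta^{p-1}a^pD^{2-p},\qquad a=\norm{\epsilon_t}.
\]
I will prove it by interpolating the two regimes.
\begin{itemize}
\item In the first regime the unconstrained maximizer $r^\star=2\eta a$ is at most $D$. The value is then $\eta a^2=\eta^{p-1}a^p(\eta a)^{2-p}\le\eta^{p-1}a^pD^{2-p}/2^{2-p}$.
\item Otherwise $\eta a>D/2$, and the value is at most $aD$. Then $aD=a^pD^{2-p}(D/a)^{p-1}\cdot\ldots$, which is bounded via $D/a<2\eta$, giving $aD\le 2^{p-1}\eta^{p-1}a^pD^{2-p}$.
\end{itemize}
A cleaner route to the exact constant is the weighted Young inequality. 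Write $ar=(a r^{2/p-1})\cdot r^{2-2/p}$, apply Young with exponents $p$ and $p/(p-1)$, use $r^{2-p}\le D^{2-p}$, and absorb the $r^2$ term into $r^2/(4\eta)$. Optimizing the Young weight yields $C(p)=(4p-4)^{(p-1)/p}/p$.

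The expected obstacle is matching the precise constant $C(p)$. I would first derive any constant via the two-regime argument, then redo the computation with Young's inequality, choosing the weight so that the quadratic coefficient is exactly $1/(4\eta)$. Since the lemma is attributed to \citet{liu2025heavy}, I would also cross-check the constant against their statement. Note that only pathwise inequalities are used, so no moment assumption enters here beyond the definition of $\epsilon_t$.
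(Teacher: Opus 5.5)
Your argument follows the paper's proof: the projection three-point inequality, the split $g_t=\nabla\ell_t(x_t)+\epsilon_t$, and half of $\norm{x_t-x_{t+1}}^2/(2\eta)$ absorbed by each part, using AM--GM for the $G$ term and Young's inequality with $\norm{x_t-x_{t+1}}\le D$ for the noise term. One correction to the constant: fixing the $r^2$ coefficient at $1/(4\eta)$ in your Young step gives $(4p-4)^{p-1}/p^p=C(p)^p$, not $C(p)$; this is the sharp constant in $\sup_{0\le r\le D}(ar-r^2/(4\eta))\le c\,\eta^{p-1}a^pD^{2-p}$, and it yields the lemma as stated once you note $C(p)\le1$ (from $4(p-1)\le p^2$ and $p\le2$, as in the paper), so that $C(p)^p\le C(p)$, whereas your two-regime constant $2^{p-1}$ exceeds $C(p)$ and does not give the stated constant.
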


\begin{proof}
By the optimality condition of $x_{t+1}=\Pi_\X(x_t-\eta g_t)$,
$\inner{g_t,x_{t+1}-x}\le(\norm{x_t-x}^2-\norm{x_{t+1}-x}^2-\norm{x_t-x_{t+1}}^2)/(2\eta)$, so
\begin{align*}
\inner{g_t,x_t-x}
\le\;&\frac{\norm{x_t-x}^2-\norm{x_{t+1}-x}^2}{2\eta}
\\
&+\inner{g_t,x_t-x_{t+1}}-\frac{\norm{x_t-x_{t+1}}^2}{2\eta}.
\end{align*}
Split $g_t=\nabla\ell_t(x_t)+\epsilon_t$. Cauchy--Schwarz and AM--GM give
\[
\inner{\nabla\ell_t(x_t),x_t-x_{t+1}}\le\eta G^2+\frac{\norm{x_t-x_{t+1}}^2}{4\eta},
\]
and Cauchy--Schwarz, Young's inequality and $\norm{x_t-x_{t+1}}\le D$ give
\[
\inner{\epsilon_t,x_t-x_{t+1}}\le C(p)\eta^{p-1}\norm{\epsilon_t}^pD^{2-p}+\frac{\norm{x_t-x_{t+1}}^2}{4\eta}.
\]
The displayed coefficient satisfies $C(p)\le1$: since $4(p-1)\le p^2$, we have $[4(p-1)]^{(p-1)/p}\le p^{2(p-1)/p}\le p$. Adding the two estimates and absorbing both quadratic terms into the
term $-\norm{x_t-x_{t+1}}^2/(2\eta)$ gives \eqref{eq:liu-one-step}.
\end{proof}

The known-$G$ algorithm runs Algorithm~\ref{alg:adahedge} over a geometric pool of fixed-step OGD experts,
\begin{equation}
\label{eq:knownG-etas}
\begin{gathered}
\eta_i=2^{i-1}\frac{D}{4GT},
\qquad i=1,\ldots,N_G,
\\
N_G=\lceil\log_2(4T)\rceil+1,
\end{gathered}
\end{equation}
with $x_{t+1}^{(i)}=\Pi_\X(x_t^{(i)}-\eta_ig_t)$ and the master \eqref{eq:master}, reindexed over $i=1,\ldots,N_G$. Here $G>0$ is known, and the pool covers $[D/(4GT),D/G]$ within a factor of two. If $G=0$, all losses are constant on $\X$ and regret is zero.

\begin{theorem}[Known-$G$ dynamic regret]
\label{thm:known-G}
The known-$G$ algorithm guarantees
\begin{equation}
\label{eq:knownG-bound}
\begin{split}
\E[\DRegret_T]\lesssim_p\;&GD\sqrt{T\Lam}+\sigma DT^{1/p}\Lam^{\frac{p-1}{p}}
\\
&+D\big(G\sqrt T+\sigma T^{1/p}\big)\sqrt{1+\ln(1+\ln T)} .
\end{split}
\end{equation}
\end{theorem}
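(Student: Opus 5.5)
The plan has three steps: a dynamic bound for one fixed-step expert, Lemma~\ref{lem:transport} to pay for the master, and a pool-index choice in the analysis. The hypothesis of Lemma~\ref{lem:transport} is phrased in $\sqrt{V_B}$, which does not fit a fixed-step OGD bound. So I will not use it directly. Instead I repeat its decomposition: the centered meta-loss identity and Lemma~\ref{lem:adahedge} give, pathwise and for every $i$,
\[
\sum_t\inner{g_t,x_t-x_t^{(i)}}\le2D\sqrt{(4+\ln N_G)V_{[T]}}.
\]
Its expectation is at most $2D\sqrt{4+\ln N_G}\,(G\sqrt T+\sigma T^{1/p})$ by Lemma~\ref{lem:sqrt-bound}. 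Since $N_G=O(\log T)$, $\sqrt{4+\ln N_G}\lesssim\sqrt{1+\ln(1+\ln T)}$, which gives the last term of \eqref{eq:knownG-bound}. Linearization (Lemma~\ref{lem:linearization}) handles the played point, and for a fixed $i$ we only need $\E$ of that expert's linearized dynamic regret.

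\textbf{Single expert with step $\eta$.} Apply Lemma~\ref{lem:liu} with $x=u_t$ and sum over $t$. The telescope with a moving comparator uses the midpoint bound \eqref{eq:midpoint}: with $a=1/(2\eta)$ constant,
\[
\sum_t\frac{\norm{x_t-u_t}^2-\norm{x_{t+1}-u_t}^2}{2\eta}\le\frac{D^2+2DP_T}{2\eta}.
\]
Taking expectations and using $\E\norm{\epsilon_t}^p\le\sigma^p$ via the tower rule gives
\[
\E\Big[\sum_t\inner{g_t,x_t^{(i)}-u_t}\Big]\le\frac{D^2+2DP_T}{2\eta}+\eta G^2T+C(p)\eta^{p-1}\sigma^pD^{2-p}T.
\]
Here $D^2+2DP_T\le2D^2\Lam$. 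Unlike the main algorithm, the heavy-tailed term is in expectation with no square root, which is fine because $\eta$ is deterministic.

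\textbf{Choosing $\eta$ (the main step).} Set $\eta^\star=\min\{\eta_G,\eta_\sigma\}$, where $\eta_G=D\sqrt{\Lam/T}/G$ balances $D^2\Lam/\eta$ against $\eta G^2T$. The other value, $\eta_\sigma=(D/\sigma)(\Lam/T)^{1/p}$, balances $D^2\Lam/\eta$ against $\eta^{p-1}\sigma^pD^{2-p}T$, since $\eta_\sigma^p\sigma^pD^{-p}T=\Lam$. At $\eta^\star$ each of the two "$\eta$-increasing" terms is at most its value at its own balancing point. The $1/\eta$ term is at most the sum of its values at $\eta_G$ and $\eta_\sigma$. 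Together these give
\[
GD\sqrt{T\Lam}+\sigma DT^{1/p}\Lam^{(p-1)/p}
\]
up to constants depending on $p$ through $C(p)\le1$.

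The main obstacle is the grid coverage. We have $\eta_G\in[D/(GT)\cdot\sqrt{1/T}\cdot\sqrt{T},\,D/G]=[D/(G\sqrt T),D/G]$ since $1\le\Lam\le T$, but $\eta_\sigma$ may be arbitrarily small when $\sigma/G$ is large. I would handle this by cases.

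\emph{Case $\eta^\star\ge D/(4GT)$.} Since $\eta^\star\le\eta_G\le D/G$, some grid step $\eta_i\in[\eta^\star/2,\eta^\star]$ exists, and halving costs only a factor $2$ in the $1/\eta$ term.

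\emph{Case $\eta^\star<D/(4GT)$.} Use the smallest step $\eta_1=D/(4GT)$. Its $1/\eta$ term is at most $4GDT\Lam$ — too big. So in this case I instead bound the expert regret by the trivial ceiling (Proposition~\ref{prop:trivial}) and show that the right side dominates $GDT$. Indeed, $\eta_\sigma<D/(4GT)$ means $\sigma(\Lam/T)^{-1/p}>4GT$, so $\sigma DT^{1/p}\Lam^{(p-1)/p}=\sigma DT(\Lam/T)^{(p-1)/p}\ge\sigma DT(\Lam/T)\cdot(\Lam/T)^{-1/p}>4GDT\Lam\ge GDT$.

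Since the played iterate's regret is always $\le GDT$, the bound holds trivially in this case. Otherwise, combining the grid expert with the master term gives \eqref{eq:knownG-bound}.
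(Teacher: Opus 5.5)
Your proposal is correct and takes essentially the paper's route: Lemma~\ref{lem:liu} with the moving-comparator telescope, a step size equal to the smaller of the two balancing points (the candidate of Lemma~\ref{lem:optimize}), coverage by the geometric grid, the trivial ceiling of Proposition~\ref{prop:trivial} when the noise is very large, and the AdaHedge master term bounded through Lemma~\ref{lem:sqrt-bound}. The only difference is cosmetic: you split cases on whether $\eta_\sigma$ falls below the smallest grid step $D/(4GT)$, whereas the paper uses the $\Lam$-independent threshold $\sigma\ge GT^{(p-1)/p}$ and then checks that both candidate steps lie above $D/(4GT)$; both splits are valid.
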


\begin{proof}[Proof of Theorem~\ref{thm:known-G}]
\medskip\noindent\textbf{Step 1: the master bound and shared-feedback model.}
The meta-regret is bounded as in \eqref{eq:meta-regret} and Lemma~\ref{lem:sqrt-bound}, giving
\[
2D\sqrt{4+\ln N_G}\big(G\sqrt T+\sigma T^{1/p}\big),
\]
where $N_G=O(\log T)$. Every expert uses the common linearized loss $x\mapsto\inner{\nabla\ell_t(x_t),x}$, where $x_t$ is the played mixture. Its gradient is bounded by $G$, and $g_t$ is an unbiased estimate with the same noise bound. Thus Lemma~\ref{lem:liu} applies even though the oracle is queried only at the mixture.

\medskip\noindent\textbf{Step 2: the moving-comparator expert bound.}
Summing \eqref{eq:liu-one-step} over $t$ with comparator $u_t$, using the moving-comparator telescope
$\sum_t(\norm{x_t^{(i)}-u_t}^2-\norm{x_{t+1}^{(i)}-u_t}^2)/(2\eta_i)\le(D^2+2DP_T)/(2\eta_i)$
(which follows from \eqref{eq:midpoint}) and taking expectations with \eqref{eq:heavy-tail},
\[
\E\Big[\sum_{t=1}^T\inner{g_t,x_t^{(i)}-u_t}\Big]\le\frac A{\eta_i}+B\eta_i+C\eta_i^{p-1},
\]
where $A=\tfrac12(D^2+2DP_T)$, $B=G^2T$ and $C=C(p)D^{2-p}\sigma^pT$. Lemma~\ref{lem:optimize} bounds the minimum over $\eta$ of the right-hand side by $\lesssim_p\sqrt{AB}+A^{(p-1)/p}C^{1/p}$, and
\[
\sqrt{AB}\le GD\sqrt{T\Lam},
\qquad
A^{\frac{p-1}{p}}C^{1/p}\lesssim_p\sigma DT^{1/p}\Lam^{\frac{p-1}{p}} .
\]
\medskip\noindent\textbf{Step 3: justify the finite grid, including large noise.}
It remains to justify the finite stepsize range uniformly in $\sigma$. When $\sigma\ge GT^{(p-1)/p}$, the target noise term is at least $GDT$, and Proposition~\ref{prop:trivial} proves the claim directly. Otherwise, $C(p)\le1$ implies $C\le D^{2-p}G^pT^p$. Use the candidate from Lemma~\ref{lem:optimize},
\[
\eta^\star=\min\left\{\sqrt{A/(2B)},(A/(2C))^{1/p}\right\},
\]
with the second term interpreted as $+\infty$ when $C=0$. Since $D^2/2\le A\le D^2T$, both terms are at least $D/(4GT)$ and the first is at most $D/G$. Hence the pool contains $\eta_i\in[\eta^\star/2,\eta^\star]$. Replacing $\eta^\star$ by this $\eta_i$ at most doubles the inverse-stepsize term and does not increase the other two terms. Thus the expected expert contribution is bounded by a universal multiple of the candidate value in Lemma~\ref{lem:optimize}. Combining Step 2 with that lemma gives the first two terms of \eqref{eq:knownG-bound}.

\medskip\noindent\textbf{Step 4: conclude the expected loss-regret bound.}
Add the master term from Step 1 and use Lemma~\ref{lem:linearization} for the fixed comparator. Since $N_G=O(\log T)$, its factor $\sqrt{4+\ln N_G}$ has the order displayed in \eqref{eq:knownG-bound}. This completes the proof without assuming that the unrestricted minimizer stays above a noise-independent threshold.
\end{proof}

\begin{lemma}[Optimization lemma]
\label{lem:optimize}
For $A,B>0$, $C\ge0$, and $p\in(1,2]$,
$\min_{\eta>0}(\tfrac A\eta+B\eta+C\eta^{p-1})\lesssim_p\sqrt{AB}+A^{(p-1)/p}C^{1/p}$.
\end{lemma}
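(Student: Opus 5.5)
To prove Lemma~\ref{lem:optimize}, I will exhibit a single good stepsize rather than compute the exact minimizer. The plan is to balance the inverse-stepsize term against each of the other two terms separately and take the smaller of the two balancing points. Set
\[
\eta_1=\sqrt{A/B},\qquad \eta_2=(A/C)^{1/p}\ (\eta_2=+\infty\text{ if }C=0),\qquad \eta^\star=\min\{\eta_1,\eta_2\}.
\]
This is the same kind of candidate used in Step~3 of Theorem~\ref{thm:known-G}, up to constants. Since the minimum over $\eta>0$ is at most the value at $\eta^\star$, it suffices to bound $f(\eta^\star)=A/\eta^\star+B\eta^\star+C(\eta^\star)^{p-1}$.

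First bound the two increasing terms. The maps $\eta\mapsto B\eta$ and $\eta\mapsto C\eta^{p-1}$ are nondecreasing because $p-1>0$. Hence $B\eta^\star\le B\eta_1=\sqrt{AB}$. Likewise $C(\eta^\star)^{p-1}\le C\eta_2^{p-1}=C\cdot(A/C)^{(p-1)/p}=A^{(p-1)/p}C^{1/p}$, which is trivially $0$ when $C=0$.

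Next bound the decreasing term. Here $A/\eta^\star=\max\{A/\eta_1,A/\eta_2\}$. The first option is $A/\eta_1=\sqrt{AB}$. The second is $A/\eta_2=A^{1-1/p}C^{1/p}=A^{(p-1)/p}C^{1/p}$, or $0$ if $C=0$. So $A/\eta^\star\le\sqrt{AB}+A^{(p-1)/p}C^{1/p}$.

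Summing the three bounds gives $f(\eta^\star)\le 2\bigl(\sqrt{AB}+A^{(p-1)/p}C^{1/p}\bigr)$. The constant is in fact universal, which is stronger than $\lesssim_p$.

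There is no real obstacle here. The only care points are the degenerate case $C=0$, handled by reading $\eta_2=+\infty$, and using $p>1$ so that $\eta^{p-1}$ is monotone.
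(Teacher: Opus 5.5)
Your proof is correct and follows the paper's approach: evaluate the objective at $\eta^\star=\min\{\eta_1,\eta_2\}$, the smaller of the two balancing stepsizes. The paper uses $\sqrt{A/(2B)}$ and $(A/(2C))^{1/p}$, which differ from yours only by constant factors. The paper splits into the cases $\eta^\star=\eta_1$ and $\eta^\star=\eta_2$, and in the first case it bounds $C\eta_1^{p-1}$ through a derived inequality on $C$. Your monotonicity argument covers both cases at once and gives the explicit universal constant $2$, which confirms that the $\lesssim_p$ can be taken universal.
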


\begin{proof}
If $C=0$, choose $\eta=\sqrt{A/B}$. Otherwise let $\eta_1=\sqrt{A/(2B)}$, $\eta_2=(A/(2C))^{1/p}$, $\eta^\star=\eta_1\wedge\eta_2$.

\emph{Case $\eta^\star=\eta_1$.} Then $A/\eta_1+B\eta_1\lesssim\sqrt{AB}$, and $\eta_1\le\eta_2$ implies $C\le B^{p/2}(A/2)^{1-p/2}$, whence
\[
C\eta_1^{p-1}\le B^{p/2}\Big(\frac A2\Big)^{1-p/2}\Big(\frac A{2B}\Big)^{\frac{p-1}2}\lesssim\sqrt{AB}.
\]

\emph{Case $\eta^\star=\eta_2$.} Then $A/\eta_2+C\eta_2^{p-1}\lesssim_pA^{(p-1)/p}C^{1/p}$, and $\eta_2\le\eta_1$ gives $B\eta_2\le B\sqrt{A/(2B)}\lesssim\sqrt{AB}$.
\end{proof}

Comparing \eqref{eq:knownG-bound} with Corollary~\ref{cor:main}: the known-$G$ algorithm attains the same rate but needs $G$ to build its pool, and its constants depend on $p$ through $C(p)$ and Lemma~\ref{lem:optimize}. Algorithm~\ref{alg:ht-pader} removes the need to know $G$ and provides the explicit universal constants in \eqref{eq:main-regret}.

\section{Proof of the Lower Bound}
\label{apd:lower}

The proof uses a standard two-distribution testing inequality and a blocked comparator, as in minimax analyses of online learning. The deterministic path-length construction of \citet{zhang2018adaptive} and the heavy-tailed static-regret setting of \citet{liu2025heavy} provide the relevant comparisons. The calibration specific to Theorem~\ref{thm:lower} is the simultaneous choice of block count and oracle amplitude: it produces the exponent $(p-1)/p$ while satisfying both the moment and Lipschitz constraints. We first isolate the elementary testing inequality, then verify the calibration, path budget, and fixed-comparator reduction explicitly.

Throughout this appendix $\X=[-D/2,D/2]$, so $\mathrm{diam}(\X)=D$ and $|u_t-u_{t-1}|\le D$, consistently with Section~\ref{sec:setup}.

\begin{lemma}[Abstract block regret lower bound]
\label{lem:block_regret}
Fix $D,Z,\delta>0$ and a positive integer $L$. For each hidden sign $\nu\in\{-1,+1\}$ define the comparator $u_t^{(\nu)}=-\tfrac D2\nu$ and the deterministic linear loss $\ell_t^{(\nu)}(x)=\mu_\nu x$ with $\mu_\nu=2Z\delta\nu$. Let $\mathcal T_L$ be a measurable space of complete $L$-round interaction transcripts and, for $\nu\in\{-1,+1\}$, let $P_\nu^L$ be a probability measure on $\mathcal T_L$; a transcript $\tau$ determines decisions $x_t(\tau)\in[-\tfrac D2,\tfrac D2]$. (The measures may arise from learner randomness, environment randomness, or both.) With
$R_\nu=\E_{\tau\sim P_\nu^L}\big[\sum_{t=1}^L(\ell_t^{(\nu)}(x_t(\tau))-\ell_t^{(\nu)}(u_t^{(\nu)}))\big]$,
\begin{equation}
\label{eq:block-regret-lower}
R_++R_-\ \ge\ 2DZ\delta L\big(1-\TV(P_+^L,P_-^L)\big).
\end{equation}
\end{lemma}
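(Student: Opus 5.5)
We compute the per-round regret exactly, express the sum of the two regrets as a single pointwise inequality, and then apply the standard two-point testing bound.

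\emph{Per-round regret.} For sign $\nu$, the comparator is $u_t^{(\nu)}=-\tfrac D2\nu$, so
\[
\ell_t^{(\nu)}(x)-\ell_t^{(\nu)}(u_t^{(\nu)})=2Z\delta\nu x+Z\delta D=2Z\delta\big(\nu x+\tfrac D2\big).
\]
For $\nu=+1$ this equals $2Z\delta(x+\tfrac D2)$, and for $\nu=-1$ it equals $2Z\delta(\tfrac D2-x)$. Both are nonnegative on $\X$ and they sum to $2Z\delta D$.

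\emph{Reduction to transcript functions.} Define the transcript function $f(\tau)=\sum_{t=1}^L(x_t(\tau)+\tfrac D2)\in[0,DL]$. Then $R_+=2Z\delta\,\E_{P_+^L}[f]$ and $R_-=2Z\delta\,\E_{P_-^L}[DL-f]$. It remains to show
\[
\E_{P_+}[f]+\E_{P_-}[DL-f]\ge DL\big(1-\TV(P_+,P_-)\big).
\]

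\emph{Testing step.} Let $\mu$ dominate both measures, for example $\mu=P_++P_-$, and write $p_\pm$ for the corresponding densities. Then
\[
\E_{P_+}[f]+\E_{P_-}[DL-f]=\int\big(fp_++(DL-f)p_-\big)\,d\mu\ge\int DL\min(p_+,p_-)\,d\mu,
\]
because $0\le f\le DL$ makes the integrand a convex combination of $p_+$ and $p_-$ with weights $f/(DL)$ and $1-f/(DL)$. The right-hand side equals $DL(1-\TV)$, since $\int\min(p_+,p_-)\,d\mu=1-\TV(P_+,P_-)$. Multiplying by $2Z\delta$ gives \eqref{eq:block-regret-lower}.

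The only point requiring care is measure-theoretic. We need measurability of $f$, which follows because each $x_t$ is a measurable function of the transcript, and we need the identity $\int\min(p_+,p_-)\,d\mu=1-\TV(P_+,P_-)$ for a common dominating measure. Both are standard, so there is no substantive obstacle. Note that whatever randomness generates the transcripts, whether from the learner, the environment, or both, is already absorbed into $P_\pm^L$; no separate argument is needed to handle it.
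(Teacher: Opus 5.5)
Your proposal is correct and follows essentially the same route as the paper: the same per-round identity $2Z\delta(\tfrac D2+\nu x_t)$, and the same reduction to the transcript function $f\in[0,DL]$ with $R_+=2Z\delta\,\E_+[f]$ and $R_-=2Z\delta\,\E_-[DL-f]$. The only difference is in the final testing step, and it is cosmetic: you use densities and the overlap identity $\int\min(p_+,p_-)\,d\mu=1-\TV(P_+^L,P_-^L)$, whereas the paper uses the layer-cake representation of $f/(DL)$ to obtain $\E_-[f]-\E_+[f]\le DL\,\TV(P_+^L,P_-^L)$; both are standard proofs of the same total-variation bound for bounded functions.
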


\begin{proof}
For any transcript,
$\ell_t^{(\nu)}(x_t)-\ell_t^{(\nu)}(u_t^{(\nu)})=\mu_\nu(x_t+\tfrac D2\nu)=2Z\delta(\tfrac D2+\nu x_t)$.
Put $F(\tau)=\sum_{t=1}^L(\tfrac D2+x_t(\tau))\in[0,DL]$. Under $\nu=+1$ the instantaneous regret is $2Z\delta(\tfrac D2+x_t)$, so $R_+=2Z\delta\,\E_+[F]$; under $\nu=-1$ it is $2Z\delta(\tfrac D2-x_t)$ and $\sum_t(\tfrac D2-x_t)=DL-F$, so $R_-=2Z\delta(DL-\E_-[F])$. Hence
$R_++R_-=2Z\delta(DL-(\E_-[F]-\E_+[F]))$, and by the layer-cake representation $F/(DL)=\int_0^1\mathbf 1\{F>DLs\}ds$,
\begin{align*}
\frac{\E_-[F]-\E_+[F]}{DL}
&=\int_0^1\big(P_-(F>DLs)-P_+(F>DLs)\big)ds
\\
&\le\TV(P_+^L,P_-^L),
\end{align*}
which gives \eqref{eq:block-regret-lower}.
\end{proof}

\begin{proof}[Proof of Theorem~\ref{thm:lower}]
Fix an arbitrary, possibly randomized, algorithm; its internal randomness is part of the transcript distributions below.

\medskip\noindent\textbf{Step 1: blocking and padding.}
Let $\bar P_T=\min\{P_T,D(T-1)\}$ and
\[
K:=\Big\lfloor\frac{\bar P_T}{D}\Big\rfloor+1\le T,
\quad
L:=\Big\lfloor\frac TK\Big\rfloor,
\quad
T':=KL .
\]
Since $K\le T$ we have $L\ge1$ and $T'\ge T/2$. Use the first $T'$ rounds and pad the remaining $T-T'$ rounds with zero losses, zero gradients and a constant comparator; padding adds no regret and no path length. Partition $[T']$ into $K$ blocks of length $L$, write $b(t)$ for the block of $t$, draw independent uniform signs $\nu_k\in\{-1,+1\}$, and set
$u_t=-\tfrac D2\nu_{b(t)}$, $\mu_t=2Z\delta\nu_{b(t)}$, $\ell_t(x)=\mu_tx$
with $Z,\delta$ fixed below. For every realization the path length is at most $D(K-1)=D\lfloor\bar P_T/D\rfloor\le\bar P_T\le P_T$.

\medskip\noindent\textbf{Step 2: the one-block oracle.}
Set
\begin{equation}
\label{eq:lower-block-constants}
q:=\frac1{4L},
\qquad
\delta:=\frac1{32L},
\qquad
Z:=\min\Big\{\frac\sigma2(4L)^{1/p},\,16GL\Big\}.
\end{equation}
Conditionally on the sign $\nu$ the oracle returns
$P_\nu(g=\nu Z)=\tfrac q2+\delta$, $P_\nu(g=-\nu Z)=\tfrac q2-\delta$, $P_\nu(g=0)=1-q$,
so $\E_\nu[g]=2Z\delta\nu=\mu_\nu$. Draws are i.i.d.\ within a block and independent of the past, and since the losses are linear $\nabla\ell_t(x)=\mu_t$ for every $x$; hence \eqref{eq:unbiased} holds. Note that $q$ and $\delta$ do not involve $Z$: the two branches of \eqref{eq:lower-block-constants} impose sufficient amplitude caps for the central-moment and Lipschitz constraints verified next. The moment cap includes slack to control centering by the mean.

\medskip\noindent\textbf{Step 3: the moment and Lipschitz conditions.}
First, $\E_\nu|g|^p=qZ^p=Z^p/(4L)$, so by the first branch of \eqref{eq:lower-block-constants},
\begin{equation}
\label{eq:lower-moment1}
\big(\E_\nu|g|^p\big)^{1/p}=\frac{Z}{(4L)^{1/p}}\le\frac\sigma2 .
\end{equation}
Second, $|\mu_\nu|=2Z\delta=Z/(16L)$, so by the first branch again,
\begin{equation}
\label{eq:lower-mu}
|\mu_\nu|\le\frac{\sigma(4L)^{1/p}}{32L}=\frac{4^{1/p}}{32}\,\sigma L^{\frac1p-1}\le\frac\sigma8 ,
\end{equation}
using $4^{1/p}\le4$ and $L^{1/p-1}\le1$ for $p\ge1$, $L\ge1$. Combining \eqref{eq:lower-moment1} and \eqref{eq:lower-mu} with Minkowski,
$(\E_\nu|g-\mu_\nu|^p)^{1/p}\le\tfrac\sigma2+\tfrac\sigma8=\tfrac{5\sigma}8<\sigma$,
so \eqref{eq:heavy-tail} holds conditionally on $\mathcal F_{t-1}$ for every fixed sign sequence. Third, by the \emph{second} branch of \eqref{eq:lower-block-constants},
\begin{equation}
\label{eq:lower-lip}
|\mu_\nu|=\frac{Z}{16L}\le\frac{16GL}{16L}=G,
\end{equation}
so $\ell_t$ is $G$-Lipschitz on $\X$ for the given $G$, with no restriction relating $G$ to $\sigma$. This is the step that forces a minimum in \eqref{eq:lower-bound-main}, and by Proposition~\ref{prop:trivial} some such cap is unavoidable.

\medskip\noindent\textbf{Step 4: divergence, Pinsker and data processing.}
$P_+$ and $P_-$ differ only by swapping the masses of $\pm Z$, so
\begin{align*}
\KL(P_+\|P_-)
&=2\delta\log\frac{q/2+\delta}{q/2-\delta}
\\
&=\frac2{32L}\log\frac{5/(32L)}{3/(32L)}\;=\;\frac1{16L}\log\frac53 ,
\end{align*}
and by independence within a block $\KL(P_+^L\|P_-^L)=\tfrac1{16}\log\tfrac53$, whence by Pinsker
\begin{equation}
\label{eq:tv-block}
\TV(P_+^L,P_-^L)\le\sqrt{\tfrac1{32}\log\tfrac53}<\tfrac12 ,
\end{equation}
\emph{independently of $L$, $Z$, $G$ and $\sigma$}; the calibration \eqref{eq:lower-block-constants} keeps the per-block information constant while the amplitude is free.
Fix a block $k$, let $H$ be the complete pre-block history (all past observations, actions and learner randomness) and $\mathbf g$ the gradients inside block $k$. Since $\nu_k$ is independent of the past, the law $Q$ of $H$ is the same under $\nu_k=\pm1$ and the joint law of $(H,\mathbf g)$ is $Q\otimes P_\nu^L$. The block transcript $\tau$ is a fixed measurable function of $(H,\mathbf g)$, since the learner's policy does not depend on $\nu_k$, so by the data-processing inequality for total variation and the fact that tensoring with a common measure does not change it,
\begin{equation}
\label{eq:tv-transcript}
\TV\big(P_+^{(k)},P_-^{(k)}\big)\le\TV\big(Q\otimes P_+^L,\,Q\otimes P_-^L\big)=\TV(P_+^L,P_-^L)<\tfrac12,
\end{equation}
where $P_\pm^{(k)}$ denote the marginal laws of $\tau$ under $\nu_k=\pm1$.

\medskip\noindent\textbf{Step 5: per-block regret.}
By Lemma~\ref{lem:block_regret} and \eqref{eq:tv-transcript}, $R_+^{(k)}+R_-^{(k)}\ge DZ\delta L$, so since $\nu_k$ is uniform the unconditional expected regret in block $k$ is at least
\begin{equation}
\label{eq:per-block}
\tfrac12\big(R_+^{(k)}+R_-^{(k)}\big)\ \ge\ \tfrac12DZ\delta L\ =\ \frac{DZ}{64},
\end{equation}
using $\delta=1/(32L)$.

\medskip\noindent\textbf{Step 6: summing over blocks.}
Dynamic regret is additive over blocks, so by linearity of expectation, \eqref{eq:per-block} and \eqref{eq:lower-block-constants},
\begin{align*}
\E[\DRegret_T]
&\ \ge\ \frac{DK}{64}\min\Big\{\frac\sigma2(4L)^{1/p},\,16GL\Big\}
\\
&\ =\ \frac D{64}\min\{\Sigma_1,\Sigma_2\}
\end{align*}
with $\Sigma_1=\tfrac\sigma24^{1/p}KL^{1/p}$ and $\Sigma_2=16GKL$. For $\Sigma_2$, $KL=T'\ge T/2$ gives $\Sigma_2\ge8GT$. For $\Sigma_1$, $L=\lfloor T/K\rfloor\ge T/(2K)$ gives $KL^{1/p}\ge2^{-1/p}T^{1/p}K^{(p-1)/p}$, so
$\Sigma_1\ge\tfrac\sigma22^{1/p}T^{1/p}K^{(p-1)/p}\ge\tfrac\sigma2T^{1/p}K^{(p-1)/p}$.
Hence
\[
\E[\DRegret_T]\ \ge\ \frac1{128}\min\big\{\sigma DT^{1/p}K^{\frac{p-1}p},\,GDT\big\},
\]
and since $K\ge\max\{1,\bar P_T/D\}\ge\tfrac12\bar\Lambda_T$ we have $K^{(p-1)/p}\ge\tfrac12\bar\Lambda_T^{(p-1)/p}$, giving \eqref{eq:lower-bound-main}.

\medskip\noindent\textbf{Step 7: derandomizing the comparator.}
The bound just proved is an expectation over the signs, the oracle and the learner's randomness, so by averaging there is a deterministic sign sequence for which the conditional expectation over the oracle and the learner is at least as large. For that sequence the losses are deterministic linear functions that are $G$-Lipschitz by \eqref{eq:lower-lip}, the oracle satisfies \eqref{eq:unbiased}--\eqref{eq:heavy-tail} by Step~3, and the comparator path length is at most $P_T$ by Step~1.
\end{proof}

\begin{remark}[What the two branches mean]
\label{rem:branches}
The first branch of \eqref{eq:lower-block-constants} chooses an atom of order $\sigma L^{1/p}$ with probability of order $1/L$, consistent with the central-moment budget. The second branch caps the mean gradient, since $|\mu_t|=Z/(16L)$. Their crossover is at $L\asymp(\sigma/G)^{p/(p-1)}$, or equivalently $\Lam\asymp T(G/\sigma)^{p/(p-1)}$, as in Remark~\ref{rem:transition}.

If the amplitude were set to the first branch alone, the exact Lipschitz requirement would be
\[
G\ge\frac{4^{1/p}}{32}\,\sigma L^{-(p-1)/p}.
\]
The simpler condition $G\ge\sigma/8$ is sufficient uniformly over $L\ge1$ and $p\in(1,2]$, but it is not necessary: longer blocks permit a smaller mean gradient. Taking the minimum of the two branches enforces the Lipschitz constraint for every parameter tuple without imposing this additional relation between $G$ and $\sigma$.
\end{remark}

\section{Additional Results and Their Scope}
\label{sec:extensions}

This appendix summarizes consequences and variants of the main analysis. Second-order and optimistic pools refine its data dependence, the doubling construction removes the known horizon, and a separate argument treats strong convexity. We state the additional assumptions and limitations alongside each result and give the proofs in the appendices indicated below.

\paragraph{Second-order bounds (Appendix~\ref{apd:secondorder}).}
A pool of \emph{stepsize scales} $\eta_j=2^jD/2$ over non-restarted AdaGrad, with the same master, contains an expert satisfying \eqref{eq:transport-hyp} with the single-block bound $\Phi_{j^\star}(y)=3D\sqrt{\Lam}\,y$ and hence
\begin{equation}
\label{eq:secondorder}
\E[\DRegret_T]
=
\widetilde O\Big(D\sqrt{\Lam}\,\big(\E\big[\sqrt{Q_T}\big]+\sigma T^{1/p}\big)\Big),
\end{equation}
where $Q_T=\sum_{t\le T}\norm{\nabla\ell_t(x_t)}^2\le G^2T$,
which can improve the deterministic contribution when $Q_T\ll G^2T$ but has the worse noise path exponent $\tfrac12$, since here $P_T$ enters through the stepsize optimization rather than block by block.

\paragraph{Optimistic and gradient-variation bounds (Appendix~\ref{apd:secondorder}).}
Replacing those base learners by optimistic AdaGrad driven by a predictable hint $h_t$ replaces the expert contribution involving $Q_T$ by one involving the realized hint error $\widehat H_T=\sum_t\norm{\nabla\ell_t(x_t)-h_t}^2$. Under the additional assumption that each $\ell_t$ is $H$-smooth, and with $h_1=0$ and the hint $h_t=g_{t-1}$ for $t\ge2$, this yields a heavy-tailed analogue of the Sword-type bounds of \citet{zhao2020dynamic,zhao2024adaptivity}:
\begin{equation}
\label{eq:variation}
\widetilde O\Big(D\sqrt{\Lam}\big(\sqrt{\mathcal{V}_T}+H\,\E\big[\sqrt{S_T}\big]+\sigma T^{1/p}\big)+D\big(G\sqrt T+\sigma T^{1/p}\big)\Big)
\end{equation}
with $\mathcal{V}_T$ the gradient variation and $S_T=\sum_{t=2}^T\norm{x_t-x_{t-1}}^2$ the stability of the played mixture. This is weaker than Sword++ because it retains the stability term $H\sqrt{S_T}$, has a nonoptimistic master contribution, and has a worse noise path exponent when $p<2$. Smoothness is an additional assumption relative to our main theorem, not a distinction from Sword++. Thus it should be read as an optimistic refinement under extra assumptions, not as a result of the same strength as the main guarantee. Appendix~\ref{apd:secondorder} states each gap precisely; removing the stability term requires carrying a Sword++-style cancellation through an adaptive stepsize, which we leave open.

\paragraph{One algorithm, the best of the pools (Appendix~\ref{apd:secondorder}).}
Because Lemma~\ref{lem:transport} competes with every expert simultaneously, the three pools can be concatenated into a single pool of $M=O(\log T)$ experts, and the resulting algorithm satisfies all of the above bounds at once (i.e.\ their minimum), with only the change in $\ln M$. All data-dependent quantities are evaluated on the combined algorithm's trajectory, rather than on separate runs of its components.

\paragraph{Anytime operation (Appendix~\ref{apd:doubling}).}
Restarting Algorithm~\ref{alg:ht-pader} on epochs $[2^k,2^{k+1}-1]$ removes knowledge of $T$ and preserves \eqref{eq:main-regret} for every $T$ with universal constants and \emph{no} extra logarithmic factor; the naive Cauchy--Schwarz accounting would lose $\sqrt{\log T}$. Data-dependent statements are obtained by summing the bounds over epochs; simplifying those sums can incur additional factors.

\paragraph{Strongly convex losses (Appendix~\ref{apd:sc}).}
For $\mu$-strongly convex losses, restarted OGD with $\eta_s=1/(\mu s)$ admits a separate bound involving $\sum_{s=1}^L s^{1-p}$ on each block. This sum grows as $L^{2-p}$ for fixed $p<2$ and logarithmically when $p=2$. Proposition~\ref{prop:sc} states the bound and its parameter-dependent choice of $L$. The result is not parameter-free and does not establish a strongly convex minimax rate.

\section{Anytime Operation}
\label{apd:doubling}

Doubling is a standard method for removing a known horizon \citep{hazan2016introduction,orabona2019modern}. Here, a geometric-series argument preserves the heavy-tailed expected-regret rate without an additional logarithmic factor. The corresponding data-dependent guarantee is stated separately below.

\begin{proposition}[Doubling without additional logarithmic factors]
\label{prop:doubling}
Restarting Algorithm~\ref{alg:ht-pader} on the epochs $[2^k,2^{k+1}-1]$, $k\ge0$, yields an algorithm that needs no knowledge of $T$ and satisfies \eqref{eq:main-regret} for every deterministic horizon $T\ge2$, with universal constants and no additional logarithmic factor. This is a family of expected-regret guarantees, not a time-uniform high-probability statement. The sharper data-dependent and optimistic bounds remain valid as sums of their epochwise bounds; reducing those sums to full-horizon data-dependent expressions can introduce additional factors, as discussed below.
\end{proposition}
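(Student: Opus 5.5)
We apply Corollary~\ref{cor:main} separately on each epoch and then sum the resulting geometric series. Let $T\ge2$ and $k_T=\lfloor\log_2T\rfloor$. The rounds $1,\ldots,T$ are covered by the epochs $I_k=[2^k,\min\{2^{k+1}-1,T\}]$, $k=0,\ldots,k_T$. Each epoch has length $n_k\le2^k$, and the last one may be truncated. On epoch $k$ we run a fresh copy of Algorithm~\ref{alg:ht-pader} with horizon parameter $2^k$. On a truncated final epoch we simply stop early. This is legitimate because every bound in Theorem~\ref{thm:main-pathwise} and Lemma~\ref{lem:adagrad-block} holds for every prefix: the lemma allows any $n\le m$, and the AdaHedge bound holds at every time.

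The regret decomposes additively over epochs. The comparator path on epoch $k$ is $P^{(k)}=\sum_{t-1,t\in I_k}\norm{u_t-u_{t-1}}$, and $\sum_kP^{(k)}\le P_T$ because movement across epoch boundaries is simply dropped; it is absorbed by the restart cost. Each copy sees the oracle conditioned on its own past, so Theorem~\ref{thm:main-pathwise} and Corollary~\ref{cor:main} apply epochwise with $n_k$ in place of $T$ and $\Lambda^{(k)}=1+P^{(k)}/D$. Epochs with $n_k=1$ are bounded directly by $GD$ via Proposition~\ref{prop:trivial}, or by the same corollary with $M=2$. Using $n_k\le2^k$ and the monotonicity of the bound in $n_k$, epoch $k$ contributes at most
\[
c_1\Big(GD\sqrt{2^k\Lambda^{(k)}}+\sigma D2^{k/p}(\Lambda^{(k)})^{\frac{p-1}{p}}\Big)+2D\sqrt{4+\ln M_k}\,\big(G2^{k/2}+\sigma2^{k/p}\big),
\]
with $M_k=k+1$.

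Now we sum over $k$. For the leading terms, split $\Lambda^{(k)}\le1+P^{(k)}/D$ and use $\sqrt{a+b}\le\sqrt a+\sqrt b$ and $(a+b)^{r}\le a^r+b^r$ with $r=(p-1)/p$. The ``$1$'' parts give the geometric series $\sum_k2^{k/2}\lesssim\sqrt T$ and $\sum_k2^{k/p}\le 2^{(k_T+1)/p}/(2^{1/p}-1)$.

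The path parts are handled by Hölder's inequality:
\[
\sum_k2^{k/p}(P^{(k)}/D)^{r}\le\Big(\sum_k2^{k}\Big)^{1/p}\Big(\sum_kP^{(k)}/D\Big)^{r}\lesssim T^{1/p}(P_T/D)^r,
\]
and Cauchy--Schwarz gives the analogous bound for the $G$ term.

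The master terms carry the factor $\sqrt{\ln(k+2)}$. The key observation is that this factor does not produce an extra logarithm: $\sum_k2^{k/2}\sqrt{\ln(k+2)}\lesssim2^{k_T/2}\sqrt{\ln(k_T+2)}$, because the summands grow geometrically, so the sum is dominated by its last term up to a universal constant. The same holds for $2^{k/p}$.

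The main obstacle is that the constants in this last comparison must stay universal as $p\downarrow1$, which is the regime where $1/(2^{1/p}-1)$ is largest. Since $2^{1/p}-1\ge1$ for $p\le... $ actually $2^{1/p}\ge\sqrt2$ for $p\le2$, we have $1/(2^{1/p}-1)\le1/(\sqrt2-1)$ uniformly in $p\in(1,2]$. The same bound controls the weighted sums with $\sqrt{\ln(k+2)}$, via the ratio test comparing consecutive terms. With these uniform constants, collecting terms reproduces \eqref{eq:main-regret} for $T$, with $\ln M$ replaced by $\ln(\lfloor\log_2T\rfloor+2)\le\ln M+1$ and no additional logarithmic factor.

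Finally, the remark about data-dependent bounds needs only the observation that the pathwise bound of Theorem~\ref{thm:main-pathwise} holds on each epoch. Summing these epochwise bounds is valid as stated, whereas collapsing the sum into full-horizon quantities would require the Cauchy--Schwarz step that can cost a $\sqrt{\log T}$ factor.
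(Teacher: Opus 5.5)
Your proposal is correct and follows essentially the same route as the paper: Corollary~\ref{cor:main} applied epochwise, the ``$1$'' in $\Lambda^{(k)}$ split off by subadditivity, geometric sums for the static parts, and Cauchy--Schwarz/H\"older against $\sum_k P^{(k)}\le P_T$ for the path parts. The only differences are cosmetic: you stop a truncated final epoch early via the prefix property, whereas the paper pads it with zero losses, and you control the master factors by geometric domination, whereas the paper simply uses $M_k\le M_T$. Also, $1/(2^{1/p}-1)$ is largest at $p=2$, not as $p\downarrow1$, but this does not affect your uniform bound.
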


\begin{proof}
On the singleton first epoch, play the fixed initial point $x_0$; its regret is at most $GD$, so no master with $M=1$ is needed. On every later epoch run a fresh instance with the planned epoch length as its horizon. For a partially completed last epoch, pad with zero losses and gradients and hold the comparator fixed. Write the epochs as $E_k$, $k=0,\ldots,\kappa$ with $\kappa=\lceil\log_2(T+1)\rceil-1$, so $T_k:=|E_k|=2^k$, $2^\kappa\le T$, and therefore
$\sum_kT_k\le2T$, $\sum_k\sqrt{T_k}\le(1-2^{-1/2})^{-1}\sqrt T\le3.42\sqrt T$ and $\sum_kT_k^{1/p}\le(1-2^{-1/p})^{-1}T^{1/p}\le3.42\,T^{1/p}$,
the last constant because $2^{-1/p}\le2^{-1/2}$ for $p\in(1,2]$. Let $P_k$ be the comparator path length inside $E_k$, so $\sum_kP_k\le P_T$. Dynamic regret is additive over epochs. Corollary~\ref{cor:main} applies from the second epoch onward, and its deterministic term also bounds the singleton epoch's cost. For the deterministic term, $\sqrt{1+r}\le1+\sqrt r$ and then Cauchy--Schwarz give
\begin{align*}
\sum_k\sqrt{T_k(1+P_k/D)}
&\le\sum_k\sqrt{T_k}+\sum_k\sqrt{T_k}\sqrt{P_k/D}
\\
&\le3.42\sqrt T+\sqrt{\textstyle\sum_kT_k}\sqrt{\textstyle\sum_kP_k/D}
\\
&\le3.42\sqrt T+\sqrt{2T}\sqrt{P_T/D},
\end{align*}
which is at most $4.9\sqrt{T\Lam}$. For the noise term, $(1+r)^{(p-1)/p}\le1+r^{(p-1)/p}$ and H\"older with exponents $(p,\tfrac p{p-1})$ give
\begin{align*}
&\sum_kT_k^{1/p}\big(1+P_k/D\big)^{\frac{p-1}p}
\\
&\quad\le\sum_kT_k^{1/p}+\Big(\sum_kT_k\Big)^{\frac1p}\Big(\sum_k\tfrac{P_k}D\Big)^{\frac{p-1}p}
\\
&\quad\le3.42\,T^{1/p}+(2T)^{1/p}\Big(\tfrac{P_T}D\Big)^{\frac{p-1}p},
\end{align*}
This is at most $5.42\,T^{1/p}\Lam^{(p-1)/p}$. Define $M_T=\lceil\log_2T\rceil+1$, which bounds the number of experts in each epoch. The same geometric-series estimates bound the sum of the master terms by
\[
2D\cdot3.42\,\sqrt{4+\ln M_T}\,(G\sqrt T+\sigma T^{1/p}).
\]
The singleton epoch's $GD$ cost is also covered by the deterministic bound above. Hence \eqref{eq:main-regret} holds for every $T\ge2$ with the constants multiplied by at most $6$.

Bounding $\sum_k\sqrt{T_k(1+P_k/D)}$ directly by Cauchy--Schwarz would give $\sqrt{2T(\kappa+1+P_T/D)}$ and lose a $\sqrt{\log T}$ factor when $P_T$ is small; splitting off the ``$1$'' first is what avoids this.
\end{proof}

\begin{remark}[Data-dependent statements under doubling]
\label{rem:doubling-caveat}
Proposition~\ref{prop:doubling} concerns the $(G,\sigma)$ bounds. For the \emph{pathwise} statements, the immediate guarantee is the sum of the corresponding epochwise bounds, with potentials and path lengths measured within each epoch. It is not automatically the original full-horizon expression. For example, $\sum_k\sqrt{V^{(k)}}\le\sqrt{(\kappa+1)V_{[T]}}$ can cost $\sqrt{\log T}$, and is tight when the epoch potentials are equal. For the second-order expert term, Cauchy--Schwarz gives $\sum_k\sqrt{V^{(k)}(1+P_k/D)}\le\sqrt{V_{[T]}(\kappa+1+P_T/D)}$. We do not claim that every data-dependent bound is preserved without loss.
\end{remark}

\begin{remark}[A pool that is anytime by construction]
\label{rem:anytime-pool}
Because the pool \eqref{eq:pool-L} is indexed by block \emph{length}, expert $E_i$ is well defined without knowing $T$, and all experts with $2^i\ge t$ have not yet restarted and hence coincide. One can therefore represent the pool with at most $\lceil\log_2t\rceil+1$ expert trajectories and maintain them incrementally: at round $t=2^j+1$, clone the current top expert to create $E_{j+1}$ and restart $E_j$. This avoids restarting the experts at epoch boundaries, but does not by itself establish a bound without the loss in Remark~\ref{rem:doubling-caveat}. Such a result needs a sleeping-expert version of Lemma~\ref{lem:adahedge}, whereas Proposition~\ref{prop:doubling} needs nothing beyond what is already proved.
\end{remark}

\section{Second-Order and Optimistic Instantiations}
\label{apd:secondorder}

These variants are included to delimit the analysis, not to compete with the main minimax result. The second-order argument reuses the AdaGrad potential bound \citep{duchi2011adaptive}; the optimistic updates follow the predictable-sequence framework of \citet{rakhlin2013optimization}. We state and prove the resulting dynamic bounds, but retain their weaker noise path dependence and the master term. In particular, the optimistic variant does not recover the full smooth exact-gradient refinement of \citet{zhao2024adaptivity}.

Both instantiations in this appendix replace the block-length pool by a pool of \emph{stepsize scales}
\begin{equation}
\label{eq:eta-pool}
\eta_j=2^j\frac D2,
\qquad
j=0,1,\ldots,N_\eta,
\qquad
N_\eta=\Big\lceil\tfrac12\log_2(4T)\Big\rceil,
\end{equation}
which covers $[D/2,D\sqrt T]$ within a factor $2$ and has $M=N_\eta+1=O(\log T)$ elements. Note that \eqref{eq:eta-pool} depends only on $D$ and $T$: unlike the pool \eqref{eq:knownG-etas} of the known-$G$ algorithm, it does not involve $G$, because AdaGrad's stepsize already adapts to the gradient scale and the pool only has to cover the range of the \emph{comparator-movement} scale $\sqrt{D^2+2DP_T}$.

\subsection{Second-order bounds}

\begin{lemma}[AdaGrad-Norm with a moving comparator, no restart]
\label{lem:adagrad-global}
Run a single AdaGrad-Norm trajectory $x_{t+1}=\Pi_\X(x_t-\eta_tg_t)$ with a fixed scale $\eta>0$ and $\eta_t=\eta/\sqrt{V_t}$, $V_t=\sum_{s\le t}\norm{g_s}^2$, skipping updates when $V_t=0$. Then for any $u_1,\ldots,u_T\in\X$, pathwise,
\begin{equation}
\label{eq:adagrad-global}
\sum_{t=1}^T\inner{g_t,x_t-u_t}\le\sqrt{V_{[T]}}\Big(\frac{D^2+2DP_T}{2\eta}+\eta\Big).
\end{equation}
\end{lemma}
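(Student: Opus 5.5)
The plan is to repeat the proof of Lemma~\ref{lem:adagrad-block} verbatim over the whole horizon $[T]$, treating it as a single block with a general scale $\eta$ in place of $D/\sqrt2$, and keeping $\eta$ symbolic until the end.

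\emph{Step 1: one-step inequality.} For each round with $V_t>0$, nonexpansiveness of $\Pi_\X$ gives the one-step inequality \eqref{eq:onestep} with $u=u_t$:
\[
\inner{g_t,x_t-u_t}\le a_t\big(\norm{x_t-u_t}^2-\norm{x_{t+1}-u_t}^2\big)+\tfrac12\eta_t\norm{g_t}^2,
\]
where $a_t=\sqrt{V_t}/(2\eta)$ is nondecreasing. Rounds with $V_t=0$ have $g_t=0$, so they contribute nothing and can be discarded.

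\emph{Step 2: the distance term $T_1$.} Sum the first term over $t\le T$ and regroup as in the proof of Lemma~\ref{lem:adagrad-block}. Each increment
\[
a_t\norm{x_t-u_t}^2-a_{t-1}\norm{x_t-u_{t-1}}^2
\]
splits into $(a_t-a_{t-1})\norm{x_t-u_{t-1}}^2\le(a_t-a_{t-1})D^2$ plus $a_t\big(\norm{x_t-u_t}^2-\norm{x_t-u_{t-1}}^2\big)$. The midpoint inequality \eqref{eq:midpoint} bounds the second bracket by $2D\norm{u_t-u_{t-1}}$. Dropping the final negative term and using monotonicity $a_t\le a_T$ then gives
\[
T_1\le a_T(D^2+2DP_T)=\sqrt{V_{[T]}}\,\frac{D^2+2DP_T}{2\eta}.
\]

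\emph{Step 3: the gradient term $T_2$.} The second term is handled as in \eqref{eq:T2}:
\[
T_2=\frac\eta2\sum_t\frac{\norm{g_t}^2}{\sqrt{V_t}}\le\eta\sqrt{V_{[T]}},
\]
using $\norm{g_t}^2/\sqrt{V_t}\le2(\sqrt{V_t}-\sqrt{V_{t-1}})$. Adding $T_1$ and $T_2$ yields \eqref{eq:adagrad-global} pathwise.

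I expect no genuine obstacle here. The only point needing care is Step~2, because the coefficient $a_t$ multiplying the movement term varies with $t$. Monotonicity of $a_t$ is what lets the full-horizon potential $\sqrt{V_{[T]}}$ factor out of that sum, rather than something larger. This is also where the loss relative to the restarted bound originates: $P_T$ gets multiplied by the global potential rather than by blockwise ones. A secondary check is the zero-potential skip rule, which must not break the telescope. It does not: when $V_t=0$ the iterate is unchanged and $a_t=0$, so the regrouped terms remain valid.
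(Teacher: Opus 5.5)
Your proposal is correct and matches the paper's proof: the paper also proves this lemma by rerunning the proof of Lemma~\ref{lem:adagrad-block} with $n=m=T$ and $\eta$ left free, obtaining $T_1\le a_T(D^2+2DP_T)$ from \eqref{eq:T1} and $T_2\le\eta\sqrt{V_{[T]}}$ from \eqref{eq:T2}. Your treatment of the zero-potential rounds and your use of the monotone coefficients $a_t$ to factor $a_T$ out of the movement sum are exactly the steps in that telescope.
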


\begin{proof}
Identical to the proof of Lemma~\ref{lem:adagrad-block} with $n=m=T$ and $\eta$ left free: \eqref{eq:T1} gives $T_1\le a_T(D^2+2DP_T)$ with $a_T=\sqrt{V_{[T]}}/(2\eta)$, and \eqref{eq:T2} gives $T_2\le\eta\sqrt{V_{[T]}}$.
\end{proof}

\begin{theorem}[Second-order dynamic regret]
\label{thm:secondorder}
Run Algorithm~\ref{alg:adahedge} over the $M=N_\eta+1$ non-restarted AdaGrad experts in \eqref{eq:eta-pool}, with the master \eqref{eq:master} reindexed over $j=0,\ldots,N_\eta$. Then pathwise
\begin{equation}
\label{eq:secondorder-pathwise}
\sum_{t=1}^T\inner{g_t,x_t-u_t}\le\big(3\sqrt{\Lam}+2\sqrt{4+\ln M}\big)D\sqrt{V_{[T]}},
\end{equation}
and, with $Q_T=\sum_{t=1}^T\norm{\nabla\ell_t(x_t)}^2\le G^2T$,
\begin{equation}
\label{eq:secondorder-exp}
\E[\DRegret_T]\le\big(3\sqrt{\Lam}+2\sqrt{4+\ln M}\big)D\Big(\E\big[\sqrt{Q_T}\big]+\sigma T^{1/p}\Big).
\end{equation}
\end{theorem}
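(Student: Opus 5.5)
The plan is to combine Lemma~\ref{lem:adagrad-global} for a well-chosen expert $j^\star$ with the pathwise master bound \eqref{eq:meta-regret}, and then take expectations using Lemma~\ref{lem:sqrt-bound}. As in the proof of Theorem~\ref{thm:main-pathwise}, the centered meta-losses give, simultaneously for every $j$,
\[
\sum_t\inner{g_t,x_t-u_t}=\sum_t\inner{g_t,x_t-x_t^{(j)}}+\sum_t\inner{g_t,x_t^{(j)}-u_t}.
\]
The first sum is at most $2D\sqrt{(4+\ln M)V_{[T]}}$ by Lemma~\ref{lem:adahedge} and $\norm{m_t}_\infty\le D\norm{g_t}$. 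The second sum is at most $\sqrt{V_{[T]}}\big((D^2+2DP_T)/(2\eta_j)+\eta_j\big)$ by Lemma~\ref{lem:adagrad-global}. All experts receive the shared gradient at the played mixture, so the lemma applies to each expert's own trajectory.

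Next I choose the stepsize. Write $A=D^2+2DP_T\le 2D^2\Lam$. The unconstrained minimizer of $A/(2\eta)+\eta$ is $\eta^\circ=\sqrt{A/2}$. Since $D^2\le A$ and $P_T\le D(T-1)$ gives $A\le 2D^2T$, we have $\eta^\circ\in[D/\sqrt2,D\sqrt T]\subseteq[D/2,D\sqrt T]$. The grid \eqref{eq:eta-pool} has $\eta_{N_\eta}\ge D\sqrt T$, so it contains some $\eta_{j^\star}\in[\eta^\circ/2,\eta^\circ]$ when $\eta^\circ\ge D$, or $\eta_{j^\star}=D/2$ when $\eta^\circ<D$. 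In the first case,
\[
\frac{A}{2\eta_{j^\star}}+\eta_{j^\star}\le \frac{A}{\eta^\circ}+\eta^\circ=\tfrac{3}{\sqrt2}\sqrt{A}\le 3D\sqrt{\Lam}.
\]
In the second case $A<2D^2$, so $A/D+D/2<\tfrac52 D\le 3D\sqrt{\Lam}$. Bounding the expert term by its value at $j^\star$ then yields \eqref{eq:secondorder-pathwise}. The choice $j^\star$ depends only on the fixed comparator, which matters for the next step. This step is the only place needing care: the constant $3$ has to hold at both grid endpoints, so I would check the edge cases explicitly.

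For \eqref{eq:secondorder-exp}, I fix the comparator and apply Lemma~\ref{lem:linearization}. The coefficient $(3\sqrt{\Lam}+2\sqrt{4+\ln M})D$ is deterministic, so
\[
\E[\DRegret_T]\le (3\sqrt{\Lam}+2\sqrt{4+\ln M})\,D\,\E\big[\sqrt{V_{[T]}}\big].
\]
The first inequality of Lemma~\ref{lem:sqrt-bound} with $B=[T]$ gives $\E\sqrt{V_{[T]}}\le\E\sqrt{Q_T}+\sigma T^{1/p}$, and $Q_T\le G^2T$ holds trivially. Integrability follows as in Step~6 of the proof of Theorem~\ref{thm:main-pathwise}.
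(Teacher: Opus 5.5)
Your proposal is correct and follows the paper's proof essentially step for step. It uses the same expert/master decomposition via Lemma~\ref{lem:adagrad-global} and \eqref{eq:meta-regret}, a comparator-dependent grid point within a factor two of the optimal scale, and the first inequality of Lemma~\ref{lem:sqrt-bound} on $[T]$ for the expectation.

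The differences are cosmetic. You normalize $A$ without the factor $\tfrac12$. Your separate case $\eta^\circ<D$ is unnecessary: since $\eta^\circ\ge D/\sqrt2>D/2$, the largest grid point not exceeding $\eta^\circ$ always lies in $[\eta^\circ/2,\eta^\circ]$, which is how the paper covers both grid endpoints at once.
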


\begin{proof}[Proof of Theorem~\ref{thm:secondorder}]
\medskip\noindent\textbf{Step 1: find a suitable scale in the finite pool.}
Put $A=\tfrac12(D^2+2DP_T)$. Since $0\le P_T\le D(T-1)$,
\[
D^2/2\le A\le D^2\Lam\le D^2T.
\]
The function $A/\eta+\eta$ is minimized at $\eta^\star=\sqrt A$, which lies in $[D/2,D\sqrt T]$. The geometric grid \eqref{eq:eta-pool} therefore contains a largest point $\eta_{j^\star}\le\eta^\star$, and its spacing implies $\eta_{j^\star}\le\eta^\star\le2\eta_{j^\star}$. Hence
\[
\frac A{\eta_{j^\star}}+\eta_{j^\star}
\le\frac{2A}{\eta^\star}+\eta^\star
=3\sqrt A\le3D\sqrt{\Lam}.
\]
The selected index depends only on the fixed comparator's path length and is used solely in the proof.

\medskip\noindent\textbf{Step 2: combine the expert and master bounds.}
All experts receive the same realized gradients, so Lemma~\ref{lem:adagrad-global} gives
\[
\sum_{t=1}^T\inner{g_t,x_t^{(j^\star)}-u_t}
\le3D\sqrt{\Lam}\sqrt{V_{[T]}}.
\]
Feasibility and the centered meta-loss identity give, as in \eqref{eq:main-proof-master},
\[
\sum_{t=1}^T\inner{g_t,x_t-x_t^{(j^\star)}}
\le2D\sqrt{4+\ln M}\sqrt{V_{[T]}}.
\]
Add the two sums using $x_t-u_t=(x_t-x_t^{(j^\star)})+(x_t^{(j^\star)}-u_t)$. This proves \eqref{eq:secondorder-pathwise} without any moment condition. It also covers $V_{[T]}=0$, when every linearized loss is zero.

\medskip\noindent\textbf{Step 3: retain the realized gradient energy in expectation.}
Write $g_t=\nabla\ell_t(x_t)+\epsilon_t$. The triangle inequality for the Euclidean norm of the concatenated gradient sequence gives
\[
\sqrt{V_{[T]}}
\le\sqrt{Q_T}+\left(\sum_{t=1}^T\norm{\epsilon_t}^2\right)^{1/2}
\le\sqrt{Q_T}+\left(\sum_{t=1}^T\norm{\epsilon_t}^p\right)^{1/p}.
\]
The last inequality uses $p\le2$. Taking expectations and applying Lemma~\ref{lem:moment} yields
$\E[\sqrt{V_{[T]}}]\le\E[\sqrt{Q_T}]+\sigma T^{1/p}$.
For a fixed comparator, Lemma~\ref{lem:linearization} bounds expected loss regret by expected linearized regret. Apply it to \eqref{eq:secondorder-pathwise} and substitute this moment bound to obtain \eqref{eq:secondorder-exp}. The quantity $Q_T$ is evaluated on the mixture's trajectory and need not be deterministic.
\end{proof}

Compared with Corollary~\ref{cor:main}, \eqref{eq:secondorder-exp} has a smaller deterministic contribution when $Q_T\ll G^2T$, but a potentially larger noise contribution, since $\sqrt{\Lam}\ge\Lam^{(p-1)/p}$: here $P_T$ enters through a stepsize optimization, which cannot exploit the block-length balance involving $|B|^{1/p}$ that produces the exponent $(p-1)/p$ in Corollary~\ref{cor:main}. The two bounds are therefore incomparable, which is what Corollary~\ref{cor:merge} resolves.

\subsection{Optimistic AdaGrad}

Fix a scale $\eta>0$, initialize $\hat x_1=x_0\in\X$, and set $\widetilde V_0=0$. Let $h_1=0$ and let each $h_t\in\R^d$ be $\mathcal F_{t-1}$-measurable. Put $\widetilde V_t=\sum_{s\le t}\norm{g_s-h_s}^2$ and $\eta_t=\eta/\sqrt{\widetilde V_{t-1}}$. The base learner uses optimistic online mirror descent (OMD):
\begin{equation}
\label{eq:opt-omd}
x_t=\Pi_\X\big(\hat x_t-\eta_th_t\big),
\qquad
\hat x_{t+1}=\Pi_\X\big(\hat x_t-\eta_tg_t\big).
\end{equation}
Since $\eta_t$ is $\mathcal F_{t-1}$-measurable, \eqref{eq:opt-omd} is implementable.

\begin{definition}[Convention at $\widetilde V_{t-1}=0$]
\label{def:convention}
When $\widetilde V_{t-1}=0$ we set $1/(2\eta_t):=0$ and read the two proximal steps of \eqref{eq:opt-omd} as their $\eta_t\to\infty$ limits,
\[
x_t\in\argmin_{x\in\X}\inner{h_t,x},
\qquad
\hat x_{t+1}\in\argmin_{x\in\X}\inner{g_t,x}.
\]
In each case choose the minimizer closest to $\hat x_t$. The minimizing set is nonempty, compact, and convex, so this choice is unique and is the limit of the corresponding finite-stepsize projection. In particular, $h_t=0$ gives $x_t=\hat x_t$. As a pointwise limit of continuous projection maps, this convention is measurable and preserves predictability. When $g_t=h_t$, interpret $\eta_t\norm{g_t-h_t}^2$ in subsequent bounds by its finite-stepsize limit $0$.
\end{definition}

The convention matters: a naive ``skip the hint step'' rule would leave $\hat x_{t+1}$ undefined and would make Lemma~\ref{lem:opt-adagrad} false in the perfect-hint corner (if $h_t=g_t$ for every $t$, with nonzero gradients on some rounds, then $\widetilde V_{[T]}=0$, so the lemma asserts a bound of $0$, which a stationary learner does not satisfy). Under Definition~\ref{def:convention} the two three-point inequalities below hold with $1/(2\eta_t)=0$, i.e.\ $\inner{g_t,\hat x_{t+1}-u}\le0$ and $\inner{h_t,x_t-\hat x_{t+1}}\le0$ by optimality, and a perfect-hint round contributes at most $0$, as it must.

\begin{lemma}[Optimistic one-step inequality]
\label{lem:opt-onestep}
For \eqref{eq:opt-omd} with Definition~\ref{def:convention} and any $u\in\X$,
\begin{equation}
\label{eq:opt-onestep}
\begin{split}
\inner{g_t,x_t-u}
\le\;&\frac{\norm{\hat x_t-u}^2-\norm{\hat x_{t+1}-u}^2}{2\eta_t}+\inner{g_t-h_t,\,x_t-\hat x_{t+1}}
\\
&-\frac{\norm{x_t-\hat x_t}^2+\norm{x_t-\hat x_{t+1}}^2}{2\eta_t},
\end{split}
\end{equation}
and consequently
\begin{equation}
\label{eq:opt-onestep-min}
\begin{split}
\inner{g_t,x_t-u}
\le\;&\frac{\norm{\hat x_t-u}^2-\norm{\hat x_{t+1}-u}^2}{2\eta_t}
\\
&+\min\Big\{\tfrac{\eta_t}2\norm{g_t-h_t}^2,\;D\norm{g_t-h_t}\Big\}.
\end{split}
\end{equation}
\end{lemma}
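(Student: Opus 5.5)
The plan is the standard two-projection optimistic OMD argument. I treat the finite-stepsize case $\widetilde V_{t-1}>0$ first and then check the limiting convention of Definition~\ref{def:convention}. The only tool is the three-point (projection optimality) inequality: if $y=\Pi_\X(z-\eta v)$, then for every $w\in\X$,
\[
\eta\inner{v,y-w}\le\tfrac12\big(\norm{z-w}^2-\norm{y-w}^2-\norm{y-z}^2\big).
\]

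\textbf{Step 1 (two three-point inequalities).} Apply this to the update $\hat x_{t+1}=\Pi_\X(\hat x_t-\eta_tg_t)$ with comparator $w=u$. This gives
\[
\inner{g_t,\hat x_{t+1}-u}\le\frac{\norm{\hat x_t-u}^2-\norm{\hat x_{t+1}-u}^2-\norm{\hat x_{t+1}-\hat x_t}^2}{2\eta_t}.
\]
Apply it also to the update $x_t=\Pi_\X(\hat x_t-\eta_th_t)$ with comparator $w=\hat x_{t+1}$. This gives
\[
\inner{h_t,x_t-\hat x_{t+1}}\le\frac{\norm{\hat x_t-\hat x_{t+1}}^2-\norm{x_t-\hat x_{t+1}}^2-\norm{x_t-\hat x_t}^2}{2\eta_t}.
\]

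\textbf{Step 2 (combine).} Decompose the left-hand side as
\[
\inner{g_t,x_t-u}=\inner{g_t,\hat x_{t+1}-u}+\inner{h_t,x_t-\hat x_{t+1}}+\inner{g_t-h_t,x_t-\hat x_{t+1}}.
\]
Add the two inequalities from Step 1. The $\norm{\hat x_t-\hat x_{t+1}}^2$ terms cancel, and what remains is exactly \eqref{eq:opt-onestep}.

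\textbf{Step 3 (the two branches of the minimum).} Drop $-\norm{x_t-\hat x_t}^2/(2\eta_t)\le0$. Bound the cross term by Cauchy--Schwarz:
\[
\inner{g_t-h_t,x_t-\hat x_{t+1}}\le\norm{g_t-h_t}\norm{x_t-\hat x_{t+1}}.
\]
For the first branch, apply Young's inequality to this product; this gives at most $\tfrac{\eta_t}2\norm{g_t-h_t}^2+\norm{x_t-\hat x_{t+1}}^2/(2\eta_t)$, and the quadratic term is absorbed by the remaining negative term. For the second branch, use $\norm{x_t-\hat x_{t+1}}\le D$ and drop the negative term. Since both bounds hold, their minimum holds, which gives \eqref{eq:opt-onestep-min}.

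\textbf{Step 4 (the convention case $\widetilde V_{t-1}=0$).} Here $1/(2\eta_t)=0$, so every quadratic term in \eqref{eq:opt-onestep} vanishes. The role of Step 1 is taken by the first-order optimality of the two argmins, as the paper notes after Definition~\ref{def:convention}: $\inner{g_t,\hat x_{t+1}-u}\le0$ and $\inner{h_t,x_t-\hat x_{t+1}}\le0$. The decomposition of Step 2 then yields \eqref{eq:opt-onestep} with zero distance terms.

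For \eqref{eq:opt-onestep-min} we need $\inner{g_t-h_t,x_t-\hat x_{t+1}}\le\min\{\tfrac{\eta_t}2\norm{g_t-h_t}^2,D\norm{g_t-h_t}\}$.
\begin{itemize}
\item If $g_t\ne h_t$, the first entry is $+\infty$, so the minimum is $D\norm{g_t-h_t}$, and Cauchy--Schwarz with the diameter bound suffices.
\item If $g_t=h_t$, the cross term is $0$, and the first entry is $0$ by the stated limit convention.
\end{itemize}

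I expect Step 4 to be the main obstacle. It requires the limiting argmin convention to actually satisfy the two optimality inequalities, which holds because each point is a minimizer over $\X$ of the corresponding linear function. It also requires the interpretation of $\eta_t\norm{g_t-h_t}^2$ as $\infty\cdot0$ to be handled exactly as Definition~\ref{def:convention} prescribes. The finite case is routine.
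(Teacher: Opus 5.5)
Your proof is correct and follows the paper's argument essentially verbatim: two three-point inequalities for the two proximal steps (with comparators $u$ and $\hat x_{t+1}$), the three-term decomposition of $\inner{g_t,x_t-u}$, Young versus Cauchy--Schwarz for the two branches of the minimum, and first-order optimality of the limiting argmins when $\widetilde V_{t-1}=0$. Your Step~4 is a little more explicit than the paper about the minimum in that degenerate case (the first entry is $+\infty$ unless $g_t=h_t$, so only the Cauchy--Schwarz branch is needed), and this is a correct reading of Definition~\ref{def:convention}.
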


\begin{proof}
Both updates in \eqref{eq:opt-omd} are proximal steps,
\begin{align*}
\hat x_{t+1}&=\argmin_{x\in\X}\Big\{\inner{g_t,x}+\tfrac1{2\eta_t}\norm{x-\hat x_t}^2\Big\},
\\
x_t&=\argmin_{x\in\X}\Big\{\inner{h_t,x}+\tfrac1{2\eta_t}\norm{x-\hat x_t}^2\Big\},
\end{align*}
so the three-point inequality gives both
\[
\inner{g_t,\hat x_{t+1}-u}\le\frac{\norm{\hat x_t-u}^2-\norm{\hat x_{t+1}-u}^2-\norm{\hat x_{t+1}-\hat x_t}^2}{2\eta_t}
\]
and
\[
\inner{h_t,x_t-\hat x_{t+1}}\le\frac{\norm{\hat x_t-\hat x_{t+1}}^2-\norm{x_t-\hat x_{t+1}}^2-\norm{x_t-\hat x_t}^2}{2\eta_t}.
\]
Adding them and writing
\begin{align*}
\inner{g_t,x_t-u}=\;&\inner{g_t,\hat x_{t+1}-u}+\inner{h_t,x_t-\hat x_{t+1}}
\\
&+\inner{g_t-h_t,x_t-\hat x_{t+1}}
\end{align*}
gives \eqref{eq:opt-onestep}; the case $\widetilde V_{t-1}=0$ is the $\eta_t\to\infty$ limit, in which the two displayed inequalities are the optimality conditions of Definition~\ref{def:convention}. For \eqref{eq:opt-onestep-min}, either apply Young's inequality
$\inner{g_t-h_t,x_t-\hat x_{t+1}}\le\tfrac{\eta_t}2\norm{g_t-h_t}^2+\tfrac1{2\eta_t}\norm{x_t-\hat x_{t+1}}^2$
and cancel, or apply Cauchy--Schwarz with $\norm{x_t-\hat x_{t+1}}\le D$ and drop the negative terms.
\end{proof}

\begin{lemma}[Optimistic AdaGrad with a moving comparator]
\label{lem:opt-adagrad}
For \eqref{eq:opt-omd} with $\eta_t=\eta/\sqrt{\widetilde V_{t-1}}$ under Definition~\ref{def:convention}, and any $u_1,\ldots,u_T\in\X$, pathwise
\begin{equation}
\label{eq:opt-adagrad}
\sum_{t=1}^T\inner{g_t,x_t-u_t}
\le
\sqrt{\widetilde V_{[T]}}\Big(\frac{D^2+2DP_T}{2\eta}+\sqrt2\,\eta+\frac D{1-2^{-1/2}}\Big).
\end{equation}
\end{lemma}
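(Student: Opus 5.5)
The plan is to sum the one-step inequality \eqref{eq:opt-onestep-min} of Lemma~\ref{lem:opt-onestep} over $t=1,\ldots,T$ with $u=u_t$, and to bound the telescoping part and the stability part separately. Write $a_t:=1/(2\eta_t)=\sqrt{\widetilde V_{t-1}}/(2\eta)$, with $a_t=0$ when $\widetilde V_{t-1}=0$ by Definition~\ref{def:convention}, and $d_t:=\norm{g_t-h_t}$. The point to keep in view is that $a_t$ is nondecreasing in $t$, since $\widetilde V_{t-1}$ is. This is the only structural property the telescope of Lemma~\ref{lem:adagrad-block} actually used.

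\emph{Telescoping term.} I will repeat the regrouping from the proof of Lemma~\ref{lem:adagrad-block}, with $\hat x_t$ in place of $x_t$. The sum $\sum_t a_t(\norm{\hat x_t-u_t}^2-\norm{\hat x_{t+1}-u_t}^2)$ splits into $a_1\norm{\hat x_1-u_1}^2$, the increments $(a_t-a_{t-1})\norm{\hat x_t-u_{t-1}}^2$, the movement terms $a_t(\norm{\hat x_t-u_t}^2-\norm{\hat x_t-u_{t-1}}^2)$, and a final nonpositive term. The movement terms are controlled by the midpoint bound \eqref{eq:midpoint}, which applies because $\hat x_t\in\X$. This yields the bound $a_T(D^2+2DP_T)$, and since $a_T=\sqrt{\widetilde V_{T-1}}/(2\eta)\le\sqrt{\widetilde V_{[T]}}/(2\eta)$, it gives the first term of \eqref{eq:opt-adagrad}. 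Rounds with $a_t=0$ cause no difficulty: their telescoping coefficient vanishes, and monotonicity still holds.

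\emph{Stability term.} It remains to bound $\sum_t\min\{\tfrac{\eta_t}2d_t^2,\,Dd_t\}$. I will split the rounds as in the proof of Lemma~\ref{lem:repair}.
\begin{itemize}[leftmargin=*,itemsep=1pt]
\item \emph{Rounds with $d_t^2\le\widetilde V_{t-1}$.} Here $\widetilde V_t\le2\widetilde V_{t-1}$, so $\eta_t\le\sqrt2\,\eta/\sqrt{\widetilde V_t}$. Together with $d_t^2/\sqrt{\widetilde V_t}\le2(\sqrt{\widetilde V_t}-\sqrt{\widetilde V_{t-1}})$, this gives $\tfrac{\eta_t}2d_t^2\le\sqrt2\,\eta(\sqrt{\widetilde V_t}-\sqrt{\widetilde V_{t-1}})$. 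These rounds therefore sum to at most $\sqrt2\,\eta\sqrt{\widetilde V_{[T]}}$.
\item \emph{Rounds with $d_t^2>\widetilde V_{t-1}$.} This case includes every round with $\widetilde V_{t-1}=0$ and $d_t>0$, where the convention makes the first branch infinite; rounds with $d_t=0$ contribute $0$. In this case I use the branch $Dd_t\le D\sqrt{\widetilde V_t}$. List these rounds as $t_1<\cdots<t_r$. Each satisfies $\widetilde V_{t_j}>2\widetilde V_{t_j-1}\ge2\widetilde V_{t_{j-1}}$, hence $\widetilde V_{t_j}\le2^{-(r-j)}\widetilde V_{[T]}$. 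The geometric sum is then at most $D\sqrt{\widetilde V_{[T]}}\sum_{k\ge0}2^{-k/2}=D\sqrt{\widetilde V_{[T]}}/(1-2^{-1/2})$.
\end{itemize}
Adding the three contributions gives \eqref{eq:opt-adagrad}.

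The main obstacle is the lag between the stepsize, which uses $\widetilde V_{t-1}$, and the current error $d_t$. This is exactly the failure mode identified in Remark~\ref{rem:naive}. The case split above is designed to handle it, and I expect it to work. The remaining care is bookkeeping at the zero-potential rounds. There I must check that the Definition~\ref{def:convention} limits make \eqref{eq:opt-onestep-min} valid with $a_t=0$, so that the telescope begins correctly, and that each round where $\widetilde V$ leaves $0$ is charged to the second, geometric group.
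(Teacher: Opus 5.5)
Your proposal is correct and follows the paper's proof essentially step for step. It uses the same telescope, with $a_t=\sqrt{\widetilde V_{t-1}}/(2\eta)$ nondecreasing and \eqref{eq:midpoint} applied at $\hat x_t\in\X$, to get $a_T(D^2+2DP_T)$. It then splits the stability terms the same way: rounds with $\norm{g_t-h_t}^2\le\widetilde V_{t-1}$ sum to $\sqrt2\,\eta\sqrt{\widetilde V_{[T]}}$, and rounds where $\widetilde V$ more than doubles give a geometric sum via the branch $D\norm{g_t-h_t}$.

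One cosmetic slip: rounds with $g_t=h_t$ fall in your first group, not the second. They contribute $0$ under Definition~\ref{def:convention}, and the paper simply discards them before splitting.
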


\begin{proof}
Apply \eqref{eq:opt-onestep-min} with $u=u_t$ and set $a_t=1/(2\eta_t)=\sqrt{\widetilde V_{t-1}}/(2\eta)$, which is nonnegative and nondecreasing (and $a_1=0$). The first terms are handled exactly as in \eqref{eq:T1}, with $\hat x_t\in\X$ in place of $x_t$ and using \eqref{eq:midpoint}, giving at most $a_T(D^2+2DP_T)\le(\sqrt{\widetilde V_{[T]}}/(2\eta))(D^2+2DP_T)$.
For the remaining terms, discard the rounds with $g_t=h_t$ (they contribute $0$) and split the rest. If $\norm{g_t-h_t}^2\le\widetilde V_{t-1}$ then $\widetilde V_t\le2\widetilde V_{t-1}$, hence $\eta_t\le\sqrt2\,\eta/\sqrt{\widetilde V_t}$ and
\[
\tfrac{\eta_t}2\norm{g_t-h_t}^2
\le\frac\eta{\sqrt2}\cdot\frac{\widetilde V_t-\widetilde V_{t-1}}{\sqrt{\widetilde V_t}}
\le\sqrt2\,\eta\big(\sqrt{\widetilde V_t}-\sqrt{\widetilde V_{t-1}}\big),
\]
so these terms sum to at most $\sqrt2\,\eta\sqrt{\widetilde V_{[T]}}$. Otherwise $\widetilde V_t>2\widetilde V_{t-1}$; listing such rounds as $t_1<\cdots<t_r$ (this includes $t=1$ and every round with $\widetilde V_{t-1}=0<\widetilde V_t$) gives $\widetilde V_{t_j}>2\widetilde V_{t_{j-1}}$ and hence $\widetilde V_{t_j}\le\widetilde V_{[T]}2^{-(r-j)}$, so using the second branch of the minimum, $D\norm{g_{t_j}-h_{t_j}}\le D\sqrt{\widetilde V_{t_j}}$, these terms sum to at most $D\sqrt{\widetilde V_{[T]}}\sum_{k\ge0}2^{-k/2}=D\sqrt{\widetilde V_{[T]}}/(1-2^{-1/2})$.
\end{proof}

\begin{theorem}[Optimistic dynamic regret]
\label{thm:optimistic}
Run Algorithm~\ref{alg:adahedge} over the $M=N_\eta+1$ optimistic AdaGrad experts in \eqref{eq:eta-pool}, using \eqref{eq:opt-omd} with a common predictable hint sequence $(h_t)$. Use the master \eqref{eq:master} reindexed over $j=0,\ldots,N_\eta$. Then pathwise
\begin{equation}
\label{eq:opt-pathwise}
\sum_{t=1}^T\inner{g_t,x_t-u_t}
\le
c_0D\sqrt{\widetilde V_{[T]}\,\Lam}+2D\sqrt{(4+\ln M)V_{[T]}}
\end{equation}
with a universal $c_0\le9$, and, with the realized hint error $\widehat H_T=\sum_{t=1}^T\norm{\nabla\ell_t(x_t)-h_t}^2$,
\begin{equation}
\label{eq:opt-exp}
\begin{split}
\E[\DRegret_T]
\le\;&c_0D\sqrt{\Lam}\Big(\E\big[\sqrt{\widehat H_T}\big]+\sigma T^{1/p}\Big)
\\
&+2D\sqrt{4+\ln M}\big(G\sqrt T+\sigma T^{1/p}\big).
\end{split}
\end{equation}
\end{theorem}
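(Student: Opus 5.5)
The plan mirrors the proof of Theorem~\ref{thm:secondorder}, with Lemma~\ref{lem:opt-adagrad} in place of Lemma~\ref{lem:adagrad-global}.

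\textbf{Choosing the expert.} Put $A=\tfrac12(D^2+2DP_T)$, so that $D^2/2\le A\le D^2\Lam\le D^2T$. Rewrite the bracket in \eqref{eq:opt-adagrad} as
\[
\frac{A}{\eta}+\sqrt2\,\eta+\frac{D}{1-2^{-1/2}}.
\]
Since $\eta^\star=\sqrt A\in[D/2,D\sqrt T]$, the grid \eqref{eq:eta-pool} contains $\eta_{j^\star}$ with $\eta_{j^\star}\le\eta^\star\le2\eta_{j^\star}$. For this expert the first two terms are at most $2\sqrt A+\sqrt2\sqrt A\le(2+\sqrt2)D\sqrt{\Lam}$. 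The constant term is at most $3.42\,D\le3.42\,D\sqrt{\Lam}$, using $\Lam\ge1$. Hence the expert satisfies
\[
\sum_t\inner{g_t,x_t^{(j^\star)}-u_t}\le c_0D\sqrt{\Lam}\sqrt{\widetilde V_{[T]}},
\qquad c_0\le 2+\sqrt2+3.42<9.
\]
The expert and the master both receive the same realized $g_t$. The hint sequence is common to all experts and predictable, so Lemma~\ref{lem:opt-adagrad} applies pathwise to expert $j^\star$.

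\textbf{Adding the master.} The master term is bounded exactly as in \eqref{eq:main-proof-master}, by $2D\sqrt{(4+\ln M)V_{[T]}}$. This uses the centered meta-loss identity and $\norm{m_t}_\infty\le D\norm{g_t}$, where $V_{[T]}$ is the unhinted gradient energy. It needs the expert iterates $x_t^{(j)}\in\X$ to be $\mathcal F_{t-1}$-measurable. That holds because $\eta_t$ depends on $\widetilde V_{t-1}$, $h_t$ is predictable, and Definition~\ref{def:convention} gives measurable limits. Adding the expert and master bounds proves \eqref{eq:opt-pathwise}.

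\textbf{Expectation.} Lemma~\ref{lem:linearization} reduces expected regret to expected linearized regret for the fixed comparator. For the master term, Lemma~\ref{lem:sqrt-bound} on $[T]$ gives $\E\sqrt{V_{[T]}}\le G\sqrt T+\sigma T^{1/p}$. For the expert term, write $g_t-h_t=(\nabla\ell_t(x_t)-h_t)+\epsilon_t$ and apply Minkowski in $\ell_2$ over $t$, then $\norm{\cdot}_{\ell_2}\le\norm{\cdot}_{\ell_p}$:
\[
\sqrt{\widetilde V_{[T]}}\le\sqrt{\widehat H_T}+\Big(\sum_t\norm{\epsilon_t}^p\Big)^{1/p}.
\]
Lemma~\ref{lem:moment} then gives $\E\sqrt{\widetilde V_{[T]}}\le\E\sqrt{\widehat H_T}+\sigma T^{1/p}$, which yields \eqref{eq:opt-exp}.

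\textbf{Main obstacle.} The only step needing care is integrability. $\widehat H_T$ can be large if the hints are unbounded, but the inequality remains valid, possibly with an infinite right side. The linearized regret is integrable because its absolute value is at most $D\sum_t\norm{g_t}$, so exchanging sums and expectations is safe. The constant $c_0$ absorbs the additive $D/(1-2^{-1/2})$ term only because $\Lam\ge1$, so that bound must be applied before the constants are collected.
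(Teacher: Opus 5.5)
Your proposal is correct and follows the paper's proof essentially step for step. You choose a grid stepsize within a factor two of a reference scale, apply Lemma~\ref{lem:opt-adagrad}, add the master bound from \eqref{eq:main-proof-master}, and take expectations via the Minkowski/$\ell_p$ splitting of $\sqrt{\widetilde V_{[T]}}$ together with Lemmas~\ref{lem:moment} and~\ref{lem:sqrt-bound}. The only differences are in bookkeeping constants: you center the grid at $\sqrt A$ rather than at the exact minimizer $2^{-1/4}\sqrt A$, and you absorb $D/(1-2^{-1/2})$ using $\Lam\ge1$ rather than $D\le\sqrt{2A}$. This yields a slightly smaller $c_0=2+\sqrt2+1/(1-2^{-1/2})\approx6.83$ instead of the paper's $8.41$; both are below $9$.
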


\begin{proof}[Proof of Theorem~\ref{thm:optimistic}]
\medskip\noindent\textbf{Step 1: select an optimistic expert and bound its constant.}
Put $A=\tfrac12(D^2+2DP_T)$. The function $A/\eta+\sqrt2\eta$ is minimized at $\eta^\star=2^{-1/4}\sqrt A$. Since $D^2/2\le A\le D^2\Lam\le D^2T$, we have $\eta^\star\in[D/2,D\sqrt T]$. Choose the largest grid point $\eta_{j^\star}\le\eta^\star$ in \eqref{eq:eta-pool}; then $\eta_{j^\star}\ge\eta^\star/2$ and
\[
\frac A{\eta_{j^\star}}+\sqrt2\eta_{j^\star}
\le\frac{2A}{\eta^\star}+\sqrt2\eta^\star
=3\cdot2^{1/4}\sqrt A.
\]
Lemma~\ref{lem:opt-adagrad}, including its convention for zero hint-error potential, therefore gives
\begin{align*}
\sum_{t=1}^T\inner{g_t,x_t^{(j^\star)}-u_t}
&\le\sqrt{\widetilde V_{[T]}}\left(3\cdot2^{1/4}\sqrt A+\frac D{1-2^{-1/2}}\right)
\\
&\le c_0D\sqrt{\widetilde V_{[T]}\Lam},
\end{align*}
where one may take
\[
c_0=3\cdot2^{1/4}+\frac{\sqrt2}{1-2^{-1/2}}<8.41<9.
\]
Here we used $D\le\sqrt2\sqrt A$ and $\sqrt A\le D\sqrt{\Lam}$. In particular, the theorem's stated bound $c_0\le9$ holds. No division by the realized hint-error potential was used in selecting this grid point.

\medskip\noindent\textbf{Step 2: add the nonoptimistic master contribution.}
Predictable hints and the stated update convention make expert predictions measurable before $g_t$ is observed. All expert iterates and the played mixture lie in $\X$, so the same centered meta-loss calculation as in \eqref{eq:main-proof-master} yields
\[
\sum_{t=1}^T\inner{g_t,x_t-x_t^{(j^\star)}}
\le2D\sqrt{(4+\ln M)V_{[T]}}.
\]
Adding this inequality to Step 1 proves \eqref{eq:opt-pathwise}. The master sees $m_{t,i}=\inner{g_t,x_t^{(i)}-x_t}$, not a meta-loss built from $g_t-h_t$. This explains why its term involves $V_{[T]}$, even when the optimistic experts have a small $\widetilde V_{[T]}$.

\medskip\noindent\textbf{Step 3: take expectations of the two potentials separately.}
The identity
$g_t-h_t=(\nabla\ell_t(x_t)-h_t)+\epsilon_t$
and the triangle inequality for the concatenated vectors give
\begin{align*}
\sqrt{\widetilde V_{[T]}}
&\le\sqrt{\widehat H_T}+\left(\sum_{t=1}^T\norm{\epsilon_t}^2\right)^{1/2}
\\
&\le\sqrt{\widehat H_T}+\left(\sum_{t=1}^T\norm{\epsilon_t}^p\right)^{1/p}.
\end{align*}
Taking expectations and using Lemma~\ref{lem:moment} yields
\[
\E[\sqrt{\widetilde V_{[T]}}]\le\E[\sqrt{\widehat H_T}]+\sigma T^{1/p}.
\]
Independently, Lemma~\ref{lem:sqrt-bound} gives
$\E[\sqrt{V_{[T]}}]\le G\sqrt T+\sigma T^{1/p}$.
For a fixed comparator, apply Lemma~\ref{lem:linearization} to the pathwise bound and substitute these two estimates to prove \eqref{eq:opt-exp}. If $\E[\sqrt{\widehat H_T}]$ is infinite, the displayed expected bound is valid but vacuous; a finite expected hint error is needed for a finite refinement.
\end{proof}

\begin{remark}[Bounds with expert-specific potentials]
\label{rem:transport-general}
Theorem~\ref{thm:optimistic} uses Lemma~\ref{lem:transport} in the following mildly generalized form, whose proof is identical: the expert bound \eqref{eq:transport-hyp} may be a function of $(\sqrt{\widetilde V^{(i)}_B})_{B}$ for expert-specific potentials $\widetilde V^{(i)}_B=\sum_{t\in B}\norm{g_t-h^{(i)}_t}^2$ with predictable hints, since the structure of $V_B$ is used only in the final moment step, where the same Minkowski argument applies. The master term always involves $V_{[T]}$ and is unaffected.
\end{remark}

\begin{corollary}[Gradient variation, under smoothness]
\label{cor:variation}
Assume in addition that each $\ell_t$ is $H$-smooth on $\X$, and take the free hint $h_1=0$, $h_t=g_{t-1}$ for $t\ge2$. Let
$\mathcal{V}_T=\sum_{t=2}^T\sup_{x\in\X}\norm{\nabla\ell_t(x)-\nabla\ell_{t-1}(x)}^2$
be the gradient variation, distinct from the oracle-gradient energy $V_{[T]}$. Let $S_T=\sum_{t=2}^T\norm{x_t-x_{t-1}}^2$ measure the stability of the \emph{played mixture} $x_t$. Then
\begin{equation}
\label{eq:variation-app}
\begin{split}
\E[\DRegret_T]
\le\;&c_0D\sqrt{\Lam}\Big(\sqrt{\mathcal{V}_T}+H\,\E\big[\sqrt{S_T}\big]+2\sigma T^{1/p}+G\Big)
\\
&+2D\sqrt{4+\ln M}\big(G\sqrt T+\sigma T^{1/p}\big).
\end{split}
\end{equation}
\end{corollary}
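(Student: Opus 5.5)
The plan is to apply Theorem~\ref{thm:optimistic} directly and bound only $\E[\sqrt{\widehat H_T}]$ for the hint $h_t=g_{t-1}$. The master term in \eqref{eq:opt-exp} already matches the last term of \eqref{eq:variation-app}. It therefore suffices to show
\[
\E\big[\sqrt{\widehat H_T}\big]\le G+\sqrt{\mathcal V_T}+H\,\E\big[\sqrt{S_T}\big]+\sigma T^{1/p}.
\]
Together with the $\sigma T^{1/p}$ already present in \eqref{eq:opt-exp}, this yields $2\sigma T^{1/p}$. The hint is predictable because $g_{t-1}$ is $\mathcal F_{t-1}$-measurable, so the theorem applies.

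First, decompose the hint error. At $t=1$, $h_1=0$ gives $\norm{\nabla\ell_1(x_1)}\le G$. For $t\ge2$, I will write
\[
\nabla\ell_t(x_t)-g_{t-1}=\big(\nabla\ell_t(x_t)-\nabla\ell_{t-1}(x_t)\big)+\big(\nabla\ell_{t-1}(x_t)-\nabla\ell_{t-1}(x_{t-1})\big)-\epsilon_{t-1}.
\]
The first difference is bounded in norm by the $t$-th summand of $\mathcal V_T$ after taking the supremum over $x$. By $H$-smoothness, the second is at most $H\norm{x_t-x_{t-1}}$. The third is the noise at round $t-1$.

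Second, apply the triangle inequality in $\ell_2$ over the concatenated sequence of vectors (Minkowski), as in Step~3 of Theorem~\ref{thm:optimistic}. This gives, pathwise,
\[
\sqrt{\widehat H_T}\le G+\sqrt{\mathcal V_T}+H\sqrt{S_T}+\Big(\sum_{t=1}^{T-1}\norm{\epsilon_t}^2\Big)^{1/2}.
\]
Here the $t=1$ term is separated using $\sqrt{a+b}\le\sqrt a+\sqrt b$. The last term is at most $(\sum_t\norm{\epsilon_t}^p)^{1/p}$ because $p\le2$. Lemma~\ref{lem:moment} then bounds its expectation by $\sigma T^{1/p}$. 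Since $\mathcal V_T$ is deterministic, taking expectations gives the displayed estimate.

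The main obstacle is the smoothness step. The played points $x_t$ are mixtures, and the required quantity is exactly $\nabla\ell_t$ evaluated at the played $x_t$. This is why $S_T$ must measure the stability of the played mixture rather than of the expert iterates, and the decomposition above respects this. I also have to confirm that Lemma~\ref{lem:opt-adagrad} tolerates $h_t=g_{t-1}$ with heavy tails: the hint need not be bounded, and the lemma is pathwise. I further need to check integrability for the linearization argument. Any infinite expectation only makes the bound vacuous, as the theorem already notes.
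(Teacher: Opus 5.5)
Your proposal is correct and follows the paper's proof essentially step for step. You use the same three-term decomposition of $\nabla\ell_t(x_t)-g_{t-1}$ into variation, $H$-smooth stability of the played mixture, and $-\epsilon_{t-1}$; the same Minkowski step in $\ell_2$ with $\norm{\cdot}_{\ell_2}\le\norm{\cdot}_{\ell_p}$ on the noise together with Lemma~\ref{lem:moment}; and the same substitution into \eqref{eq:opt-exp}, where the second $\sigma T^{1/p}$ produces the factor $2$. One small imprecision: it is the squared norm of the variation difference, not the norm, that is bounded by the $t$-th summand of $\mathcal V_T$, and that squared bound is what the Minkowski step needs.
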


\begin{proof}
For $t\ge2$,
\begin{align*}
\nabla\ell_t(x_t)-g_{t-1}
=\;&\underbrace{\big(\nabla\ell_t(x_t)-\nabla\ell_{t-1}(x_t)\big)}_{\text{variation}}
\\
&+\underbrace{\big(\nabla\ell_{t-1}(x_t)-\nabla\ell_{t-1}(x_{t-1})\big)}_{\text{stability},\ \le H\norm{x_t-x_{t-1}}}
\\
&-\epsilon_{t-1},
\end{align*}
so Minkowski in $\ell_2$ and $\norm{\cdot}_{\ell_2}\le\norm{\cdot}_{\ell_p}$ on the noise part give
$\sqrt{\widehat H_T}\le G+\sqrt{\mathcal{V}_T}+H\sqrt{S_T}+(\sum_{t=2}^T\norm{\epsilon_{t-1}}^p)^{1/p}$,
the leading $G$ accounting for $t=1$ where $h_1=0$. Take expectations and apply Lemma~\ref{lem:moment}. Substituting into \eqref{eq:opt-exp} gives two noise contributions: one from the previous-gradient hint and one from the current oracle error, which accounts for the factor $2$ in \eqref{eq:variation-app}.
\end{proof}

\begin{remark}[Comparison with Sword++]
\label{rem:sword-gap}
Corollary~\ref{cor:variation} is an optimistic refinement under smoothness, not a result of the same strength as Corollary~\ref{cor:main}. The relevant Sword++ guarantee also assumes smoothness, so this assumption is not a distinction between the two analyses. Unlike that exact-gradient guarantee \citep{zhao2024adaptivity}, our bound retains the stability contribution $H\E[\sqrt{S_T}]$ and the nonoptimistic master's term involving $V_{[T]}$. Although \eqref{eq:opt-onestep} retains negative quadratic terms, we do not establish a cancellation that removes the stability contribution with the adaptive stepsizes used here. Relative to our main heavy-tailed bound, the optimistic expert contribution also has the larger noise path exponent $1/2$ when $p<2$. An optimistic master could potentially reduce the additional master contribution, but this requires a separate proof.
\end{remark}

\subsection{Merging pools}

\begin{corollary}[One algorithm, the best of the pools]
\label{cor:merge}
Run Algorithm~\ref{alg:adahedge} with the uniform prior over the concatenation of the block-length pool \eqref{eq:pool-L}, the AdaGrad stepsize pool \eqref{eq:eta-pool} and the optimistic pool of Theorem~\ref{thm:optimistic}, a total of $M=O(\log T)$ experts. The resulting algorithm satisfies the bounds of Corollary~\ref{cor:main}, Theorem~\ref{thm:secondorder} and Theorem~\ref{thm:optimistic} simultaneously (hence their minimum), with only the change in $\ln M$. All data-dependent quantities in these bounds are evaluated on the merged algorithm's trajectory, not on separate runs of the component algorithms.
\end{corollary}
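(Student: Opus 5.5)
The plan is to apply the general reduction, Lemma~\ref{lem:transport}, to the concatenated pool once, rather than to rerun the three proofs separately. First I will check that each expert in the merged pool yields a pathwise hypothesis of the form \eqref{eq:transport-hyp} on the merged algorithm's realized gradient sequence. This holds because every expert receives the same oracle sample $g_t$ at the played mixture $x_t$. Each expert's update depends only on $x_0$, its own schedule or scale, the shared gradients, and for optimistic experts the predictable hints $h_t$. The argument has three parts.

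\begin{enumerate}
\item \textbf{Restarted AdaGrad experts.} Lemma~\ref{lem:adagrad-block} gives $\Phi_i(y)=\sqrt2\sum_B(D+P_B)y_B$, which is linear, hence concave and nondecreasing.
\item \textbf{Non-restarted AdaGrad experts.} Lemma~\ref{lem:adagrad-global} gives a single-block $\Phi_j(y)=3D\sqrt{\Lam}\,y$ for the selected scale.
\item \textbf{Optimistic experts.} Lemma~\ref{lem:opt-adagrad} gives the analogous bound in the expert-specific potential $\widetilde V_{[T]}$, which is covered by the generalization in Remark~\ref{rem:transport-general}.
\end{enumerate}

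Next, I will redo the master step. The centered identity $\inner{w_t,m_t}=0$ and $\norm{m_t}_\infty\le D\norm{g_t}$ do not depend on which experts are present. So Lemma~\ref{lem:adahedge} with uniform prior over the enlarged $M$ gives, simultaneously for every index $i$ in the merged pool, $\sum_t\inner{g_t,x_t-x_t^{(i)}}\le2D\sqrt{(4+\ln M)V_{[T]}}$. Adding this to each expert's pathwise bound shows that the merged linearized regret is at most the minimum over all three families of their pathwise expressions plus the single master term. The only change from the individual theorems is the value of $M$, which remains $O(\log T)$ because each pool has $O(\log T)$ members.

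For the expected statements, I will fix the comparator and apply Lemma~\ref{lem:linearization}. For each target bound, I will then use $\E[\min_i F_i]\le F_{i^\star}$ with the deterministic index selected in the corresponding proof, as in Step~1 of the proof of Corollary~\ref{cor:main}. The index $i^\star$ depends only on $P_T$. The moment conversions are then exactly those of Corollary~\ref{cor:main}, Theorem~\ref{thm:secondorder} and Theorem~\ref{thm:optimistic}: Lemma~\ref{lem:sqrt-bound} blockwise, and the Minkowski splits that isolate $\E[\sqrt{Q_T}]$ and $\E[\sqrt{\widehat H_T}]$. Because all three bounds hold for the same algorithm, their minimum holds as well.

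The main obstacle is the remark that the data-dependent quantities $Q_T$, $\widehat H_T$ and $S_T$ are evaluated on the merged trajectory. These quantities involve $\nabla\ell_t(x_t)$ at the merged mixture $x_t$, not at the trajectory a standalone second-order or optimistic algorithm would produce. I will verify that nothing in those proofs used the identity of the mixture beyond predictability and feasibility. Lemmas~\ref{lem:adagrad-global} and \ref{lem:opt-adagrad} are pathwise in arbitrary predictable gradient sequences, and the Minkowski step only decomposes $g_t=\nabla\ell_t(x_t)+\epsilon_t$ at whatever $x_t$ is played. Predictability of the hints, for example $h_t=g_{t-1}$, is likewise unaffected by merging. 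So the substitution goes through verbatim.
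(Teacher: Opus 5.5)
Your proposal is correct and follows the paper's own argument. You apply Lemma~\ref{lem:transport}, in the form of Remark~\ref{rem:transport-general}, to the concatenated pool, using that $\norm{m_t}_\infty\le D\norm{g_t}$ for every family since all iterates lie in $\X$, so each of the three analyses bounds the minimum over experts on the merged trajectory; one notational slip is that the expectation step should read $\E[\min_iF_i]\le\E[F_{i^\star}]$, since $F_{i^\star}$ is still random.
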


\begin{proof}
Lemma~\ref{lem:transport} (in the form of Remark~\ref{rem:transport-general}) bounds the master's regret relative to \emph{every} expert simultaneously by the same pathwise quantity $2D\sqrt{(4+\ln M)V_{[T]}}$, since $\norm{m_t}_\infty\le D\norm{g_t}$ holds for all three families (all iterates lie in $\X$). The decomposition underlying \eqref{eq:transport} holds for each $i$, so the total regret is at most the master term plus the minimum over experts, and each of the three analyses bounds one of those minima.
\end{proof}

\section{Strongly Convex Losses}
\label{apd:sc}

The stepsize $1/(\mu s)$ and the strong-convexity telescope are classical \citep{hazan2016introduction}. We combine them with the bounded-domain heavy-tailed estimate of \citet{liu2025heavy} and account for comparator movement within restart blocks. The resulting upper bound requires parameter-dependent tuning; the proof also identifies the harmonic noise term at $p=2$.

\begin{proposition}[Restarted OGD under strong convexity]
\label{prop:sc}
Suppose each loss is $\mu$-strongly convex, with $\mu>0$ known. Fix an integer block length $1\le L\le T$ and restart OGD every $L$ rounds, using within-block stepsizes $\eta_s=1/(\mu s)$. For every fixed comparator sequence,
\begin{equation}
\label{eq:sc-bound}
\begin{split}
\E[\DRegret_T]\lesssim\;&
\frac{G^2T}{\mu L}(1+\ln L)
+\frac{\sigma^pD^{2-p}T}{\mu^{p-1}L}\sum_{s=1}^L s^{1-p}
+\mu DL P_T.
\end{split}
\end{equation}
For $1<p<2$, the sum is at most $1+(L^{2-p}-1)/(2-p)$; for $p=2$, it is at most $1+\ln L$. In particular, the finite-variance noise term includes a logarithm.
Choosing $L$ with knowledge of the parameters gives
\begin{equation}
\label{eq:sc-optimised}
\begin{split}
\E[\DRegret_T]=\widetilde O_p\Big(&
\frac{G^2}{\mu}+\frac{\sigma^pD^{2-p}T^{2-p}}{\mu^{p-1}}
+G\sqrt{TDP_T}
+\sigma D^{1/p}T^{1/p}P_T^{(p-1)/p}\Big).
\end{split}
\end{equation}
The notation suppresses logarithms in $T$, including those in both static terms when $p=2$.
\end{proposition}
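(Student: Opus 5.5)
The plan is to analyze a single block with a moving comparator, sum over blocks, and then choose $L$. Inside a block, let the local time be $s=1,\dots,m\le L$ with iterate $y_s$. By the projection inequality, $\norm{y_{s+1}-u}^2\le\norm{y_s-u}^2-2\eta_s\inner{g_s,y_s-u}+2\eta_s\inner{g_s,y_s-y_{s+1}}-\norm{y_s-y_{s+1}}^2$. Strong convexity gives $\ell_s(y_s)-\ell_s(u_s)\le\inner{\nabla\ell_s(y_s),y_s-u_s}-\tfrac\mu2\norm{y_s-u_s}^2$. Taking conditional expectations with $u=u_s$ fixed turns $\inner{\nabla\ell_s,y_s-u_s}$ into $\E\inner{g_s,y_s-u_s}$. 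This yields
\[
\E[\ell_s(y_s)-\ell_s(u_s)]\le\E\Big[\tfrac{\norm{y_s-u_s}^2-\norm{y_{s+1}-u_s}^2}{2\eta_s}-\tfrac\mu2\norm{y_s-u_s}^2\Big]+\E[\text{stability}_s].
\]
For the stability term I will reuse the splitting in the proof of Lemma~\ref{lem:liu} verbatim. It gives $\eta_sG^2+C(p)\eta_s^{p-1}\norm{\epsilon_s}^pD^{2-p}$ with $C(p)\le1$, and after conditional expectation and $\eta_s=1/(\mu s)$ this is $G^2/(\mu s)+\sigma^pD^{2-p}\mu^{1-p}s^{1-p}$. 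Summing over $s\le L$ produces $(G^2/\mu)(1+\ln L)$ and $\sigma^pD^{2-p}\mu^{1-p}\sum_s s^{1-p}$ per block. Since there are at most $\lceil T/L\rceil\le 2T/L$ blocks, this gives the first two terms of \eqref{eq:sc-bound}.

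The distance terms are the main obstacle, because the comparator moves. With $a_s=1/(2\eta_s)=\mu s/2$, I regroup as in the proof of Lemma~\ref{lem:adagrad-block}. The coefficient of $\norm{y_s-u_{s-1}}^2$ is $a_s-a_{s-1}-\mu/2=0$, so the strong-convexity term exactly absorbs the increasing weights, and at $s=1$, $a_1-\mu/2=0$ as well. The only leftover is $a_s(\norm{y_s-u_s}^2-\norm{y_s-u_{s-1}}^2)\le a_s\cdot2D\norm{u_s-u_{s-1}}$ by the midpoint inequality \eqref{eq:midpoint}. I need to order the decomposition so that $-\tfrac\mu2\norm{y_s-u_s}^2$ is paired with the right squared distance. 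If the pairing is off, I will instead bound $\norm{y_s-u_s}^2\ge\norm{y_s-u_{s-1}}^2-2D\norm{u_s-u_{s-1}}$, which costs an extra $\mu D\norm{u_s-u_{s-1}}$ and is of the same order. Either way, the movement cost within a block is at most $\mu sD\norm{u_s-u_{s-1}}\le\mu LDP_B$, and summing over blocks gives $\mu DLP_T$. The restart cost per block is $a_1\norm{y_1-u_1}^2-\tfrac\mu2\norm{y_1-u_1}^2=0$, so restarts are free apart from the $\ln L$ effect already counted. The comparator is fixed, and $y_s$, $u_s$ are predictable, so Lemma~\ref{lem:linearization}-style conditioning is legitimate.

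For the bound on the sum, I use the integral comparison $\sum_{s=1}^Ls^{1-p}\le1+\int_1^Lx^{1-p}dx$. This equals $1+(L^{2-p}-1)/(2-p)$ for $p<2$ and $1+\ln L$ for $p=2$. For \eqref{eq:sc-optimised}, I balance each noise source separately against $\mu DLP_T$, up to logarithms and $p$-constants. The choice $L\asymp G\sqrt{T/(\mu^2DP_T)}$ gives $G\sqrt{TDP_T}$. The choice $L\asymp(\sigma^pD^{1-p}T/(\mu^pP_T))^{1/p}$ makes $\sigma^pD^{2-p}T\mu^{1-p}L^{1-p}$ equal to $\mu DLP_T\asymp\sigma D^{1/p}T^{1/p}P_T^{(p-1)/p}$. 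I take $L$ as the minimum of these two choices and $T$, and use $L\ge1$. When the minimum is $T$, that is when $P_T$ is small, the static terms $G^2/\mu$ and $\sigma^pD^{2-p}T^{2-p}\mu^{1-p}$ appear. When it is $1$, the bound is dominated by the static terms or by the trivial ceiling. Taking the minimum only decreases the two decreasing terms up to the sum of the targets, which yields the stated $\widetilde O_p$ bound.
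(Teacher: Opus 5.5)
Your per-block analysis is correct and is the paper's argument. Since $a_s-\frac\mu2=\frac{\mu(s-1)}2=a_{s-1}$, pairing $-\frac\mu2\norm{y_s-u_s}^2$ with $a_s\norm{y_s-u_s}^2$ leaves exactly $a_{s-1}(\norm{y_s-u_s}^2-\norm{y_s-u_{s-1}}^2)\le2Da_{s-1}\norm{u_s-u_{s-1}}$ with zero restart cost. Your fallback pairing is therefore unnecessary, though it also works.

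\textbf{The gap: choice of $L$.} The gap is in the choice of $L$ for \eqref{eq:sc-optimised}. Write the bound as $a/L+bL^{1-p}+cL$, with $a=G^2T(1+\ln T)/\mu$, $b\approx\sigma^pD^{2-p}T\mu^{1-p}$ (up to $p$-constants, and a logarithm at $p=2$) and $c=\mu DP_T$. Put $L_1=\sqrt{a/c}$ and $L_2=(b/c)^{1/p}$. You take $L=\min\{L_1,L_2,T\}$ and say this ``only decreases the two decreasing terms''. It does the opposite: shrinking $L$ \emph{increases} $a/L$ and $bL^{1-p}$, and the minimum controls only $cL$. When $L_1<L_2$, you get $bL_1^{1-p}=b^{1/p}c^{(p-1)/p}(L_2/L_1)^{p-1}$, which the target does not dominate.

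Concretely, take $\sigma=0$ and $P_T=D$. Then $L_2=0$, so your rule forces $L=1$ (restarting every round), and the bound becomes $G^2T/\mu+\mu D^2$. Since $D\le 2G/\mu$, this exceeds the claimed $\widetilde O(G^2/\mu+GD\sqrt T)$ by a factor of order $\sqrt T$. The failure is not confined to $\sigma=0$. For instance, $D=\mu=G=\sigma=1$, $p=3/2$, $P_T=\sqrt T$ gives your choice $L=L_1\approx T^{1/4}$. The noise term is then of order $T^{7/8}$ up to logarithms, while every term of the target is $O(T^{5/6})$ up to logarithms.

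\textbf{The fix.} Use the paper's choice: take the \emph{larger} of $L_1$ and $L_2$, clip to $[1,T]$, and round up. Then $a/L\le\sqrt{ac}$, $bL^{1-p}\le b^{1/p}c^{(p-1)/p}$, and $cL=\max\{\sqrt{ac},\,b^{1/p}c^{(p-1)/p}\}$. These are $G\sqrt{TDP_T}$ and $\sigma D^{1/p}T^{1/p}P_T^{(p-1)/p}$ up to logarithms and $p$-constants. Clipping at $T$ only adds $a/T+bT^{1-p}$, which are the two static terms.

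\textbf{The lower endpoint.} Your treatment of $L=1$ (``dominated by the static terms or by the trivial ceiling'') also needs an argument you have not supplied. With the max rule, $L=1$ occurs only when $a<c$, and you must still bound $c$ by the target. The missing fact is $\mu D\le2G$. Adding the two strong-convexity inequalities at points $x,y\in\X$ gives $\mu\norm{x-y}^2\le\inner{\nabla\ell_t(x)-\nabla\ell_t(y),x-y}\le2G\norm{x-y}$. This yields $c\le\mu D^2T\le4G^2T/\mu\le4a$, hence $c\le2\sqrt{ac}$, so rounding to $L=1$ costs only constants. Equivalently, it forces $P_T>DT/4$, which is what makes the trivial ceiling $GDT\le2G\sqrt{TDP_T}$ usable.
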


\begin{proof}
Consider a block of actual length $m\le L$. Strong convexity and conditional unbiasedness give
\[
\E[\ell_s(x_s)-\ell_s(u_s)]
\le\E\left[\inner{g_s,x_s-u_s}-\frac\mu2\norm{x_s-u_s}^2\right].
\]
Apply Lemma~\ref{lem:liu} with $\eta_s=1/(\mu s)$ and put $A_s=\mu(s-1)/2$. The potential terms become
$A_s\norm{x_s-u_s}^2-A_{s+1}\norm{x_{s+1}-u_s}^2$.
Since $A_1=0$, summation and \eqref{eq:midpoint} bound these terms by
\[
2D\sum_{s=2}^m A_s\norm{u_s-u_{s-1}}\le\mu DL P_B,
\]
where $P_B$ counts only movement internal to the block. The remaining terms are at most
\[
\frac{G^2}{\mu}\sum_{s=1}^m\frac1s
+\frac{C(p)\sigma^pD^{2-p}}{\mu^{p-1}}\sum_{s=1}^m s^{1-p}.
\]
There are $\lceil T/L\rceil\le2T/L$ blocks, $C(p)\le1$, and $\sum_B P_B\le P_T$, proving \eqref{eq:sc-bound}. The estimates of the two sums follow by integral comparison.

For the optimized expression, first let $P_T=0$ and take $L=T$. For $P_T>0$, absorb the logarithms and the factor depending on $p$ into coefficients $a$ and $b$, so that the bound has the form
$a/L+bL^{1-p}+cL$, with
\[
a=\frac{G^2T(1+\ln T)}\mu,\qquad
c=\mu DP_T,
\]
and $b=C_p\sigma^pD^{2-p}T/\mu^{p-1}$, where $C_p=1+1/(2-p)$ for $p<2$ and $C_2=1+\ln T$.
Choose the larger of $\sqrt{a/c}$ and $(b/c)^{1/p}$, clip to $[1,T]$, and round upward. Before clipping, $a/L\le\sqrt{ac}$, $bL^{1-p}\le b^{1/p}c^{(p-1)/p}$, and $cL$ is bounded by the sum of these two quantities. Clipping at $T$ adds at most constant multiples of $a/T+bT^{1-p}$. At the lower endpoint, strong convexity and the bounded subgradients imply $\mu D\le2G$, so $c\le4a$ and rounding to $L=1$ changes only constants. Substitution gives \eqref{eq:sc-optimised}. This choice depends on $G,\sigma,p,\mu,P_T$, so it is not parameter-free.
\end{proof}

In this upper bound, the displayed path-dependent terms have the same polynomial dependence as in the general convex analysis, while the static terms reflect strong convexity. This does not prove that strong convexity cannot improve dynamic regret: no matching strongly convex lower bound is established here. Moreover, the generic master bound contains a term of order $D(G\sqrt T+\sigma T^{1/p})$, up to logarithms, so applying that bound directly does not preserve the improved static strongly convex rate. A sharper master analysis would be required.
\end{document}